%% file: main.tex
\documentclass{article} %
\usepackage{iclr2027_conference,times}

\input{math_commands.tex}

\usepackage{amsmath}
\usepackage{thmtools}
\usepackage{amsthm}      %
\usepackage{thm-restate}
\usepackage{url}
\usepackage{derivative}
\usepackage{subcaption}
\usepackage{float}
\usepackage{adjustbox}
\usepackage{booktabs}
\usepackage{multicol}
\usepackage{rotating}

\theoremstyle{plain}
\newtheorem{theorem}{Theorem}[section]

\newtheorem{lemma}[theorem]{Lemma}

\theoremstyle{definition}

\newtheorem{assumption}[theorem]{Assumption}
\theoremstyle{remark}

\usepackage{hyperref}    %
\usepackage{cleveref}
\Crefformat{equation}{Equation~#2#1#3}

\makeatletter
\AddToHook{cmd/appendix/before}{\def\cref@section@alias{appendix}\def\cref@subsection@alias{appendix}}
\makeatother

\newcommand{\modelname}{\underline{SP}arse \underline{E}quivalent \underline{E}quation \underline{D}iscovery \underline{A}uto\underline{E}ncoder}
\newcommand{\modelnameshort}{SPEED-AE}

\title{Identifying ODEs from Unstructured Data with Causal Representation Learning}

\iclrfinalcopy

\author{Alessandro Trenta\thanks{Corresponding Author. Email: \texttt{alessandro.trenta@phd.unipi.it}} \\
Department of Computer Science\\
University of Pisa\\
\texttt{alessandro.trenta@phd.unipi.it} \\
\And
Riccardo Massidda \\
Department of Computer Science\\
University of Pisa\\
\texttt{riccardo.massidda@di.unipi.it} \\
\AND
Davide Bacciu \\
Department of Computer Science\\
University of Pisa\\
\texttt{davide.bacciu@unipi.it} \\
\And
Sara Magliacane \\
Saarland Informatics Campus\\
Saarland University \& University of Amsterdam\\
\texttt{sara.magliacane@gmail.com} \\
}

\begin{document}

\maketitle

\begin{abstract}
    We study the problem of recovering the governing ODE of a dynamical system from unstructured, high-dimensional observations such as images. Existing methods for ODE discovery typically assume direct measurements of the variables, or do not provide theoretical guarantees on the learned variables and equations. While Causal Representation Learning (CRL) methods provide guarantees on identifying variables from high-dimensional observations up to component-wise diffeomorphisms, we show that in general these variables cannot be used directly as input to equation discovery methods, which typically assume that the variables will lead to sparse equations.
    So we introduce SParse Equivalent Equation Discovery AutoEncoder (SPEED-AE), a framework that combines a pretrained CRL method with a component-wise autoencoder that learns transformations of variables that are amenable to sparse ODE discovery. We show that for polynomial ODEs, this additional step allows us to restrict the identifiability of each variable from polynomial to monomial diffeomorphisms. Experiments on Lotka-Volterra, Lorenz, and a two-pendulum system show that SPEED-AE improves on the disentanglement of the CRL methods and that it recovers ODEs that are closest to the ground truth, while achieving state-of-the-art forecasting performance.
\end{abstract}

\input{sections/introduction}

\input{sections/background}
\input{sections/theory}

\input{sections/method}

\input{sections/related}

\input{sections/experiments}

\input{sections/conclusion}

\bibliography{SPEED-AE}
\bibliographystyle{iclr2027_conference}
\clearpage
\appendix

\input{sections/appendix.tex}

\end{document}

%% file: math_commands.tex
\usepackage{amsmath,amsfonts,bm}

\def\eqref#1{equation~\ref{#1}}

\def\1{\bm{1}}

\def\rvu{{\mathbf{i}}}

\def\rvu{{\mathbf{u}}}

\def\rvx{{\mathbf{x}}}

\def\rvz{{\mathbf{z}}}

\def\vf{{\bm{f}}}

\def\vu{{\bm{u}}}

\DeclareMathAlphabet{\mathsfit}{\encodingdefault}{\sfdefault}{m}{sl}
\SetMathAlphabet{\mathsfit}{bold}{\encodingdefault}{\sfdefault}{bx}{n}

\newcommand{\R}{\mathbb{R}}

\newcommand{\real}{\mathbb{R}}

\newcommand{\zh}{\hat{z}}
\newcommand{\zt}{\tilde{z}}

\newcommand{\rvzh}{\hat{\rvz}}
\newcommand{\rvzt}{\tilde{\rvz}}
\newcommand{\enc}{\psi^\text{enc}}
\newcommand{\dec}{\psi^\text{dec}}
\newcommand{\sindyenc}{\psi^{\text{enc}}_{\text{S}}}
\newcommand{\sindydec}{\psi^{\text{dec}}_{\text{S}}}
\newcommand{\mnnenc}{\psi^{\text{enc}}_{\text{M}}}
\newcommand{\mnndec}{\psi^{\text{dec}}_{\text{M}}}
\newcommand{\crlenc}{\psi^{\text{enc}}_{\text{CRL}}}
\newcommand{\crldec}{\psi^{\text[dec]}_{\text{CRL}}}
\newcommand{\speedenc}{\phi^{\text{enc}}}
\newcommand{\speeddec}{\phi^{\text{dec}}}
\newcommand{\crldiffeo}{\hat{h}}

\newcommand{\zcrl}{\zh}
\newcommand{\zspeed}{\zt}
\newcommand{\vzcrl}{\rvzh}
\newcommand{\vzspeed}{\rvzt}

%% file: sections/introduction.tex
\section{Introduction}\label{sec:introduction}

Machine learning is increasingly used for scientific discovery, especially for dynamical systems and Ordinary Differential Equations (ODEs). Much of this work targets prediction and forecasting aspects of physical systems \citep{Chen2018NeuralOrdinaryDifferential, Hochreiter1997LongShortTermMemory, Vaswani2017AttentionAllYou}, possibly using ODEs as a tool to model the unknown dynamics of the system \citep{Chen2018NeuralOrdinaryDifferential, Dupont2019AugmentedNeuralODEs, Heinonen2018LearningUnknownODE, Rubanova2019LatentOrdinaryDifferential}. 
Equation discovery, i.e., learning ODE systems from data, has been extensively studied when direct access to the variables is available, both from a theoretical \citep{Scholl2024SymbolicRecoveryDifferential, Yao2024MarryingCausalRepresentation} and a practical point of view \citep{Brunton2016DiscoveringGoverningEquations, Kaheman2020SINDyPIRobustAlgorithm, Chen2024ScalableMechanisticNeural}. Most results focus on identifiability in linear ODEs from single trajectories \citep{Stanhope2014IdentifiabilityLinearLinearinParameters, Qiu2022IdentifiabilityAnalysisLinear}, sparse coefficients \citep{Casolo2025IdentifiabilityChallengesSparse}, and discrete observations \citep{Wang2024IdentifiabilityAsymptoticsLearning}. By contrast, theoretical identifiability in non-linear ODEs remains limited to known functional forms \citep{Grewal1976IdentifiabilityLinearNonlinear, Stanhope2014IdentifiabilityLinearLinearinParameters} or single initial conditions \citep{Yao2024MarryingCausalRepresentation}.

Few works address settings where variables are not directly observed, and only high-dimensional measurements (e.g., images or videos) are available. Existing approaches either learn equations on latent variables without guarantees that these variables correspond to the ground-truth  ~\citep{Champion2019DatadrivenDiscoveryCoordinates,Auzina2023ModulatedNeuralODEs}, or assume the underlying functional form of the ODE is known~\citep{Linial2021GenerativeODEModeling} and focus on parameter identification. 

In this work, we provide theoretical guarantees for identifying the true variables from unstructured data up to simple indeterminacies, on which we then learn ODE equations that correspond up to similar indeterminancies to the ground truth ones.
We build on Causal Representation Learning (CRL) \citep{Scholkopf2021CausalRepresentationLearning} methods, which provide guarantees on identifying variables from high-dimensional observations up to permutations and component-wise diffeomorphisms. 
We first show that the variables learned by CRL methods cannot in general be used directly as input to common equation discovery methods, e.g., SiNDy \citep{Brunton2016DiscoveringGoverningEquations}, which typically assume that the variables will lead to sparse equations. 
To tackle this problem, we introduce \modelnameshort{} (\modelname{}) a framework that combines a pretrained CRL method with a component-wise autoencoder that learns transformations of variables that are amenable to sparse ODE discovery. \Cref{fig:speed-ae} shows our \modelnameshort{} works: the frozen CRL model, which identifies the latents up to permutation and component-wise diffeomorphism, is followed by a component-wise AE. In the resulting latent space, the ODE is built from a predefined library, with sparsity imposed to recover an equation close to the true one. 

We also investigate the identifiability of these learned variables and their equations, showing that, in general, we cannot have any guarantees on their relation to the ground truth ones, even if we assume sparsity in the equations or disentangled representations. In Lemma~\ref{thm:identifiability_explicit} we show that for polynomial ODEs, considering sparsity in the learned equations allows us to restrict the identifiability of each variable from component-wise polynomial to monomial diffeomorphisms. 
Experiments on Lotka-Volterra, Lorenz, and a two-pendulum system show that SPEED-AE improves on the disentanglement of the CRL methods and that it recovers ODEs that are closest to the ground truth, while achieving state-of-the-art forecasting performance.

\begin{figure}
    \centering
    \includegraphics[width=0.95\linewidth]{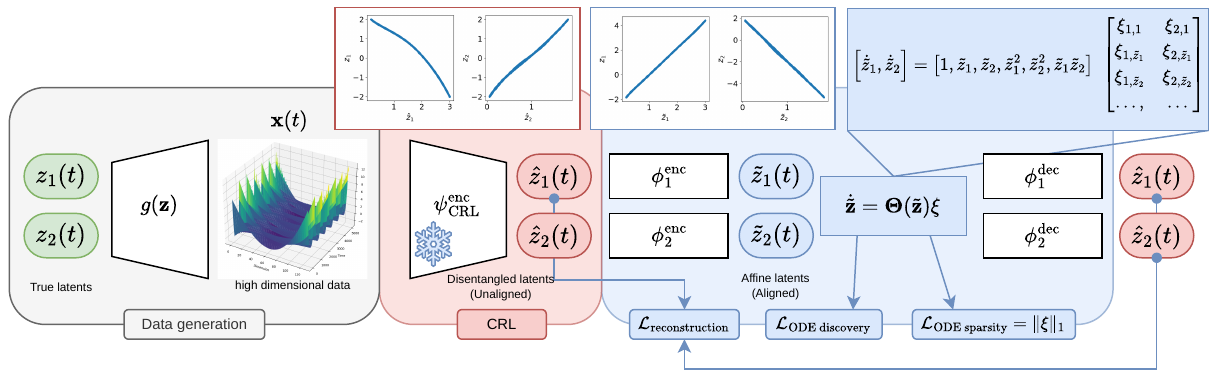}
    \caption{%
    \modelnameshort{} pipeline: we observe high-dimensional data from an invertible mapping $\rvx(t)=g(\rvz(t))$ of the true latents $\rvz(t)$. First, \modelnameshort{} employs a pretrained CRL model to obtain low-dimensional latents $\rvzh(t)$ (\textcolor{red!60!white}{red} block). We assume that the recovered latents correspond to the ground-truth $\rvz$ up to a permutation and component-wise diffeomorphism $z_i=\crldiffeo_i(\zcrl_{\pi(i)})$. For simplicity, in the figure we assume that the permutation is the identity function.
    \modelnameshort{} learns a set of component-wise autoencoders $(\speedenc_i, \speeddec_i)$ and an ODE by combining the functions in a library $\Theta(\vzspeed)$, imposing sparsity of this representation (\textcolor{blue!60!white}{blue} block). The output latents $\vzspeed(t)$ identify the ground-truth up to permutation and monomial and, in most cases, linear transformation.}
    \label{fig:speed-ae}
\end{figure}

%% file: sections/background.tex
\section{Background}\label{sec:background}

\paragraph{Ordinary Differential Equations (ODEs).}
Dynamical systems are commonly represented by ODEs \citep{Strogatz2019NonlinearDynamicsChaos}, which model the evolution of their state $\rvz$ through an equation of the form
\begin{equation}\label{eqn:ode}
    \dot{\rvz}(t) = \odv{\rvz}{t} = \vf(\rvz(t)), \qquad \rvz(0) = \rvz_0,
\end{equation}
where $\rvz_0$ is the initial state. We focus on \emph{autonomous} ODEs, for which $\vf$ does not explicitly depend on $t$, so that $\dot{\rvz}(t)$ solely depends on the current state $\rvz(t)$. If $\vf$ is Lipschitz continuous with respect to $\rvz$, the ODE admits a unique solution.
Here, we address the recovery of the underlying ODE of a low-dimensional system $\rvz(t)\in\mathcal{Z}\subseteq\real^d$ having access only to high-dimensional trajectories of unstructured data $\rvx(t) = g(\rvz(t))$ with $\rvx\in \mathcal{X} \subseteq \real^D$ and $D\gg d$, where $g: \mathcal{Z} \to \mathcal{X} $ is an unknown invertible mixing function, as assumed in Causal Representation Learning \citep{Scholkopf2021CausalRepresentationLearning}.

\paragraph{Equation Discovery.}
ODE discovery aims to recover the symbolic expression of a dynamical system from measurements of the state variables $\rvz$ along one or more trajectories. SINDy \citep{Brunton2016DiscoveringGoverningEquations} applies a library $\Theta$ of $L$ candidate functions, usually polynomials and some trigonometric functions, to the state variables $\rvz$ and seeks a sparse coefficient matrix $\Xi$ by minimizing
\begin{equation}\label{eqn:sindy_loss}
        \mathcal{L}_{\text{SINDy}}(\Xi) =
        \left\|\dot{\rvz} - \Xi^\top\Theta(\rvz)\right\|_2^2 + \beta_1\left\|\Xi\right\|_{0}.
\end{equation}
Here, $\Theta(\rvz)\in\real^{L}$ contains the candidate functions evaluated at $\rvz$, while $\Xi\in\real^{L \times d}$ selects them through its coefficients; the $L_0$ term promotes sparsity.
In practice, derivatives $\dot{\rvz}$ are unavailable and must be estimated from data, e.g.\ via finite differences. SINDyAE extends SINDy to recover ODEs on hidden state variables $\rvz\in\real^d$ from high-dimensional observations $\rvx\in\real^D$~\citep{Champion2019DatadrivenDiscoveryCoordinates}, jointly learning an encoder $\enc\colon\real^D\to\real^d$ and a decoder $\dec\colon\real^d\to\real^D$ by minimizing
\begin{align}
\mathcal{L}_{\text{SINDyAE}}(\Xi,\enc,\dec)
&= \underbrace{
    \left\| \rvx - \dec(\enc(\rvx)) \right\|_2^2
    }_{\text{Reconstruction loss}}
+ \beta_{\rvz} \underbrace{\left\| [\nabla\enc(\rvx)]\,\dot{\rvx}
  - \Xi^\top\Theta\!\left(\enc(\rvx)\right) \right\|_2^2}_{\text{SINDy loss in }\dot{\rvz}} \nonumber \\
&\quad + \beta_{\rvx} \underbrace{\left\| \dot{\rvx}
  - [\nabla\dec\!\left(\enc(\rvx)\right)]\,
    \Xi^\top\Theta\!\left(\enc(\rvx)\right) \right\|_2^2}_{\text{SINDy loss in }\dot{\rvx}}
+ \beta_1 \underbrace{\left\| \Xi \right\|_1}_{\text{sparsity}}.
\end{align}
Here $[\nabla\enc(\rvx)]\in\real^{d\times D}$ and $[\nabla\dec(\enc(\rvx))]\in\real^{D \times d}$ are the encoder and decoder Jacobians evaluated at $\rvx$ and $\enc(\rvx)$, respectively;
$\{\beta_{\rvx}, \beta_{\rvz}, \beta_1\}$ weigh the different loss terms. The $L_1$ penalty, coupled with sequential thresholding, is used as a smooth proxy of the $L_0$ optimization.
\citet{Pervez2024MechanisticNeuralNetworks} and 
\citet{Chen2024ScalableMechanisticNeural} propose a related approach that replaces derivative fitting by a Mechanistic Neural Network (MNN) that learns the coefficients $\Xi$ and efficiently simulates trajectories. In our experiments, we compare to SINDyAE and MNNAE, which we further discuss in \Cref{app:baselines}.

\paragraph{Causal Representation Learning (CRL).}
CRL methods \citep{Scholkopf2021CausalRepresentationLearning} can provably recover the ground truth latent variables $\rvz\in\real^d$ from observations $\rvx\in\real^D$ up to certain specific classes of transformations, under different sets of assumptions or types of data. A CRL method typically learns a disentangled encoder $\enc\colon\real^D\to\real^d$ where the latent space $\hat{\rvz}=\enc(\rvx)$ relates to the ground-truth variables $\rvz$ through a diffeomorphism $h\colon\real^d\to\real^d$. We focus on methods guaranteeing \emph{identifiability up to permutation and component-wise diffeomorphisms}, i.e., $z_i = h_i(\hat{z}_{\pi(i)})$ for each $i\in\{1,\ldots,d\}$ and permutation $\pi$. This type of identifiability guarantee on the learned variables is often also called \emph{disentanglement}. In our experiments, we consider methods that use temporal data and interventions or actions to identify these variables, in particular CITRIS \citep{Lippe2022CITRISCausalIdentifiability} and DMSVAE \citep{Lachapelle2022DisentanglementMechanismSparsity, Lachapelle2026NonparametricPartialDisentanglement}, which we also discuss in App.~\ref{app:crl_details}, but in general our method is agnostic to the specific CRL method with the same identifiability class.

%% file: sections/theory.tex
\section{Identifiability Analysis for Polynomial ODEs}%
\label{sec:identifiability}

\begin{figure}
    \centering
    \includegraphics[width=0.85\linewidth]{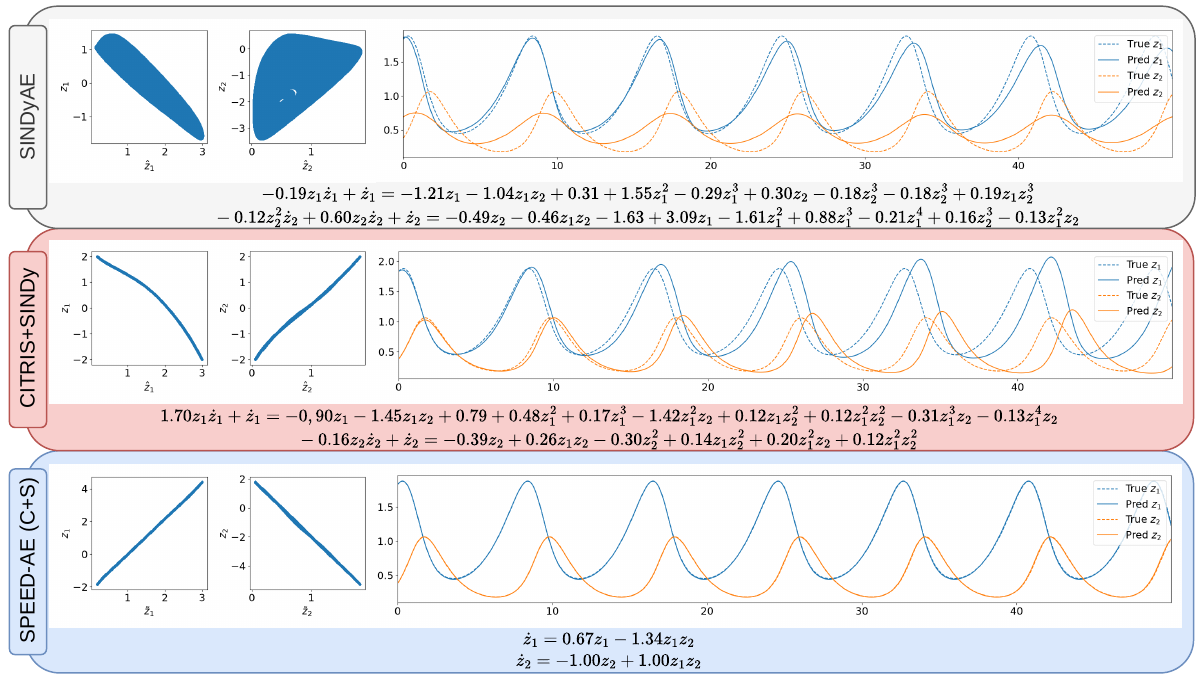}
    \caption{Comparison between SINDyAE, SINDy applied to CRL latents (using CITRIS), and \modelnameshort{} (with CITRIS as CRL method and SINDy as equation discovery method) on the Lotka-Volterra dataset ($\dot{z_1} = 0.66 z_1 - 1.33 z_1z_2$, $\dot{z_2} = -z_2 + z_1z_2$). For each model, the left box reports scatter plots of the true versus learned latents. We then compare a test trajectory with the one predicted from only the initial condition and the discovered equation, after converting both to the true latent space, following the evaluation protocol in Section \ref{sec:experiments}. \modelnameshort{} is the only method that recovers a sparse equation and achieves strong long-term prediction performance.}
    \label{fig:model_comparison}
\end{figure}

We study the identifiability of systems governed by a latent ODE $\dot{\rvz} = \vf(\rvz)$ from high-dimensional observations $\rvx(t) = g(\rvz(t))$, where $g$ is an invertible mixing function. Given a smooth invertible encoder $\tilde{\rvz}=\enc(\rvx)$, the map $h$ from the latent space $\tilde{\rvz}$ to the ground-truth variables $\rvz$ is a diffeomorphism. Consequently, as we derive in Prop.~\ref{proposition:variable_change}, the encoded latent variables $\tilde{\rvz}$ are also governed by an ODE $\dot{\tilde{\rvz}}=\tilde{\vf}(\tilde{\rvz})=[\nabla h(\tilde{\rvz})]^{-1}\vf(h(\tilde{\rvz}))$.
Formally, whenever the variables of the two ODEs are related by a diffeomorphism, we say that the two ODEs are \textit{conjugate}~\citep{Sideris2013OrdinaryDifferentialEquations}.

By jointly learning an autoencoder $\dec\circ\enc$ and an ODE $\tilde{\vf}$ over the latent space $\tilde{\rvz}=\enc(\rvz)$, SINDyAE~\citep{Champion2019DatadrivenDiscoveryCoordinates} effectively recovers a diffeomorphically conjugate ODE. However, despite enforcing sparsity in $\tilde{\vf}$, SINDyAE does not provide any guarantee on the recovery of the form of the true ODE or the ground-truth variables. As we show in a toy example in App.~\ref{proof:sindyae}, even for polynomial ODEs, the SINDyAE loss cannot guarantee that the learned variables correspond one-to-one to the state variables. Intuitively, this suggests that the equations learned on these
variables might in general be of a more complicated form than the ground truth ones.

CRL methods learn a disentangled representation $\vzcrl$ that identifies the true latents $\rvz$ up to permutation $\pi$ and component-wise diffeomorphism $z_i = h_i(\zcrl_{\pi(i)})$. Yet, even if we used an equation discovery method, e.g., SINDy \citep{Brunton2016DiscoveringGoverningEquations}, directly on these variables and the learned ODE $\hat{\vf}$, this could potentially still have an arbitrary form. In particular, given the transformations $\{{h}_i\}_{i=1}^d$ induced by a perfectly disentangled encoder $\crlenc$, we can always write a conjugate ODE as
\begin{equation}\label{eqn:single_ode_change}
    \dot{\hat{z}}_{\pi(i)} = \hat{f}_{\pi(i)}(\hat{\rvz}) =
    [h'_i(\hat{z}_{\pi(i)})]^{-1}f_i({h}_1(\hat{z}_{\pi(1)}), \ldots, {h}_d(\hat{z}_{\pi(d)}))
    \quad
    \text{for } i=1,\ldots,d,
\end{equation}
where $h'_i(\hat{z}_{\pi(i)})$ are the the component-wise derivatives. Thus, even with perfect disentanglement, we cannot in general guarantee that the learned ODE will be similar to the ground truth one. We therefore restrict ourselves to polynomial ODEs and assume that these component-wise diffeomorphisms are polynomial. In particular, we show that if the learned representations are disentangled and governed by a sparse polynomial ODE, we can identify the true latents up to component-wise monomial transformations. For tractability reasons, we introduce two assumptions (see Ass.~\ref{app:proofs/identifiability_explicit}): after the change of variables, powers of polynomials and their full expansion do not undergo cancellations of terms (as per our remarks in App.~\ref{app:proofs/identifiability_explicit}, this happens rarely).

\begin{restatable}{lemma}{identifiabilitythm}\label{thm:identifiability_explicit}
    Let $\dot{\rvz}=\vf(\rvz)$ and $\dot{\tilde{\rvz}}=\tilde{\vf}(\tilde{\rvz})$ be two polynomial ODEs 
    such that for every $i \in [d]$ 
    $z_i=h_i(\zspeed_{\pi(i)})$ for an invertible polynomial $h_i$ and permutation $\pi$. 
    Let $q_i(z_i)$ be the highest degree polynomial such that $f_i(\rvz) = q_i(z_i)r_i(\rvz)$, where $r_i(\rvz)$ is the remainder polynomial and $N(r_i)$ be the number of monomial terms in $r_i(\rvz)$. Similarly, we define $\tilde{r}_{\pi(i)}(\vzspeed)$ and $N(\tilde{r}_{\pi(i)})$ for $\tilde{f}_{\pi(i)}$. 
    Under Ass.~\ref{ass:non_cancellation_power} and \ref{ass:non_cancellation_expansion}, $N(\tilde{r}_{\pi(i)})\geq N(r_i)$. If $N(\tilde{r}_{\pi(i)})= N(r_i)$, then $h_i$ is a monomial transformation, i.e., $z_i=h_i(\zspeed_{\pi(i)}) = a_i\zspeed_{\pi(i)}^{p_i}$ with $p_i \in \mathbb{N}^+$. Moreover, if $q_i$ is constant, then $p_i =1$, and $h_i$ is affine.

\end{restatable}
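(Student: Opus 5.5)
We plan to work directly with the conjugacy formula \eqref{eqn:single_ode_change}, written in the $\tilde{\rvz}$ variables:
\[
\tilde f_{\pi(i)}(\tilde{\rvz}) = [h_i'(\tilde z_{\pi(i)})]^{-1} f_i\bigl(h_1(\tilde z_{\pi(1)}),\ldots,h_d(\tilde z_{\pi(d)})\bigr).
\]
Substituting $f_i = q_i r_i$ gives
\[
\tilde f_{\pi(i)} = \frac{q_i(h_i(\tilde z_{\pi(i)}))}{h_i'(\tilde z_{\pi(i)})}\; r_i(h(\tilde{\rvz})).
\]

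The first step is to identify $\tilde r_{\pi(i)}$, that is, to separate the univariate factor in $\tilde z_{\pi(i)}$ from the rest. Since $\tilde f_{\pi(i)}$ is assumed polynomial, the quotient $(q_i\circ h_i)/h_i'$ must combine with $r_i\circ h$ into a polynomial.

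We would argue that $r_i(h(\tilde{\rvz}))$ has no nonconstant univariate factor in $\tilde z_{\pi(i)}$ beyond what comes from $q_i$. This uses the maximality of $q_i$ together with Ass.~\ref{ass:non_cancellation_expansion}: the expansion of $r_i\circ h$ does not cancel terms, so its factorization structure in $\tilde z_{\pi(i)}$ is inherited from $r_i$. Consequently, $h_i'$ must divide $q_i\circ h_i$. The maximal univariate factor $\tilde q_{\pi(i)}$ of $\tilde f_{\pi(i)}$ then absorbs $(q_i\circ h_i)/h_i'$, up to any univariate factor that $r_i\circ h$ might carry. The remainder $\tilde r_{\pi(i)}$ is essentially $r_i\circ h$, or $r_i\circ h$ divided by a univariate factor already accounted for.

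The second step is counting monomials. Write $r_i = \sum_{\alpha} c_\alpha \rvz^\alpha$ with $N(r_i)$ nonzero terms. Each term maps to $c_\alpha\prod_j h_j(\tilde z_{\pi(j)})^{\alpha_j}$. By Ass.~\ref{ass:non_cancellation_power}, each power $h_j^{\alpha_j}$ has at least as many monomials as $h_j$, and it has exactly one monomial only if $h_j$ is itself a monomial. By Ass.~\ref{ass:non_cancellation_expansion}, the products and the sum over $\alpha$ do not cancel.

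Distinct exponent vectors $\alpha$ contribute leading monomials with distinct exponent vectors, because the map $\alpha\mapsto(\deg h_j\cdot\alpha_j)_j$ is injective. Hence $N(\tilde r_{\pi(i)})\ge N(r_i)$. Equality forces every product $\prod_j h_j^{\alpha_j}$ to be a single monomial. In particular, for the coordinate $\pi(i)$, either $h_i$ is a monomial $a_i\tilde z^{p_i}$, or $z_i$ does not appear in $r_i$.

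This last case is the main obstacle. We would handle it through the division constraint $h_i' \mid q_i\circ h_i$. Invertibility of $h_i$ as a real polynomial diffeomorphism forces $h_i'$ to have no real roots unless $h_i$ is a monomial of odd degree, possibly shifted. A nonconstant $h_i'$ dividing $q_i\circ h_i$ would create extra terms in $\tilde r_{\pi(i)}$ or contradict maximality of $q_i$, so we would argue that $h_i$ is monomial in this case as well. We are least sure of this step. If it fails, we would instead restrict the equality claim to variables $z_i$ that actually appear in $r_i$, or read the equality case as holding uniformly over all $i$, so that each $h_i$ appears in some $r_k$.

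Finally, suppose $q_i$ is constant and $h_i = a_i\tilde z^{p_i}$. Then $\tilde f_{\pi(i)} = q_i\,(a_ip_i)^{-1}\tilde z_{\pi(i)}^{1-p_i}\, r_i(h(\tilde{\rvz}))$. Whenever $r_i\circ h$ has a term not divisible by $\tilde z_{\pi(i)}$, this expression is a polynomial only if $p_i=1$. Because $q_i$ is maximal and constant, $r_i$ has no factor $z_i$, so such a term exists. Hence $p_i=1$ and $h_i$ is linear, in particular affine.
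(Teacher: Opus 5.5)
Your skeleton matches the paper's: use the conjugacy formula, split $\tilde f_{\pi(i)}$ into $(q_i\circ h_i)/h_i'$ times $r_i\circ h$, then count monomials via the distinct representatives $\prod_j\zt_{\pi(j)}^{p_j\alpha_j}$. Once $\tilde r_{\pi(i)}=r_i\circ h$ is established, the inequality $N(\tilde r_{\pi(i)})\ge N(r_i)$ follows exactly as you say.

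That identity, however, is justified with the wrong tool. Ass.~\ref{ass:non_cancellation_expansion} only says that collected coefficients are nonzero; it carries no information about divisibility. What works, and what the paper uses, is a root argument. If $\zt_{\pi(i)}-\zeta$ divided $r_i\circ h$, then $r_i\circ h$ would vanish identically on $\{\zt_{\pi(i)}=\zeta\}$, so $z_i-h_i(\zeta)$ would divide $r_i$, contradicting the maximality of $q_i$. Applied to the roots of $h_i'$, the same argument yields $h_i'\mid q_i\circ h_i$. You cannot leave the hedge ``or $r_i\circ h$ divided by a univariate factor'' open: the support of a quotient is not controlled by that of the dividend, so the count would collapse.

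The real gap is the equality case, and as stated it cannot be closed. Your step ``equality forces every product $\prod_j h_j^{\alpha_j}$ to be a single monomial'' does not follow. Ass.~\ref{ass:non_cancellation_expansion} stops coefficients from vanishing, but it does not stop lower-order monomials of one product from \emph{merging} with the representative of another multi-index. Take $\vf(\rvz)=(1+z_2,\ z_1)$, $\pi=\mathrm{id}$, $h_1(\zt_1)=\zt_1$, and $h_2(\zt_2)=\zt_2+c$ with $c\notin\{0,-1\}$. Both assumptions hold, $\tilde r_1=(1+c)+\zt_2$ and $\tilde r_2=\zt_1$, so $N(\tilde r_k)=N(r_k)$ for $k=1,2$, yet $h_2$ is not of the form $a\zt^{p}$.

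The same example at index $i=2$ shows that your ``main obstacle'' is a genuine counterexample, not a missing trick. The constraint $h_i'\mid q_i\circ h_i$ only shapes $\tilde q_{\pi(i)}$, never $\tilde r_{\pi(i)}$, so nothing like what you sketch can force $h_i$ to be monomial. For instance, with $\vf=(1,\ z_1z_2)$ and $h_1=\mathrm{id}$, even the non-affine $h_2=(\zt_2-b)^3$ with $b\neq0$ gives $\tilde f_2=\tfrac13(\zt_2-b)\zt_1$ and $N(\tilde r_2)=N(r_2)=1$. Your fallbacks do not rescue the claim either, since in the first example $z_2$ does occur in $r_1$.

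The paper's proof makes the same two leaps. Its ``all the terms will survive'' conflates survival of coefficients with distinctness of monomials, and it concludes that all $h_j$ are monomial without treating $j$ absent from $r_i$. What the counting actually yields, if you examine the monomials with $j$-th exponent $(\alpha_j-1)p_j+e$ for $e\in\mathrm{supp}(h_j)$, is $\mathrm{supp}(h_j)\subseteq\{0,p_j\}$. That is, $h_j=a_j\zt^{p_j}+c_j$ for each $j$ occurring in $r_i$.

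Finally, do not derive the last claim from monomiality. With $q_i$ constant, $h_i'\mid q_i\circ h_i$ forces $h_i'$ to be constant, so $h_i$ is affine regardless of the equality case. That is the paper's argument, and it is the part of the statement that survives.
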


The proof is in App.~\ref{app:proofs/identifiability_explicit}. Intuitively, \Cref{thm:identifiability_explicit} states that for polynomial ODEs, if the learned variables $\tilde{\rvz}$ correspond to the ground truth ones $\rvz$ up to permutations and component-wise polynomial diffeomorphisms, and we learn $\tilde{\vf}(\tilde{\rvz})$ that is as sparse as possible (i.e., the number of its active monomial terms is as low as possible, which is $N(\tilde{r}_{\pi(i)})= N(r_i)$), then this will lead to component-wise transformations that are monomials. Although this result holds only for polynomial ODEs and polynomial diffeomorphisms, we conjecture that if a representation is informed by sparsity and the form of the governing ODE, it results in a simpler component-wise transformation than with a standard causal representation learning method. The intuition of learning transformations of variables that have sparse ODE equations is similar to SINDyAE \citep{Champion2019DatadrivenDiscoveryCoordinates}, but we can now provide theoretical guarantees on the identifiability of the variables up to simple indeterminacies.

%% file: sections/method.tex
\section{SParse Equivalent Equation Discovery AutoEncoder}%
\label{sec:methodology}

The analysis in Sec.~\ref{sec:identifiability} shows that learning sparse equations does not provide any guarantees on the learned variables, while disentanglement of the learned variables in general does not provide guarantees on the learned equations. Moreover, as shown in Fig.~\ref{fig:model_comparison}, the representations learned by CRL methods might not be amenable to equation discovery methods, leading to complicated equations that are far from the ground truth, even in simple cases, like polynomial ODEs. So we propose \modelname{} (\modelnameshort{}), which introduces an additional step after a pretrained CRL method: a component-wise autoencoder that learns a transformation of each CRL variable, such that these transformed variables have sparse ODE equations. 

As a first step, \modelnameshort{} runs a CRL method to obtain latents $\vzcrl = \crlenc(\rvx)$ that we assume recover the true variables up to permutation and component-wise diffeomorphism $z_i = \crldiffeo_i(\zcrl_{\pi(i)})$. Using these variables, it learns component-wise autoencoders $\zspeed_i =\speedenc_i(\zcrl_i)$ together with a sparse ODE over $\zspeed_i$, via an equation discovery model. The component-wise autoencoders of SPEED-AE guarantee that the new latents $\tilde{\rvz}$ are still disentangled with respect to the true variables $\rvz$. Our model does not assume any latent ODE form. However, under the assumptions of \Cref{thm:identifiability_explicit}, \modelnameshort{} identifies the true latents up to simple monomial transformations.
\modelnameshort{} is agnostic about the CRL methods (as long as they guarantee identifiability up to permutation and component-wise diffeomorphisms) and about the equation discovery approaches.
\begin{equation}
\label{eqn:model_generic_loss}
        \mathcal{L}_{\text{\modelnameshort{}}} = \sum_{i=1}^{d}\left\|\speeddec\left(\speedenc(\zcrl_i)\right) - \zcrl_i\right\|_2^2, + \mathcal{L}_{\text{ODE discovery}} + \mathcal{L}_{\text{ODE sparsity}}.
\end{equation}

For instance, we can use a SINDy-based loss representing the ODE as $\dot{\rvz}= \Xi^\top\Theta(\vzspeed)$, where $\Theta(\vzspeed)\in\mathbb{R}^{L}$ is a library of $L$ functions of $\vzspeed$ weighted by coefficients $\Xi\in\mathbb{R}^{L\times d}$:
\begin{equation}\label{eqn:sindy_based_loss}
    \mathcal{L}_{\text{ODE discovery}} =
    \beta_{\rvz}
    \left\|
    [\nabla\speedenc(\vzcrl)](\dot{\vzcrl})
    -
    \Xi^\top\Theta(\vzspeed)
    \right\|_2^2
    +
    \beta_{\rvx}
    \left\|
    \dot{\vzcrl}
    -
    [\nabla\speeddec(\vzspeed)]
    (\Xi^\top\Theta(\vzspeed))
    \right\|_2^2.
\end{equation}
The same framework can instead use an MNN-based loss:
\begin{equation}\label{eqn:mnn_based_loss}
    \mathcal{L}_{\text{ODE discovery}} = \beta_{\rvz}\left\|\vzspeed - \text{MNN}(\vzspeed(0))\right\|_2^2 + \beta_{\rvx}\left\|\vzcrl - \speeddec(\text{MNN}(\speedenc(\vzcrl(0)))\right\|_2^2,
\end{equation}
where $\text{MNN}(\vzspeed_0)$ is the solution to the learned ODE $\dot{\tilde{\rvz}}=\Xi^\top\Theta(\vzspeed)$ simulated by the MNN. In both variants, the second term, weighted by $\beta_{\rvx}$, prevents collapse of $\vzspeed$ by enforcing consistency between dynamics in $\vzspeed$ and their push-forward $\vzcrl$, similarly to \citet{Champion2019DatadrivenDiscoveryCoordinates}. 
For the sparsity loss, we follow  \citet{Brunton2016DiscoveringGoverningEquations,Champion2019DatadrivenDiscoveryCoordinates, Chen2024ScalableMechanisticNeural} and use an $L_1$ penalty with sequential thresholding as a proxy for the $L_0$ norm. The penalty sparsifies $\Xi$, while coefficients below the threshold are periodically set to $0$. 
To reflect the setting of Sec.~\ref{sec:identifiability}, we first implement the encoder $\speedenc$ and decoder $\speeddec$ as learnable polynomial functions. However, our results on simple synthetic experiments (App.~\ref{app:ablations}) are unsatisfactory, as these representations were not flexible enough during training. Hence, in our final model, we use MLPs for both $\speedenc$ and $\speeddec$. This is also justified empirically that the CRL methods we use only guarantee  identifiability up to general diffeomorphism only. Thus, it is not enough to only consider polynomial transformations and the autoencoder has to represent a more general function.
Implementation details are in App.~\ref{app:implementation}.

%% file: sections/related.tex
\section{Related Work}%
\label{sec:related}

\paragraph{Equation discovery.} Learning ODE systems from data has been extensively studied when we can observe the state variables directly, both from a theoretical \citep{Scholl2024SymbolicRecoveryDifferential} and a practical point of view \citep{Brunton2016DiscoveringGoverningEquations, Kaheman2020SINDyPIRobustAlgorithm, Chen2024ScalableMechanisticNeural}. SINDY \citep{Brunton2016DiscoveringGoverningEquations}, the principal sparse regression approach, considers a library of functions evaluated on the variables $\Theta(\vzcrl)$ (usually polynomials and some trigonometric functions) and learns the coefficients $\Xi$ of a linear model that predicts the variables' derivatives $\dot{\vzcrl}$ from the linear combination of the functions in the library $\Xi^\top\Theta(\vzcrl)$. With the implicit assumption that most dynamical systems in nature have a sparse representation, a sparsity regularization on $\xi$ is employed to select the simplest model among all solutions. Mechanistic Neural Networks (MNNs, \citep{Pervez2024MechanisticNeuralNetworks, Chen2024ScalableMechanisticNeural}) are general approaches to model the evolution of dynamical systems by building an internal ODE representation. The mechanistic encoder maps the trajectory into the coefficients and other parameters of the ODE, whose solution is then calculated by solving an equivalent linear system efficiently. By using the state variables $\rvz$ directly as the internal ODE variables and matching the available trajectories, the MNN can be effectively used for equation discovery \citep{Pervez2024MechanisticNeuralNetworks} based on a library of basis functions. Finally, symbolic recovery approaches \citep{Becker2023PredictingOrdinaryDifferential, DAscoli2022DeepSymbolicRegression, DAscoli2024ODEFormerSymbolicRegression} treat variables and operation operators (such as addition, multiplication, exponential) as tokens in a transformer-based regression approach.
Recent work on theoretical results for ODE identifiability focuses on linear or affine ODEs \citep{Qiu2022IdentifiabilityAnalysisLinear, Duan2020IdentificationAffineDynamical}, and on particular settings such as unobserved variables \citep{Wang2024IdentifiabilityAnalysisLinear}, discrete observations \citep{Wang2024IdentifiabilityAsymptoticsLearninga}, or sparse coefficients \cite{Casolo2025IdentifiabilityChallengesSparse}. By contrast, theoretical identifiability in non-linear ODEs remains limited to known functional forms \citep{Grewal1976IdentifiabilityLinearNonlinear, Stanhope2014IdentifiabilityLinearLinearinParameters} or single initial conditions \citep{Yao2024MarryingCausalRepresentation}.

\paragraph{Equation discovery on unstructured data.} Few works address settings where variables are not directly observed, and only high-dimensional measurements (e.g., images or videos) are available. Existing approaches either learn equations on latent variables without guarantees that these variables correspond to the ground-truth  ~\citep{Champion2019DatadrivenDiscoveryCoordinates,Auzina2023ModulatedNeuralODEs}, or assume the underlying functional form of the ODE is known~\citep{Linial2021GenerativeODEModeling} and focus on parameter identification. The most related work to us, SindyAE \citep{Champion2019DatadrivenDiscoveryCoordinates} uses an autoencoder to learn representations of the state variables in the latent space, on which it then learns ODEs with Sparse Identification for Nonlinear Dynamics (SINDy) \citep{Brunton2016DiscoveringGoverningEquations}. Similarly to us it encourages learning representations that can be used in sparse equations, but as opposed to us it does not provide any theoretical guarantee or analysis on the quality of the learned representations.

\paragraph{Causal Representation Learning.} Causal Representation Learning (CRL) ~\citep{Scholkopf2021CausalRepresentationLearning} methods provably identify a latent representation $\hat{\rvz}$ that identifies the true latents $\rvz$ up to permutation and a component-wise transformation. CRL methods exploit multi-view and multi-environment settings \citep{Kugelgen2021SelfSupervisedLearningData, Xu2024SparsityPrinciplePartially, Yao2023MultiViewCausalRepresentation}, interventional data \citep{Lippe2022CITRISCausalIdentifiability, Kugelgen2023NonparametricIdentifiabilityCausal, Ahuja2023InterventionalCausalRepresentation, Squires2023LinearCausalDisentanglement}, temporal data and actions \citep{Lippe2023BISCUITCausalRepresentation,Lachapelle2022DisentanglementMechanismSparsity,Lachapelle2026NonparametricPartialDisentanglement} or partially observable settings \citep{Xu2024SparsityPrinciplePartially}. While CRL methods focus on identifiability of state variables from unstructured data, they generally do not learn ODE systems, and do not leverage sparsity in the resulting equations to improve identifiability, as we do. Recently, \citet{Yao2024MarryingCausalRepresentation} identify time-invariant trajectory-specific parameters, but then assume that the state variables are directly observed and that the trajectories are generated from a single initial condition. In our work we do not assume neither of these assumptions, but we then assume that the parameters of the ODE we are learning are the same across all trajectories. These works are therefore complementary and could be potentially combined.

%% file: sections/experiments.tex
\section{Experiments}%
\label{sec:experiments}

We evaluate \modelnameshort{} on different versions of three dynamical systems: Lotka-Volterra, Lorenz, and two pendulum dynamical systems, but instead of observing the state directly, we observe high-dimensional measurements $\rvx$, produced by an invertible mixing function on the true state variables $\rvz$. We discuss in detail the data generation and model training in App.~\ref{app:experimental_details}, including the pretraining of the CRL models in each setting by considering a separate dataset of trajectories that also contains interventions on each variable $z_i$ with probability $p=0.01$. 

We consider two CRL approaches, CITRIS \citep{Lippe2022CITRISCausalIdentifiability} and DMSVAE \citep{Lachapelle2022DisentanglementMechanismSparsity, Lachapelle2026NonparametricPartialDisentanglement} and two ODE discovery approaches, SINDy \citep{Brunton2016DiscoveringGoverningEquations} and MNN \citep{Pervez2024MechanisticNeuralNetworks, Chen2024ScalableMechanisticNeural}. We denote each 
combination as \modelnameshort{}(X+Y), where X indicates the CRL method (C for CITRIS or D for DMSVAE), and Y indicates the equation discovery method (S for SINDy or M for MNN). We compare with 
the only two baselines that learn both the latents $\vzspeed$ and the governing equations without additional information: 
SINDyAE \cite{Champion2019DatadrivenDiscoveryCoordinates} and MNNAE, which combines the MNN with an AE, discussed in App.~\ref{app:baselines}. In the original SINDyAE, the true image derivatives $\dot{\rvx}$ are provided as input; we instead use empirical derivatives to better reflect realistic conditions. Moreover, we also compare, as an ablation, a direct combination of the CRL methods that we use with the equation discovery methods that we use.

We evaluate three different metrics: (a) correlation discrepancy between the ground truth and learned variables, (b) trajectory prediction, and (c) ODE recovery. We define the correlation discrepancy ($CorrD$) as the MAE between the Pearson correlation of the ground truth variables $\rvz$ and the cross-correlation matrix between the ground truth $\rvz$  and learned variables $\tilde{\rvz}$, defined as
$
CorrD= \frac{1}{d^2} \sum_{i,j}^d | \rho(\rvz_i,\vzspeed_j)-\rho(\rvz_i,\rvz_j) |.
$ We use this metric to evaluate the quality of disentanglement of the state variables, while allowing for dependences between each ground truth state variables. Intuitively, lower is better and $CorrD=0$ corresponds to perfect disentangelment. 
For trajectory prediction and ODE recovery, we have to first address a key challenge: after training, each model represents the system in its own latent space, which means that they are not directly comparable. To compare them, we therefore convert all of the learned representations to the same space, the space of true latents $\rvz$. %
Here we summarize our evaluation and provide more details in App.~\ref{app:evaluation}. 

\begin{figure}
    \centering
    \includegraphics[width=0.95\linewidth]{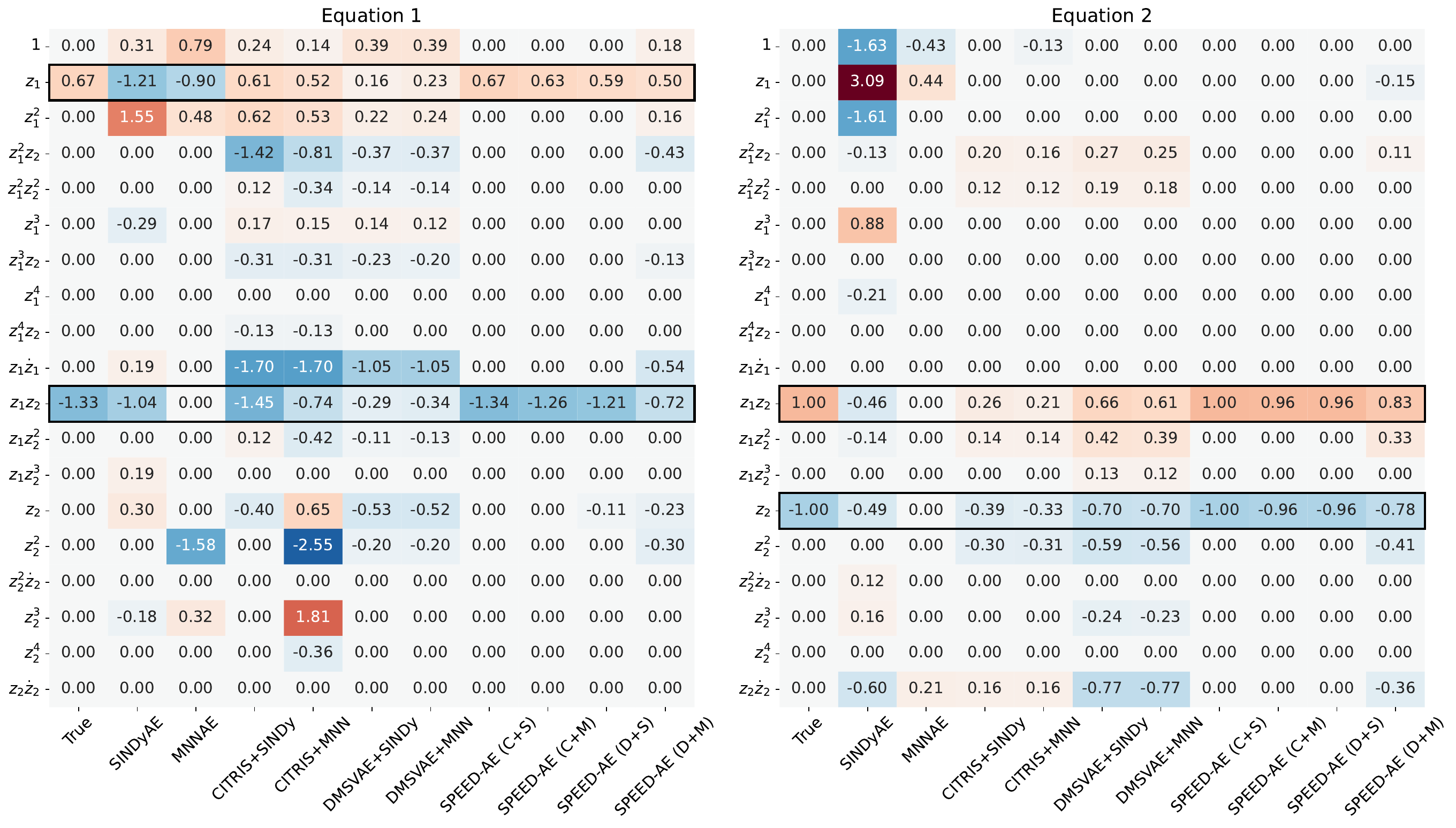}
    \caption{Learned coefficients for Lotka-Volterra for each polynomial of the state variables (each row) and different methods (each column, where the last four are versions of SPEED-AE). The first column shows the ground truth coefficients and the black boxes highlight the non-zero coefficients.}
    \label{fig:prey_predator1_coeffs}
\end{figure}

For trajectory prediction evaluation, we first learn a component-wise MLP $\sigma_i$ mapping each learned variable $\vzspeed_i$ to the ground-truth variable $\rvz_{\pi(i)}$ by choosing a permutation $\pi$ that maximises their correlation. We then compute the latent prediction error $\rvz\text{Err}$ as the $L^2$ distance between the true trajectories $\rvz(t)$ and the predicted trajectories projected to the same space $\sigma(\tilde{\rvz})(t)$, averaging over all timesteps $t$. We then compare the observation prediction error $\rvx\text{Err}$, as the $L^2$ distance between the observations $\rvx(t)$ and predicted observations $\tilde{\rvx}(t)=\dec(\speeddec(\tilde{\rvz}))(t)$, where $\speeddec$ is the SPEED-AE decoder and $\dec$ is the CRL decoder.

For ODE discovery, we first learn $d$ component-wise polynomial maps $\zspeed_i=\text{Poly}_i(z_i)$, then express the recovered ODEs $\dot{\vzspeed}=\tilde{\vf}(\vzspeed)$ via a change of variable through the learned polynomial, as shown in Prop.~\ref{proposition:variable_change}. We compare the resulting coefficients with those of the true ODE using the Sum of Absolute and Squared Errors (CoeffSAE and CoeffSSE). 
Since both the ODEs and the maps are polynomial, the converted ODE remains polynomial, although possibly implicit. In this case, the error increases due to the presence of terms such as $z_i\dot{z_i}$, which are penalised. The best case is when the map is affine, $\zspeed_i=a_iz_i + c_i$, which always leads to an explicit ODE and lower coefficient error. 
Since SINDyAE and MNNAE do not define an assignment between $\vzspeed$ and $\rvz$, we select the permutation with the best Mean Correlation Coefficient before learning the map. 
 
App.~\ref{app:ablations} reports ablations on the CRL latents and \modelnameshort{} architecture. With perfect disentanglement, \modelnameshort{} achieves negligible errors in CorrD and ODE recovery. This indicates that performance from our models is only limited by the disentanglement of the CRL method. Our experiments with polynomial encoders and decoders show that, even with perfect disentanglement and the simplest diffeomorphisms between true and CRL latents, these maps are not flexible or stable enough during training to provide good results, motivating the choice of $\speedenc,\speeddec$ being MLPs.

\begin{figure}
    \centering
    \includegraphics[width=0.85\linewidth]{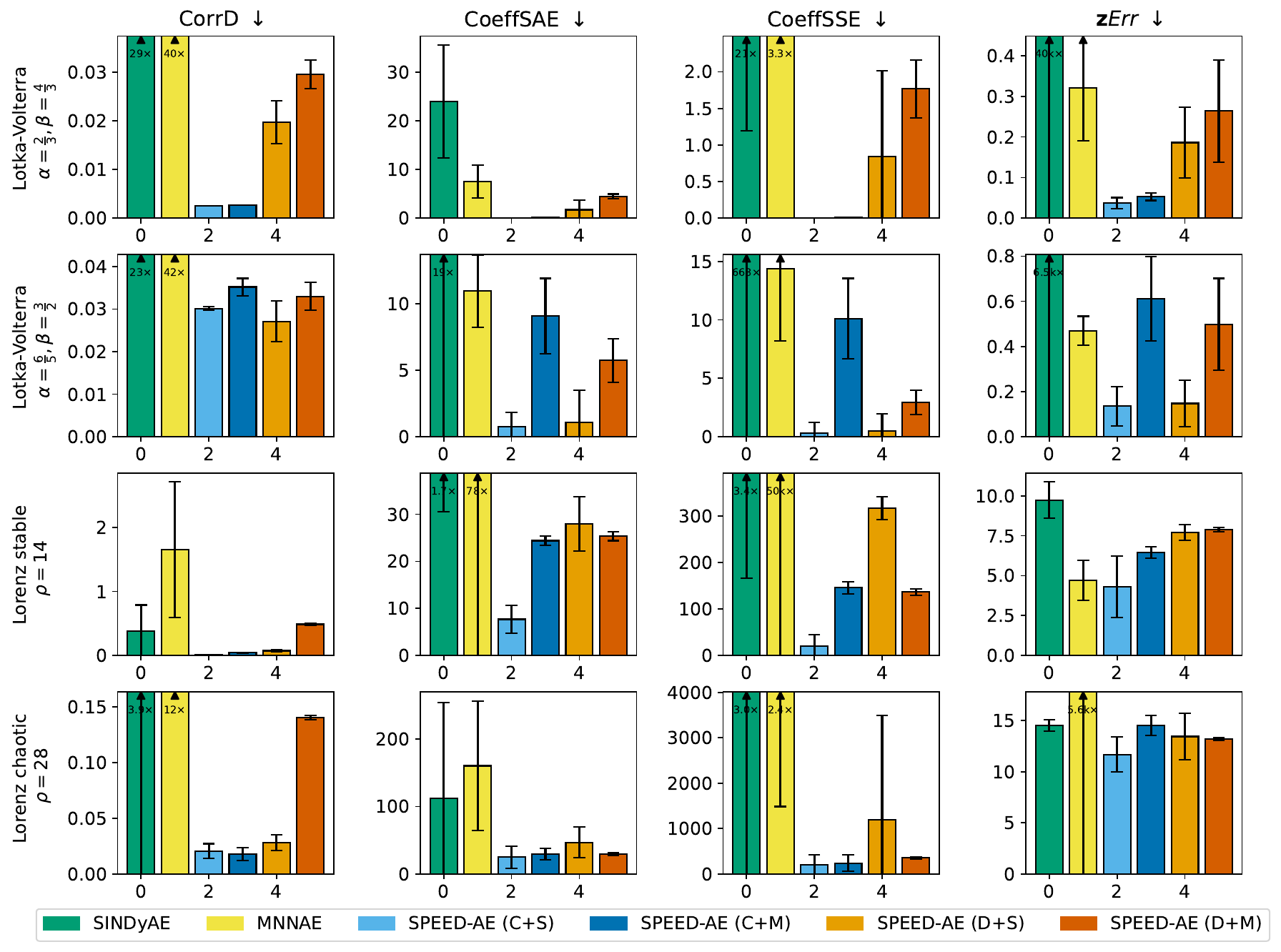}
    \caption{Results for the Lotka-Volterra and Lorenz experiments with mean and standard deviation over $10$ seeds. For all metrics, lower is better. We limit the height of the boxes for better readability, reporting the actual scales of the outlier results inside their bars.}
    \label{fig:results_full}
\end{figure}

\subsection{Lotka-Volterra and Lorenz with polynomial libraries}

We consider two systems with polynomial libraries of order $3$: Lotka-Volterra and Lorenz.
The Lotka-Volterra model \citep{Strogatz2019NonlinearDynamicsChaos} is a system of two first-order ODEs
\begin{equation}\label{eqn:predator_prey}
        \dot{z}_1 =  \alpha z_1-\beta z_1z_2,\hspace{2cm} \dot{z}_2 = -\gamma z_2 + \delta z_1z_2,
\end{equation}
for prey $z_1$ and predator $z_2$ populations. We fix $\delta=\gamma=1$ and consider two cases for $\alpha,\beta$. In the first, we set $\alpha=\frac{2}{3},\beta=\frac{4}{3}$ and train the whole pipeline. In the second, we set $\alpha=\frac{6}{5}, \beta=\frac{3}{2}$ but use the same CRL models from the first case, i.e., we train only the component-wise autoencoders and the equation discovery components of \modelnameshort{}, while the baselines are trained from scratch. The objective is to show that \modelnameshort{} allows us to transfer learned disentangled representations to new dynamics without performance drops. We generate $120$ trajectories of $5000$ steps with $\Delta t=0.01$ and an $80/20/20$ split. The observation $\rvx$ is generated as in the Lorenz setting of \cite{Champion2019DatadrivenDiscoveryCoordinates}. On $[-1,1]$, we evaluate the first four Legendre polynomials on a grid of $128$ points, obtaining $\vu_1,\vu_2,\vu_3,\vu_4\in\mathbb{R}^{128}$, and define 
\begin{equation}\label{eqn:predator_prey_mapping}
    \rvx(t) = \rvu_1 z_1(t) + \rvu_2z_2(t) + \rvu_3z_1^3(t) + \rvu_4z_2^3(t)\in \R^{128}.
\end{equation}
This map from $\mathbb{R}^2$ to its image in $\mathbb{R}^{128}$ is non-linear, smooth, and invertible on its image, since Legendre polynomials are a complete basis in $[-1,1]$ and are therefore linearly independent.
The Lorenz system \citep{Strogatz2019NonlinearDynamicsChaos} is a system of three ODEs describing atmospheric convection:
\begin{equation}\label{eqn:lorenz}
        \dot{z}_1 = \sigma(z_2 - z_1), \quad
        \dot{z}_2 = z_1(\rho - z_3) - z_2, \quad
        \dot{z}_3 = z_1z_2 - \beta z_3,
\end{equation}
where $\sigma, \rho, \beta$ are physical fluid constants. We set $\sigma=10,\beta=\frac{8}{3}$, and consider $\rho=14$ and $\rho=28$, which produce stable and chaotic trajectories, respectively. Following \citet{Champion2019DatadrivenDiscoveryCoordinates}, we generate trajectories of $250$ time steps with $\Delta t = 0.02$.
The observations are generated in the same way as \citet{Champion2019DatadrivenDiscoveryCoordinates} and analogous to the Lotka-Volterra setup. With $3$ variables, we use $6$ Legendre polynomials, obtaining a non-linear, smooth and map between $\mathbb{R}^3$ and $\mathbb{R}^{128}$.

Fig.~\ref{fig:prey_predator1_coeffs} shows the learned coefficients for both equations of Lotka-Volterra for all of the evaluated methods. \modelnameshort{} recovers ODEs that after conversion, are closest to the ground truth, especially when using CITRIS, while the baselines recover more complicated equations with multiple unnecessary terms. In particular, the coefficients of the CRL methods used directly with the equation discovery methods (e.g., CITRIS + SINDy) perform worse than the corresponding versions of our framework (e.g., SPEED-AE (C + S)), showing the effectiveness of \modelnameshort{} and that disentanglement alone is insufficient
The results for Lorenz are in App.~\ref{app:lorenz}, showing similar trends.

Fig.~\ref{fig:results_full} shows that for Lotka-Volterra all \modelnameshort{} variants beat the baselines by a large margin, except for MNNAE on $\rvz$Err. However, MNNAE's strong performance on forecasting comes at the cost of no identifiable latents. For Lorenz, the \textbf{stable} case ($\rho=14$), correlation and ODE recovery results (\Cref{fig:results_full}) show that \modelnameshort{} outperforms both baselines. The slight performance drop of DMSVAE-based variants reflects a weaker disentanglement compared to CITRIS (see App.~\ref{app:lorenz}). The strong forecasting results of MNNAE are again countered by poor correlation and ODE recovery. Under \textbf{chaotic} dynamics ($\rho=28$), \modelnameshort{} remains unmatched in ODE recovery. Forecasting results are similar to the stable case, except that MNNAE fails in more than one seed.

We report the results on the $\rvx$Err metric in App.~\ref{app:add_results}, where we also compare with a Long Short-Term Memory (LSTM) \citep{Hochreiter1997LongShortTermMemory} and a Transformer \citep{Vaswani2017AttentionAllYou}, which learn to predict $\rvx$ directly. These models are effective on short trajectories, such as the Lorenz ones, but collapse on the longer ones from Lotka-Volterra, where recovering the underlying set of variables and ODEs yields stable long-horizon performance with errors up to $20$ times lower.

\subsection{Pendulums and non-polynomial libraries}\label{sec:pendulum}
We test \modelnameshort{} on a second-order system and actual image data. We consider a system of two independent pendulums governed by $\ddot{z}_1 = -\sin(z_1)$ and $\ddot{z}_2 = -\sin(z_2)$, similarly to \citet{Champion2019DatadrivenDiscoveryCoordinates}. Images have resolution $64\times 64$ and separate channels for the two pendulums. We use a library that includes $\sin(z)$, as in SINDyAE, the main baseline for this experiment. %
Given their consistently strong performance, we only use CITRIS for CRL and SINDy for ODE discovery.

Tab.~\ref{tab:double_pendulum_results} shows that \modelnameshort{} reduces forecasting errors by a factor of $3-4$ relative to SINDyAE, which fails to identify and disentangle the two pendulums, instead learning only one variable correlated with a pendulum (see App.~\ref{app:pendulum}). The ODEs recovered by \modelnameshort{} are nearly identical to each other and to the the true ones, reflecting that the pendulums share the same dynamics. Although CITRIS+SINDy forecasts accurately, its equations are less simple and interpretable than those of \modelnameshort{}, providing empirical support for our discussion in Sec.~\ref{sec:identifiability} beyond the polynomial case. %

\begin{table}
    \centering
    \scriptsize
    \caption{Results for the Pendulum experiment for the model.}
        \begin{tabular}{lcccc}
            \toprule
            \textbf{Model} & CorrD &$\mathbf{x}$Err & $\mathbf{z}$Err & Recovered ODE \\
            \midrule
             SINDyAE & $0.431$ & $18.888$ & $1.621$ & $\begin{gathered}
                 \ddot{z}_1 - 0.302\ddot{z}_1z_1 + 1.20\ddot{z}_1z_1^2 + 2.39 \ddot{z}_1\dot{z}_1z_1 =  \\= -0.32 + 0.75z_2 - 0.43 z_1 + 0.17 z_1^3 + 0.06z_1^2 + \ldots \\
                 \ddot{z}_2 + 0.01\ddot{z}_2z_2 + 0.03\ddot{z}_2z_2^2 + 0.05\ddot{z}_2\dot{z}_2z_2 = \\ = 0.03 + 0.18z_1 - 0.07 z_1^3 -0.03z_1^2 +\ldots
             \end{gathered}$ \\
             \midrule
             CITRIS+SINDy & $0.020$ & $6.824$ & $0.239$ & $\begin{gathered}
                 \ddot{z}_1 - 0.15\dot{z}_1^2 + 0.15\ddot{z}_1z_1 + 0.02 \dot{z}_1^2z_1^2 -0.012 \dot{z}_2^2z_2^2 + \ldots =  \\= 0.64 + 11.31 z_1 -1 31z_1^2 + \cos(0.14 \dot{z}_1z_1 - 0.93z_1) + \ldots \\
                 \ddot{z}_2 - 0.16\ddot{z}_2z_2 - 0.16\dot{z}_2^2 + 0.32\dot{z}_2z_2 +\ldots = \\ = 1.48 + 4.03 z_2 - 0.04 z_2^2 + \cos(0.14z_2\dot{z}_2 + 0.84z_2-0.43)\ldots
             \end{gathered}$ \\
             \midrule
             \modelnameshort{} (C+S) & $0.004$ & $6.155$ & $0.177$ & $\begin{gathered}
                 \ddot{z}_1 = - 0.27 z_1 - 0.61 \sin(1.05 z_1) \\
                 \ddot{z}_2 = - 0.27 z_2 - 0.61 \sin(1.07 z_2)
             \end{gathered}$ \\
            \bottomrule
        \end{tabular}
        \label{tab:double_pendulum_results}
\end{table}

%% file: sections/conclusion.tex
\section{Conclusion}\label{sec:conclusion}

We show that neither disentanglement nor sparsity alone is enough to simultaneously learn the true structural variables and their governing equations from high-dimensional data. 
We propose \modelnameshort{}, a pipeline that learns a componentwise transformation of disentangled variables from a any CRL model that is informed by sparsity of the learned equations given a library of functions. \modelnameshort{} achieves the best performance among baselines, especially in terms of correlation with the true latents and the recovered ODEs.  While our methodology is general, as any CRL and ODE recovery methods can be plugged into our pipeline, one of our limitations it that the results depend on the performance of each step. In particular, a poor disentanglement can hinder the final performance, although \modelnameshort{} can drastically improve CRL results even in this case. Moreover, our theoretical analysis is limited by the non-cancellation assumptions and polynomial diffeomorphisms.
We believe the empirical and theoretical results of this work represent a step forward towards learning meaningful and transferable representations of dynamical systems from unstructured data and that these results can be further improved in future works by the community.

%% file: sections/appendix.tex
\section{Proofs of the Theoretical Statements and Discussion}\label{app:proofs}
In this Section, we provide proofs of the theoretical statements and discuss the limits and challenges of the problems in \Cref{sec:identifiability}.

\subsection{Preliminaries}

First, we report a simple proposition that shows that for any ODE $\dot{\rvz} = \vf(\rvz)$ defined on $\rvz$ and any diffeomorphism $h$ that maps $\tilde{\rvz}$ to $\rvz$, we can define a conjugated ODE with the change of variable formula.
Intuitively this means that learning an ODE on our learned variables $\tilde{\rvz}$ does not provide any constraints in terms of the class of functions $h$.

\begin{restatable}[Change of Variable]{proposition}{variablechange}
\label{proposition:variable_change}
    Consider an ODE $\dot{\rvz} = \vf(\rvz)$ defined on $\rvz$, and let $h$ be a diffeomorphism that maps a different set of variables $\tilde{\rvz}$ to $\rvz=h(\tilde{\rvz})$. Then, given a solution $\rvz(t)$ with initial condition $\rvz_0$, the transformed trajectory $\tilde{\rvz} = h^{-1}(\rvz)$ solves the following Cauchy problem:
    \begin{equation}%
    \label{eqn:ode_general_change}
        \dot{\tilde{\rvz}} =
        [\nabla h(\tilde{\rvz})]^{-1}\vf(h(\tilde{\rvz}))
        \quad\quad
        \vzcrl(0) = h^{-1}(\rvz_0)
    \end{equation}
\end{restatable}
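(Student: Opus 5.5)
The plan is a direct chain-rule/composition argument. The ODE $\dot{\rvz}=\vf(\rvz)$ has a unique solution through $\rvz_0$ because $\vf$ is assumed Lipschitz (Sec.~\ref{sec:background}). Let $\rvz(t)$ denote this solution and define the curve $\tilde{\rvz}(t):=h^{-1}(\rvz(t))$. We will show that this curve satisfies \eqref{eqn:ode_general_change}, and then appeal to uniqueness.

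\textbf{Step 1: smoothness and initial condition.} Since $h$ is a diffeomorphism, $h^{-1}$ is $C^1$. Hence $\tilde{\rvz}(t)$ is a $C^1$ curve, being a composition of $C^1$ maps. At $t=0$ it takes the value $h^{-1}(\rvz_0)$, which is exactly the required initial condition.

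\textbf{Step 2: differentiate the identity $\rvz(t)=h(\tilde{\rvz}(t))$.} The chain rule gives
\[
\dot{\rvz}(t)=[\nabla h(\tilde{\rvz}(t))]\,\dot{\tilde{\rvz}}(t).
\]
Substituting $\dot{\rvz}=\vf(\rvz)=\vf(h(\tilde{\rvz}))$ on the left, we obtain
\[
[\nabla h(\tilde{\rvz})]\,\dot{\tilde{\rvz}}=\vf(h(\tilde{\rvz})).
\]

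\textbf{Step 3: invert the Jacobian.} Differentiate $h\circ h^{-1}=\mathrm{id}$ to get $[\nabla h(\tilde{\rvz})]\,[\nabla h^{-1}(\rvz)]=I$. This shows that $\nabla h(\tilde{\rvz})$ is invertible at every point. Multiplying the equation from Step 2 by its inverse yields
\[
\dot{\tilde{\rvz}}=[\nabla h(\tilde{\rvz})]^{-1}\vf(h(\tilde{\rvz})),
\]
which is \eqref{eqn:ode_general_change}.

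\textbf{Step 4 (optional): uniqueness in the new variables.} If we also want $\tilde{\rvz}(t)$ to be the unique solution of the transformed Cauchy problem, we use the conjugacy in reverse. Any solution $\tilde{\rvw}(t)$ of \eqref{eqn:ode_general_change} with the same initial value maps under $h$ to a solution of the original problem, by the same chain-rule computation. By uniqueness of the original problem, $h(\tilde{\rvw})=\rvz$, and therefore $\tilde{\rvw}=\tilde{\rvz}$. This argument avoids having to check that the transformed vector field is Lipschitz.

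The only mildly delicate point is the invertibility of $\nabla h$ in Step 3, and it is settled by the diffeomorphism assumption as shown there. Everything else is routine calculus.
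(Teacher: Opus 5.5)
Your proof is correct and follows essentially the same route as the paper: differentiate $\rvz = h(\tilde{\rvz})$ with the chain rule, substitute $\dot{\rvz} = \vf(h(\tilde{\rvz}))$, and invert $\nabla h(\tilde{\rvz})$ using the diffeomorphism assumption. The paper omits two things you include, namely the justification of invertibility by differentiating $h\circ h^{-1}=\mathrm{id}$ and the optional uniqueness argument in Step 4; neither is needed for the statement as posed, and neither changes the argument.
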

\begin{proof}
    It is sufficient to consider the time derivative of $\rvz = h(\vzspeed)$ and apply the chain rule on the right-hand side, obtaining
    \begin{equation}\label{eqn:variable_change_lemma_start}
        \dot{\rvz} = [\nabla h (\vzspeed)]\dot{\vzspeed}.
    \end{equation}
    Then, we can substitute the ODE for $\rvz$ on the left-hand side, obtaining
    \begin{equation}\label{variable_change_lemma_sub}
        \vf(\rvz) = [\nabla h (\vzspeed)]\dot{\vzspeed}.
    \end{equation}
    Finally, we substitute $\rvz=h(\vzspeed)$:
    \begin{equation}
        [\nabla h (\vzspeed)]\dot{\vzspeed} = \vf(h(\vzspeed)).
    \end{equation}
    Since $h$ is a diffeomorphism, $[\nabla h (\vzspeed)]$ is invertible everywhere and thus
    $\dot{\tilde{\rvz}}=[\nabla h(\tilde{\rvz})]^{-1}\vf(h(\tilde{\rvz}))$.
\end{proof}

\subsection{SINDyAE loss cannot guarantee disentanglement for polynomial ODEs}%
\label{proof:sindyae}

We show that following two polynomial ODEs are equivalent in terms of the SINDyAE loss, since they are related by a diffeomorphism and have the same number of terms
\begin{equation}
\vf(\rvz) =
\begin{pmatrix}
  -z_2 + z_1^3 + z_1 z_2^2 \\
  \phantom{-}z_1 + z_1^2 z_2 + z_2^3
\end{pmatrix}
\qquad
\tilde{\vf}(\tilde{\rvz})=
\begin{pmatrix}
  \phantom{-}\tilde{z}_2 + 2\tilde{z}_1^3 + 2\tilde{z}_1 \tilde{z}_2^2 \\
  -\tilde{z}_1 + 2\tilde{z}_1^2 \tilde{z}_2 + 2\tilde{z}_2^3,
\end{pmatrix}
\end{equation}
although their representations are not disentangled (as we see below).

Let 
$
A = \begin{pmatrix} 1 & \phantom{-}1 \\ 1 & -1 \end{pmatrix},
$
consider the diffeomorphism
\[
  \rvz = h(\tilde{\rvz}) \;=\; A\tilde{\rvz}
  \;=\; \bigl( \tilde{z}_1 + \tilde{z}_2,\ \tilde{z}_1 - \tilde{z}_2 \bigr).
\]
Since $h$ is a linear transformation, it holds $\nabla h(\tilde{\rvz}) = A$ for every $\tilde{\rvz}$. Furthermore, since $A^{-1} = \frac{1}{2}A$, we can also compute $[\nabla h(\tilde{z})]^{-1} = \frac{1}{2}A$.
We rewrite slightly the original polynomial ODE as 
\[
\vf(\rvz) =
\begin{pmatrix}
  -z_2 + z_1^3 + z_1 z_2^2 \\
  \phantom{-}z_1 + z_1^2 z_2 + z_2^3
\end{pmatrix} 
= J\rvz + \lvert \rvz \rvert^{2}\rvz,
\]
using matrix 
$J = \begin{pmatrix} 0 & -1 \\ 1 & 0 \end{pmatrix}$ and notation $\lvert \rvz \rvert^{2} = z_1^{2} + z_2^{2}.$

We can write the conjugate ODE, using the change of variable form from \Cref{proposition:variable_change}, as
\begin{align}
  \tilde{\vf}(\tilde{\rvz}) &= [\nabla h(\tilde{z})]^{-1} \vf(h(\tilde{z}))\\
  &= \frac{1}{2}A\left( J A \tilde{\rvz} + \lvert A\tilde{\rvz} \rvert^{2} A\tilde{\rvz} \right)\\
  &=\frac{1}{2}A J A \tilde{\rvz} + \frac{1}{2}A \lvert A\tilde{\rvz} \rvert^{2} A\tilde{\rvz}\\
  &= -J \tilde{\rvz} + \frac{1}{2}A \lvert A\tilde{\rvz} \rvert^{2} A\tilde{\rvz} \\
  &= -J \tilde{\rvz} + \frac{1}{2}A 2 \lvert \tilde{\rvz} \rvert^{2} A\tilde{\rvz}\\
  &= -J \tilde{\rvz} + \lvert \tilde{\rvz} \rvert^{2}A   A\tilde{\rvz}\\
  &= -J \tilde{\rvz} + \lvert \tilde{\rvz} \rvert^{2} 2 I \tilde{\rvz}\\
  &= -J \tilde{\rvz} + 2 \lvert \tilde{\rvz} \rvert^{2} \tilde{\rvz}\\
  &= \begin{pmatrix}
  \phantom{-}\tilde{z}_2 + 2\tilde{z}_1^3 + 2\tilde{z}_1 \tilde{z}_2^2 \\
  -\tilde{z}_1 + 2\tilde{z}_1^2 \tilde{z}_2 + 2\tilde{z}_2^3.
\end{pmatrix}
\end{align}

In particular, even if both ODE systems are polynomial and each component of $h$ is a polynomial function, the SINDy loss cannot guarantee on its own that the learned variables $\tilde{\rvz}$ are disentangled, i.e., they correspond to the ground truth variables $\rvz$ up to permutation $\pi$ and component-wise transformations $h_i$ that only depend on a single $z_{\pi(i)}$. This suggests that the learned equations on these variables might in general be more complicated than the ground truth ones.

\subsection{Identifiability of Latent variables and ODE}\label{app:proofs/identifiability_explicit}

Before proceeding to the proof of the main result, we pose the following assumptions on the polynomial functions involved in our problem.
\begin{assumption}[No cancellations in Powers of Polynomials] \label{ass:non_cancellation_power}
    Let $h_i(\tilde{z}_{\pi(i)}) = \sum_{k\leq p}{\beta_{k}\tilde{z}_{\pi(i)}^k}$ be any of the component-wise polynomial diffeomorphisms that we consider, with degree $p$. Then, the symbolic representation of $h_i(\tilde{z}_{\pi(i)})^\ell$ has the maximum possible number of terms, i.e., no terms cancel out.  %
\end{assumption}

Consider the square of a generic polynomial of degree $d=2$: $(1+\beta_1z+\beta_2z^2)^2$, which expands to \[1 + 2 \beta_1 z + (\beta_1^2 + 2\beta_2) z^2 + 2 \beta_1 \beta_2 z^3 + \beta_2^2 z^4.\] Then, under assumption \ref{ass:non_cancellation_power}, all the monomials appearing in the expansion must have a non-zero coefficient. In other terms, it implies that $\beta_1 \neq 0$, $\beta_2 \neq 0$, and $\beta_1^2+2\beta_2\neq 0$.
This assumption does not say that in every possible polynomial power, every possible term $z^k$ with degree between $0$ and $p\ell$ has a non-zero coefficient. For example, $(1+\beta_2z^2+\beta_4z^4)^2$ expands to a polynomial containing only only even powers of $z$. Here, Assumption \ref{ass:non_cancellation_power} indicates that all even exponents appear. %
We remark that this assumption is not restrictive since, if we assume that the coefficients of $P(z)$ are chosen at random in some interval, having cancellations has zero probability.
Finally, we remark that powers of polynomials are an active research topic in algebra \citep{Zannier2008CompositeLacunaryPolynomials, Schinzel2009NumberTermsPower}. Without assumptions, no bounds on the number of terms in a power of a polynomial would be helpful for our theoretical analysis. In particular, by accurately selecting the coefficients of $P(z)$, one can even find cases where $P(z)^\ell$ has fewer terms than $P(z)$ \citep{Renyi1947MinimalNumberTerms, Coppersmith1991PolynomialsWhosePowers, Abbott2002SparseSquaresPolynomials}.

\begin{assumption}[Variable changes do not annihilate terms]\label{ass:non_cancellation_expansion}
    Consider component-wise polynomial maps $z_i = h(\zspeed_{\pi(i)})$. In the context of \Cref{thm:identifiability_explicit}, with $f_i(z_i) = q_i(z_i)r_i(\rvz)$, when performing the variable change on the system of ODEs, obtaining a new system of equations
    \begin{equation}\label{eqn:speed_single_ode_change}
        \dot{\zspeed}_{\pi(i)} = \frac{q_i(h_i(\zspeed_{\pi(i)}))}{h'_i(\zspeed_{\pi(i)})}r_i(h_1(\zspeed_{\pi(1)}), \ldots, h_d(\zspeed_{\pi(d)})),
    \end{equation}
    all monomial terms that appear in the symbolic expansion of $r_i(h(\vzspeed_{\pi(i)}))$ have non-zero coefficients.
\end{assumption}

Similar remarks apply to Assumption \ref{ass:non_cancellation_expansion}, where cancellation of terms in the expansion of $r_i(h(\rvz))$ requires specific conditions on the coefficients of $r_i$ and $h_j$, which have measure zero when the coefficients are sampled randomly. For instance, the ODE $f(z) = z + z^2 - z^3$ can be decomposed as $f(z)=q(z)r(z)$ where $q(z) = z$ and $r(z) = 1 + z - z^2$. The diffeomorphism $h(\tilde{z}) = \tilde{z} + \alpha$ produces the conjugate ODE $\dot{\tilde{z}} = \left(\alpha + \alpha^2 - \alpha^3\right) + \left(1 + 2\alpha - 3\alpha^2\right)\tilde{z} + \left(1 - 3\alpha\right)\tilde{z}^2 - \tilde{z}^3$, which in turn factorizes as $[h'(\tilde{z})]^{-1}q(h(\tilde{z}))h'(\tilde{z}) r(h(\tilde{z}))$, where
\begin{align}
    q(h(\tilde{z})) &= \tilde{z} + \alpha\\
    h'(\tilde{z}) &= 1\\
    r(h(\tilde{z})) &= \left(1 + \alpha - \alpha^2\right) + \left(1 - 2\alpha\right)\tilde{z} - \tilde{z}^2.
\end{align}
Hence, satisfying Assumption \ref{ass:non_cancellation_expansion} requires that $1+\alpha-\alpha^2\neq0$ and $1 - 2\alpha\neq0$.

Then, before proving our main result, we introduce a technical lemma, which sets a lower bound on the number of terms in the symbolic representation of a power of a polynomial.

\begin{lemma}[Number of terms in a polynomial power]\label{lemma:power_expansion}
    Let $P(z)$ be a polynomial with degree $p$, $P(z) = \sum_{j\leq p}{\beta_jz^j}$ and let $n$ be the number of non-zero coefficients. Then, assuming that no cancellations happen (Assumption \ref{ass:non_cancellation_power}), the number of terms in $P(z)^\ell$ is at least $1+(n-1)\ell$.
\end{lemma}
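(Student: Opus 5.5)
Let $P(z)=\sum_{j\le p}\beta_j z^j$ have $n$ nonzero coefficients, with exponents $e_1<e_2<\dots<e_n$ (so $e_n=p$). The plan is to bound the number of terms of $P(z)^\ell$ from below by counting distinct exponents that \emph{must} appear in the support of the expansion. Under Assumption~\ref{ass:non_cancellation_power}, every exponent that can be written as a sum of $\ell$ elements (with repetition) of $E=\{e_1,\dots,e_n\}$ carries a nonzero coefficient, because no cancellation occurs among the contributions to that monomial. The number of terms of $P(z)^\ell$ is therefore exactly $|\ell E|$, where $\ell E=\{a_1+\dots+a_\ell : a_k\in E\}$ is the $\ell$-fold sumset. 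The claim reduces to the additive-combinatorics inequality $|\ell E|\ge 1+(n-1)\ell$ for a finite set $E\subset\mathbb{Z}$ of size $n$.

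To prove this inequality I would write down an explicit chain of strictly increasing sums. Start from $\ell e_1$. Then, one summand at a time, replace a copy of $e_1$ by $e_2$ until all $\ell$ summands equal $e_2$; this gives $\ell$ new, strictly larger values. Next, replace copies of $e_2$ by $e_3$ one at a time, and continue in the same way up to $e_n$. Each of the $n-1$ stages contributes $\ell$ sums, each strictly larger than the previous one, since each step increases the total by $e_{j+1}-e_j>0$. Together with the starting value, this yields $1+(n-1)\ell$ distinct elements of $\ell E$.

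An alternative route is induction on $\ell$ using $|A+B|\ge |A|+|B|-1$ for finite sets of integers, applied to $A=(\ell-1)E$ and $B=E$. Its proof is the same interleaving argument, so the explicit chain is simpler.

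The main subtlety is not the counting but the use of Assumption~\ref{ass:non_cancellation_power}. I must state clearly that it guarantees every exponent in $\ell E$ survives, that is, the aggregated coefficient $\sum \prod \beta_{a_k}$ is nonzero. Without this, the bound fails, as in the Rényi-type examples cited in the paper. Once this is made explicit, the chain argument finishes the proof. The edge case $n=1$ is trivially consistent: $P^\ell$ is a single monomial, giving $1$ term.
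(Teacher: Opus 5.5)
Your argument is correct, but it is organized differently from the paper's.

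\textbf{The paper's route.} It proceeds by induction on $\ell$. After factoring out the lowest power of $z$ so that $\beta_0\neq 0$, it writes $P^{\ell+1}=P^\ell\cdot P=\beta_p z^p P^\ell+\gamma_0(P-\beta_p z^p)+Q(z)$, where $\gamma_0$ is the constant coefficient of $P^\ell$. The first piece has all exponents $\geq p$ and as many terms as $P^\ell$. The second piece has the $n-1$ exponents of $P$ below $p$. The no-cancellation assumption is then invoked to say that $Q(z)$ destroys none of these terms, so each step gains $n-1$ terms. This is exactly the $|A+B|\geq |A|+|B|-1$ route you mention as an alternative.

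\textbf{Your route.} You use the assumption once, globally, to identify the support of $P^\ell$ with the sumset $\ell E$. You then exhibit the explicit ladder $(\ell-k)e_j+ke_{j+1}$ for $1\le j\le n-1$ and $1\le k\le\ell$. Together with $\ell e_1$, this gives $1+(n-1)\ell$ strictly increasing elements.

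\textbf{What each buys.} Your version needs no normalization of the constant term and no induction. It also cleanly separates the algebraic hypothesis, $\mathrm{supp}(P^\ell)=\ell E$, from the purely combinatorial count. In the paper, by contrast, the hypothesis is used at every inductive step, in the rather informal form that $Q(z)$ does not cancel the two displayed groups. The paper's top/bottom splitting is in turn the more reusable statement. The same argument bounds the number of terms of any product of two polynomials, not only of a power.
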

\begin{proof}
    We assume that the polynomial $P(z)$ has a non-zero constant term $\beta_0$. If not, it is sufficient to factor out the smallest power of $z$ and work with the remaining polynomial, which has the same number of terms.
    Therefore, we prove the statement by induction on the power $\ell$:
    \begin{itemize}
        \item \textbf{Base case.} $\ell=1$. The statement holds as the polynomial has $1+(n-1)\ell=n$ terms.
        \item \textbf{Inductive step} $\ell\implies \ell+1$. Consider the $\ell+1$-th power as the following product:
        \begin{equation}\label{eqn:poly_power_inductive1}
            P(z)^{\ell+1} = P(z)^\ell P(z) = \left(\sum_{j\leq \ell p}\gamma_jz^j\right)\left(\beta_0 + \ldots \beta_pz^p\right),
        \end{equation}
        where the first parentheses is the result of expanding $P(z)^\ell$. We now expand this product by considering two particular terms: one given by the highest-degree term of $\beta_pz^p$ of $P(z)$ multiplied by $P(z)^\ell$ and the other given by the remainder of the difference $P(z)-\beta_pz^p$ multiplied by $\gamma_0$:
        \begin{equation}\label{eqn:poly_power_inductive2}
            \begin{split}
                P(z)^{\ell+1} & = \left(\sum_{j\leq \ell p}\gamma_jz^j\right)\left(\beta_0 + \ldots \beta_pz^p\right)\\
                & = \beta_pz^p\left(\sum_{j\leq \ell p}\gamma_jz^j\right) + \gamma_0\left(\beta_0 + \ldots \beta_{p-1}z^{p-1}\right) + Q(z),
            \end{split}
        \end{equation}
        where $Q(z)$ is the remainder of terms. The first term of this sum involves only terms with degree at least $p$ and has the same number of terms as $P(z)^\ell$, which, by inductive hypothesis, is at least $1+(n-1)\ell$. The second term, instead, has one fewer term than $P(z)$ and maximum degree $p-1$. Assuming no cancellations (Assumption \ref{ass:non_cancellation_power}), the number of terms in \eqref{eqn:poly_power_inductive2} is at least the sum of the number of terms in its two terms. Hence, $P(z)^{\ell+1}$ has at least $1+(n-1)\ell + (n-1) = 1+(n-1)(l+1)$ terms as wanted.
    \end{itemize}
\end{proof}

In the following, we denote monomials through multi-indexes $\ell\in\mathbb{N}^d$ with degree $|\ell| = \sum_{i=1}^d\ell_i$. Therefore, we indicate multi-powers of vectors as $\rvz^{\ell} = z_1^{\ell_1}\cdots z_d^{\ell_d}$. We are now ready to prove our main result, which we restate here.

\identifiabilitythm*
\begin{proof}
    We start from the implicit version of the transformed ODE (Prop. \ref{proposition:variable_change}) for a generic index $i$:
    \begin{equation}\label{eqn:variable_change_implicit}
        [h'_i(\zspeed_{\pi(i)})]\dot{\zspeed}_{\pi(i)} = f_i(h_1(\zspeed_{\pi(1)}), \ldots, h_d(\zspeed_{\pi(d)})).
    \end{equation}
    For the new ODE to be polynomial, it is necessary that
    \begin{equation}
        \dot{\zspeed}_{\pi(i)} = \tilde{f_i}(\vzspeed) = \frac{f_i(h_1(\zspeed_{\pi(1)}), \ldots, h_d(\zspeed_{\pi(d)}))}{h'_i(\zspeed_{\pi(i)})}
    \end{equation}
    is a polynomial and, thus, that $f_i(h_1(\zspeed_{\pi(1)}), \ldots, h_d(\zspeed_{\pi(d)}))$ is divisible by $h'_i(\zspeed_{\pi(i)})$.  Most importantly, if $h'_i(\zeta) = 0$ for some $\zeta\in \mathbb{R}$, i.e., $\zeta$ is a root of $h'_i(\zspeed_{\pi(i)})$, we must have that
    \begin{equation}
        f_i(h_1(\zspeed_{\pi(1)}), \ldots, h_i(\zeta), \ldots, h_d(\zspeed_{\pi(d)})) = 0
    \end{equation}
    for every value of the other $\zspeed_j$, as otherwise $\tilde{f}_i$ would not be a polynomial. Thus, $h_i(\zeta)$ always annihilates $f_i(\rvz)$, and $z_i-h_i(\zeta)$ must divide $f_i$.
    
    We now consider the decomposition $f_i(\rvz) = q_i(z_i)r_i(\rvz)$ where $q_i(z_i)$ has the highest degree.
    In this composition, it must be that $z_i-h_i(\zeta)$ does not divide $r_i(\rvz)$, as otherwise we could write $f_i(\rvz) = q_i(z_i)(z_i -h_i(\zeta))r'_i(\rvz)$ and $q_i$ would not have the highest degree possible. Thus, we must have $h'_i(\zspeed_{\pi(i)})|q_i(h_i(\zspeed_{\pi(i)}))$ and
    \begin{equation}
        \tilde{f}(\vzspeed) = \frac{q_i(h_i(\zspeed_{\pi(i)}))}{h'_i(\zspeed_{\pi(i)})}r_i(h(\vzspeed_{\pi}))
    \end{equation}
    and, for $\tilde{f}$ to be a polynomial, $h'_i(\zspeed_{\pi(i)})$ must divide $q_i(h_i(\zspeed_{\pi(i)}))$, leaving $r_i(h(\vzspeed_{\pi}))$ untouched. Thus, we can decompose the conjugated ODE as $\tilde{f}(\vzspeed) = \tilde{q}_i(\zspeed_{\pi(i)})\tilde{r}_i(\vzspeed)$, where
    \begin{align}
    \tilde{q}_i(\zspeed_{\pi(i)})&= \frac{q_i(h_i(\zspeed_{\pi(i)}))}{h'_i(\zspeed_{\pi(i)})}\\
    \tilde{r}_i(\vzspeed) &= r_i(h(\vzspeed_{\pi})).
    \end{align}

    Thus, the sparsity in $\tilde{f}_{\pi(i)}$ is strongly related to the sparsity of $\tilde{r}_{\pi(i)}(\vzspeed) = r_i(h(\vzspeed_{\pi}))$. In general, $r_i(\rvz)$ has the form $r_i(\rvz) = \sum_{|\ell|\leq \text{deg}(f_i)}{\alpha_{i,\ell}z_1^{\ell_1}\cdots z_d^{\ell_d}}$.
    We start by expanding $r_i(h_1(\zspeed_{\pi(1)}), \ldots, h_d(\zspeed_{\pi(d)}))$, where we recall $h_j(\zspeed)$ are all polynomials as well:
    \begin{equation}\label{eqn:variable_change_single_expanded}
        \begin{split}
           \tilde{r}_i(\tilde{\rvz}) = r_i(h_1(\zspeed_{\pi(1)}), \ldots, h_d(\zspeed_{\pi(d)})) & = \sum_{|\ell|\leq \text{deg}(\vf)}{\alpha_{i,\ell}\,  h_1(\zspeed_{\pi(1)})^{\ell_1}\cdots h_d(\zspeed_{\pi(d)})^{\ell_d}}\\
            & = \sum_{|\ell|\leq \text{deg}(\vf)}{\alpha_{i,\ell} \prod_{j=1}^{d}{\left( \sum_{k\leq p_j}{\beta_{jk}\zspeed_{\pi(j)}^{k}}\right)^{\ell_j}}}.
        \end{split}
    \end{equation}

    We now focus on a single term related to the coefficient $\alpha_{i,\ell}$. Our objective is to calculate the total number of terms that will appear once we expand the calculations. To do so, we need to evaluate the number of terms resulting from taking the $\ell_j$ power of the polynomials. If we let $n_j$ be the number of terms in $h_j(\zspeed_{\pi(j)})$, Lemma \ref{lemma:power_expansion} tells us that $h_j(\zspeed_{\pi(j)})^{\ell_j}$ has at least $1+(n_j-1)\ell_j$ terms. Furthermore, since they all involve a single variable and they are all different, the symbolic expansion of the product $h_1(\zspeed_{\pi(1)})^{\ell_1}\cdots h_d(\zspeed_{\pi(d)})^{\ell_d}$ will have the maximum number of terms possible, which is at least $\prod_{j=1}^d (1+(n_j-1)\ell_j)$. In the end, a single term $\alpha_{i,\ell}$ now accounts for at least $\prod_{j=1}^d (1+(n_j-1)\ell_j)$ new ones.

    Additionally, we notice that the highest-degree monomial among these is given by $\prod_{j=1}^{d}\zspeed_{\pi(j)}^{p_j\ell_j}$. This is because this term comes from the product of the highest-degree monomial in each diffeomorphism $h_j(\zspeed_{\pi(j)})$, which has degree $p_j$ as defined above, and the exponent $\ell_j$. We call this monomial term the \emph{representative} of the multi-index $\ell$.

    Given two multi-indices $\ell^{(1)}, \ell^{(2)}$, their representative monomials are distinct (in terms of degrees of their variables). This is because if they were to be equal, we would have that, for each $j$, $\ell^{(1)}_jp_j=\ell^{(2)}_jp_j$ and, thus, $\ell^{(1)}_j=\ell^{(2)}_j$. In particular, since the representatives are all distinct, $\tilde{r}_{\pi(i)}$ has at least one term for each monomial in $r_i$, that is $N(\tilde{r}_{\pi(i)}) \geq N(r_i)$.

    Clearly, we can have that different multi-indices $\ell^{(1)}, \ell^{(2)}, \ldots, \ell^{(K)}$ all produce a monomial term related to some multi-index $\tilde{\ell}$ in the new variables $\tilde{\rvz}$. However, Assumption \ref{ass:non_cancellation_expansion} ensures that this term remains active and the contributions coming from the different $\ell^{(j)}$ do not cancel out. Hence, minimizing the number of terms in the transformed ODE requires minimizing $\prod_{k=1}^d (1+(n_j-1)l_j)$, as all the terms will survive.
    
    Since the multi-indices $\ell$, the corresponding powers $\ell_j$, and the coefficients $\alpha_j$ are all given by the original ODE and are thus fixed, this leaves us with a single option for minimizing the number of terms after the change of variable: minimizing $n_j$, i.e., the number of terms in $h_j(\zspeed_{\pi(j)}) = \sum_{k=1}^{p_j}{\beta_{jk}\zspeed_{\pi(j)}^{k}}$. In fact, if $n_j=1$, each power will involve exactly one term, and the ODE keeps the same number of terms. This means that, to have the minimal number of terms, we must have that all $\beta_{jk}$ are zero except one, i.e., $h_j$ are all monomial transformations $h_j(\zspeed_{\pi(j)})=a_jz_{\pi(j)}^{p_j}$.

    For the special case, we now look again at the structure of
    \begin{equation}
        \tilde{f}(\vzspeed) = \frac{q_i(h_i(\zspeed_{\pi(i)}))}{h'_i(\zspeed_{\pi(i)})}r_i(h(\vzspeed_{\pi}))
    \end{equation}
    and we ask when $h'_i(\zspeed_{\pi(i)})$ divides $q_i(h_i(\zspeed_{\pi(i)}))$. As stated earlier, whenever $\zeta$ is a root of $h_i(\zspeed_{\pi(i)})$, $h_i(\zeta)$ annihilates $f_i$ and $z_i-h_i(\zeta)$ must divide $f_i$.
    This necessary condition limits which polynomial transformations $h_i$ are admissible given the form of $f_i(z)$. In particular,
    if $q_i(z_i) = d_i$ is a constant, it means that $h'_i(\zspeed_{\pi(i)})$ must be a constant as well: $h'_i(\vzspeed) = a_i\in\R$. Thus, $h_i(\zspeed_{\pi(i)}) = a_i\zspeed_{\pi(i)}+b_i$, and $h_i$ is necessarily an affine function.
\end{proof}

\paragraph{Counterexample to the general case.} \Cref{thm:identifiability_explicit} shows that an affine map $h_i$ can be recovered if $q_i(z_i)$ is a constant. One can also show that if $q_i$ has a single root $d_i(z_i-c_i)^{k_i}$, then $h_i(\zspeed_i) = a_i(\zspeed_{\pi(i)}-b_i)^k + c_i$. This comes from the fact that, for $h_i'(\zspeed_{\pi(i)})$ to divide $q_i(h_i(\zspeed_{\pi(i)}))$, $h'$ must have a single root, with arbitraty multiplicity.

One might wonder what happens if $q_i(z_i)$ is in a very general form. Unfortunately, we found no satisfactory property for $h_i(\zspeed_i)$ in this case, and we provide a counterexample in this sense.

Consider $\pi(i)=i$ and a system of polynomial ODEs with the $i$-th component being
\begin{equation}
    \dot{z}_i = f_i(z) = (z_i+2)(z_i-2)r_i(\rvz).
\end{equation}
For $h_i(\zspeed_i)$ to be admissible, the only condition is that $h'(\zspeed_i)$ must divide $(h_i(\zspeed_i)+2)(h_i(\zspeed_i)-2)$. For example, the polynomial $h(\zspeed_i)= \zspeed_i^3 -3\zspeed_i$ fulfills these requests, and if restricted to an appropriate domain, constitutes a diffeomorphism. Even more complex cases can be constructed as, the divisibility condition provides a way to construct examples.

\paragraph{On the assumption that $h$ is polynomial.} In our theoretical results, we assume that $h$ is polynomial, which might not be true, especially in the experimental settings. We remark that, in compact spaces, any function can be approximated up to arbitrary precision by a polynomial with sufficient degree.

We investigated whether this hypothesis is necessary for our theoretical analysis. In particular, we tried to understand if, given $\dot{z}=f(z)$ polynomial and $\dot{\zspeed} = \tilde{f}(\zspeed)$ polynomial as well, it is necessary that $h(\zspeed)$ is polynomial. Unfortunately, this is not the case, which makes the assumption necessary. Consider
\begin{equation}\label{eqn:thm_counter_start_ode}
    \dot{z} = f(z) = 1,
\end{equation}
Now, consider the mapping $z=h(\zspeed) = \arctan(\zspeed)$. By applying the change of variable, we have
\begin{equation}
    \frac{1}{1+\zspeed^2}\dot{\zspeed} = 1
\end{equation}
and, consequently, $\dot{\zspeed} = 1+\zspeed^2$. Both ODEs are polynomial, but the diffeomorphisms $h:\R\rightarrow (-\pi/2,\pi/2)$ and its inverse are not polynomial.

\section{\modelnameshort{} Implementation Details}\label{app:implementation}
In this Section, we describe the implementation of \modelnameshort{} for the SINDy and MNN loss cases.

In each case, the architecture of \modelnameshort{} consists of a set of $d$ component-wise encoders $\speedenc_i$ and $d$ component-wise decoders $\speeddec_i$, as well as the set of ODE coefficients $\mathbf{\Xi}$ for the library $\mathbf{\Theta}(\vzspeed)$. The encoders map each variable $\vzcrl_i$ coming from the disentangled CRL representation to a new encoding $\zspeed_i = \speedenc_i(\zcrl_i)$, while the decoders aim to recover the CRL variables $\zcrl_i$. The ODE coefficients $\Xi$ define the learned ODE as $\dot{\vzspeed} = \mathbf{\Theta}(\vzspeed)^\top\Xi$, similar to \citep{Brunton2016DiscoveringGoverningEquations, Champion2019DatadrivenDiscoveryCoordinates, Chen2024ScalableMechanisticNeural}. Below, we describe the two specific modes we used to train \modelnameshort{}.
\paragraph{SINDy-based loss: \modelnameshort{} (X+S)} We use a similar approach to \citet{Champion2019DatadrivenDiscoveryCoordinates}, where the key idea is to learn a SINDy-like loss $\left\|\dot{\vzspeed} - \mathbf{\Theta}(\vzspeed)^\top\Xi\right\|_2$ both on the learned representations $\vzspeed$ and on the original variables $\vzcrl$. To calculate this loss, we need to obtain the reference values for the derivatives $\dot{\vzspeed}$, which can be obtained via the chain rule $\dot{\vzspeed} = (\nabla_{\vzcrl}\speedenc)(\dot{\vzcrl})$. In contrast, the derivatives of the CRL-encoded trajectories $\dot{\vzcrl}$ are calculated via finite differences with the pysindy library \citep{DeSilva2020PySINDyPythonPackage, Kaptanoglu2022PySINDyComprehensivePython}.
The complete loss reads as:
\begin{equation}\label{eqn:speed_sindy_loss}
    \begin{split}
        \mathcal{L}_{\text{\modelnameshort{}(X+S)}} = & \sum_{i=1}^d\left\|\speeddec_i\left(\speedenc_i(\zcrl_i)\right) - \zcrl_i\right\|_2^2 +\beta_{\rvz} \left\|(\nabla_{\vzcrl}\speedenc)(\dot{\vzcrl}) - \Theta(\vzspeed)^\top\Xi\right\|_2^2 + \\
        & +\beta_{\rvx}\left\|\dot{\vzcrl} - (\nabla_{\vzspeed}\speeddec)(\Theta(\vzspeed)^\top\Xi)\right\|_2^2 + \beta_1\left\|\Xi\right\|_{1},
    \end{split}
\end{equation}
where the names of the weights are chosen for consistency with those of MNNAE and SINDyAE.
The first term imposes a low reconstruction error on the autoencoders, the second is the SINDy loss on $\dot{\vzspeed}$ with the push-forwarded derivatives from $\dot{\vzcrl}$, and the third term calculates the SINDy loss on the original variables $\vzcrl$, based on the push-forward of the SINDy derivatives through the decoder $\dot{\vzcrl}_{\text{SINDy}} = (\nabla_{\vzspeed}\speeddec)(\Theta(\vzspeed)^\top\Xi)$. The last term induces sparsity with the $L^1$ norm, while sequential thresholding is applied every $100$ epochs.

\paragraph{MNN-based loss: \modelnameshort{} (X+M)} In case of the MNN \citep{Pervez2024MechanisticNeuralNetworks, Chen2024ScalableMechanisticNeural}, derivatives are not necessary for both $\vzspeed$ and $\vzcrl$. Instead, the MNN calculates the trajectory from the initial condition and the ODE given by $\dot{\vzspeed} = \mathbf{\Theta}(\vzspeed)^\top\Xi$. The complete loss reads at
\begin{equation}\label{eqn:speed_mnn_loss}
    \begin{split}
        \mathcal{L}_{\text{\modelnameshort{}(X+M)}} = & \sum_{i=1}^d\left\|\speeddec_i\left(\speedenc_i(\zcrl_i)\right) - \zcrl_i\right\|_2^2 + \beta_{\rvz}\left\|\vzspeed - \text{MNN}(\vzspeed(0))\right\|_2^2 + \\ & + \beta_{\rvx}\left\|\vzcrl - \speeddec(\text{MNN}(\speedenc(\vzcrl(0)))\right\|_2^2 + \beta_1\|\Xi\|_1,
    \end{split}
\end{equation}
where the names of the weights are chosen for consistency with those of MNNAE and SINDyAE.
Similar to the above case, the third term ensures that the AE and the MNN model work together to reconstruct the trajectories from the initial condition, avoiding the collapse of the learned representation. As in the SINDy-based loss, the last term induces sparsity with the $L^1$ norm, while sequential thresholding is applied every $100$ epochs. For this model, the training dataset is generated by unfolding trajectories of a fixed length from the available trajectories, i.e., considering almost every point in the dataset as an initial condition.

\section{Baselines}\label{app:baselines}
Here we provide the details on the definitions and implementation of the employed baselines.

\paragraph{SINDyAE.} SINDyAutoencoder was introduced in \citet{Champion2019DatadrivenDiscoveryCoordinates} to recover latent coordinates and governing equations in a single step. It consists of
\begin{itemize}
    \item an Autoencoder, i.e., an encoder $\vzspeed = \sindyenc(\rvz)$ and a decoder $\tilde{\rvx}=\sindydec(\vzspeed)$, both of which are implemented with MLPs.
    \item An underlying SINDy model, consisting of a library of functions $\Theta(\vzspeed)\in\R^{L\times d}$ and the learnable linear coefficients $\Xi\in\R^L$ that define the ODE as $\Theta(\vzspeed)^\top \Xi$.
\end{itemize}
The model is trained with the following loss:
\begin{equation}\label{eqn:sindyae_loss_app}
    \begin{split}
        \mathcal{L}_{\text{SINDyAE}} = & \left\|\rvx - \sindyenc\sindydec(\rvx)\right\|_2^2 + \beta_{\rvx}\left\|\dot{\rvx} - (\nabla_{\vzspeed} \sindydec)(\Theta(\vzspeed)^\top\Xi)\right\|_2^2 + \\ & + \beta_{\rvz}\left\|(\nabla_\rvx\sindyenc)(\dot{\rvx}) - \Theta(\vzspeed)^\top\Xi\right\|_2^2 + \beta_{1}\left\|\Xi\right\|_{1},
    \end{split}
\end{equation}
where each term represents, in order:
\begin{itemize}
    \item The reconstruction of the autoencoder.
    \item The loss on the derivative of $\rvx$, which is compared to the pushforward of the predicted derivative $\dot{\vzspeed}=\Theta(\vzspeed)^\top \Xi$ through the decoder $\sindydec$ with the chain rule.
    \item The loss on the derivative of $\rvz$, which compares the predicted derivative $\dot{\vzspeed}=\Theta(\vzspeed)^\top \Xi$ with the pushforward of the ground truth one $\dot{\rvx}$ through the encoder $\sindyenc$.
    \item The sparsity loss on $\Xi$.
\end{itemize}
To actually sparsify the equations, every $500$ epochs, the coefficients in $\Xi$ below a certain threshold are zeroed and masked. We use the original implementation available on SINDyAutoencoder's GitHub. %

\paragraph{MNNAE.} We employ the original MNN model from \cite{Pervez2024MechanisticNeuralNetworks, Chen2024ScalableMechanisticNeural} and combine it with an Autoencoder, i.e., similarly to the PDE solving and Discovery of Physical Parameters experiments in \citet{Pervez2024MechanisticNeuralNetworks}. MNNAE consists of:
\begin{itemize}
    \item an Autoencoder, i.e., an encoder $\vzspeed = \mnnenc(\rvz)$ and a decoder $\tilde{\rvx}=\mnndec(\vzspeed)$, both of which are implemented with MLPs.
    \item An underlying Mechanistic Neural Network, as detailed in the Lorenz experiment from \cite{Pervez2024MechanisticNeuralNetworks}. A mechanistic encoder inputs a trajectory and predicts the coefficients and parameters of the underlying ODE. The fast ODE solver of \citet{Chen2024ScalableMechanisticNeural} solves the ODE as a constrained optimization problem and predicts the trajectory from the initial condition, which is then matched to the data.
\end{itemize}
As a preprocessing step, the model unrolls trajectories with $t=50$ steps per batch, resulting in batches of shape $(B,t,D)$. The model is trained with the following loss:
\begin{equation}\label{eqn:mnnae_loss_app}
    \begin{split}
        \mathcal{L}_{\text{MNNAE}} = & \left\|\rvx - \mnnenc\mnndec(\rvx)\right\|_2^2 + \beta_{\rvz}\left\|\vzspeed - \text{MNN}(\vzspeed(0))\right\|_2^2 + \\ & + \beta_{\rvx}\left\|\rvx - \mnndec(\text{MNN}(\vzspeed(0))\right\|_2^2 + \beta_1\|\Xi\|_1,
    \end{split}
\end{equation}
where each term represents, in order:
\begin{itemize}
    \item The reconstruction of the autoencoder.
    \item The loss between the true encoded trajectory $\vzspeed$ and the one predicted by the MNN from the initial condition of the batch $\text{MNN}(\vzspeed(0))$.
    \item The loss between the true high-dimensional trajectory $\rvx$ and the decoded trajectory predicted by the MNN: $\mnndec(\text{MNN}(\vzspeed(0)))$.
    \item The sparsity loss on $\Xi$.
\end{itemize}
Here, MNN$(\vzspeed(0)$ represents the solution of the ODE defined by the mechanistic encoder for the same number of steps as the batch, i.e., 50.
Similar to SINDyAE, the equation is sparsified by thresholding coefficients below $0.1$ every $100$ epochs, for $10$ times in total, as in \cite{Pervez2024MechanisticNeuralNetworks}. We use the original implementation available on GitHub. %

\paragraph{Long Short-Term Memory.} We implement an LSTM \cite{Hochreiter1997LongShortTermMemory} with linear readout layer and dropout with $p=0.1$. To train the LSTM, we roll out the time series in batches of $50$ steps. The LSTM receives as input the first $n_{\text{in}}$ steps of the $\rvx$ batch, and predicts the following $50 - n_{\text{in}}$ steps autoregressively. We select the best model on the validation trajectories, choosing between $\{1,2,3\}$ layers and $n_{\text{in}} \in \{1, 10, 25, 49\}$ with the same batch size of \modelnameshort{} models. LSTMs are trained for $500$ epochs with the ADAM optimizer and learning rate $0.001$.

\paragraph{Transformer.} We use a Transformer encoder model \citep{Vaswani2017AttentionAllYou} with linear readout and positional encoding to predict the next steps of $\rvx$ from previous ones. Data is processed similarly to the LSTM model, where we roll out batches of trajectories of length $50$. The Transformer receives as input the first $n_{\text{in}}$ steps and autoregressively predicts the rest of the trajectory. We train the Transformer with the ADAM optimizer and cosine warmup scheduling for $500$ epochs. We select the best model from $\{2,3\}$ layers, $\{4,8\}$ attention heads, and $n_{\text{in}}\in \{1,10,25,49\}$.

\section{Details on the CRL methodologies}\label{app:crl_details}
In this Section, we provide additional details on the CRL methods we used for the disentanglement step of our training pipeline. We remark that our work is not specifically tied to these methodologies, as any model capable of disentangling causal variables can be adopted.
\subsection{CITRIS}\label{app:crl_details/citris}
CITRIS \citep{Lippe2022CITRISCausalIdentifiability} is a CRL approach designed for temporally intervened sequences, under the assumption that each true variable can be intervened upon in the data. While the method is designed for a more general setting with possibly multi-dimensional causal variables and minimal causal variables (i.e., the smallest set of variables that can be identified), in this work, we always assume that each variable can be intervened on at any time $t$. CITRIS needs to know the variable that has been intervened on at time $t$, that is, the intervention target $I_t \in \{0,1,\ldots,d\}$, where $0$ means no intervention. For our purposes, we utilize CITRIS-NF, which consists of a pre-trained autoencoder that maps the images to a lower-dimensional space, followed by an invertible Normalizing Flow model that actually performs the disentanglement, as described in the original implementation by \citet{Lippe2022CITRISCausalIdentifiability}, which outputs the final latents $\vzcrl^t=\vzcrl(t)$. The model then consists of:
\begin{itemize}
    \item a learnable assignment function $\psi:\{1,\ldots, M\}\rightarrow \{0, \ldots, K\} $, where, in our case, both $M$ (the number of latent variables) and $K$ (the number of causal blocks, together with an additional dimension for non-intervened ones) are set to $d$, the number of true latents,
    \item an autoregressive transition prior $p_{t1}(\hat{\rvz}^t|\hat{\rvz}^{t-1},I^{t})$, which is factorized as $p(\hat{\rvz}^t|\hat{\rvz}^{t-1},I^{t}) = \prod_{i=1}^{d}p(\hat{z}^t_{\psi_i}|\hat{\rvz}^{t-1},I^{t}_i)$, so that each varible $\zh_i$ depends only on the past state and on its own intervention target,
    \item a target classifier, which is a small additional network trained to predict the intervention target $I^{t+1}$ given the past and current state, which adds stability to the training procedure.
\end{itemize}
The overall loss is given by an ELBO calculated from the reconstruction error and the KL divergence on the transition prior, weighted by a hyperparameter $\beta_{t1}$, and the classification error from the target classifier, weighted by a hyperparameter $\beta_{\text{classifier}}$.

\subsection{DMSVAE}\label{app:crl_details/dvms}
Disentanglement via Mechanism Sparsity \citep{Lachapelle2022DisentanglementMechanismSparsity, Lachapelle2026NonparametricPartialDisentanglement} assumes that the causal graph between variables at two time steps is sparse, i.e., the graph $G^z$ that represents the influences between $z_i(t-1)$ and $z_j(t)$ is sparse, or that the causal graphs $G^a$ between some observed auxiliary variables $a_i(t)$ and the state $z_j(t)$ is sparse. Since in most dynamical systems the first assumption is very uncommon (usually, each ODE $\dot{z}_i = f_i(\rvz(t))$ involves almost every variable), we exploit the second one in a similar setting to CITRIS. In fact, we use the interventional targets $I(t)$ as observed auxiliary variables $a(t)$, which leads to a causal graph $G^a$ being equal to the identity and, therefore, sparse. The identifiability comes from imposing such sparsity in the learned transition model. The full model (DMSVAE) consists of:
\begin{itemize}
    \item a variational autoencoder $\crlenc, \crldec$,
    \item a transition model $p(\zcrl_i(t)|\vzcrl(t-1),I(t))$, parametrized by $d$ MLPs,
    \item a causal graph $\hat{G}^a$, learned by binary masks applied on the transition model inputs, gating which variables it can use to predict the next one.
\end{itemize}
The model is trained with an ELBO loss based on the reconstruction error and the transition prior, weighted by $\beta_{t1}$, and a sparsity loss (the $L^0$ norm), on the causal graph $\hat{G}^a$, weighted by $\beta_{\text{sparse}}$.

\section{Details on the Experimental Setup}\label{app:experimental_details}
In this Section, we describe the complete setup of our experiments, from data generation to model selection and training, and provide the specific parameters and hyperparameters of each experiment.

\paragraph{Data Generation.} In the experiments, we consider two kinds of trajectories: standard and interventional ones. To generate the first, we consider the ground truth ODE $\dot{\rvz} = \vf(\rvz)$ and sample $n_\text{train}, n_{\text{val}}, n_{\text{test}}$ initial conditions respectively for the training, validation, and test datasets. Each trajectory is calculated from the ODE and initial condition up to time $T$ with time step $\Delta t$ using the Runge-Kutta 45 solver from the SciPy library \citep{Virtanen2020SciPy10Fundamental}. 

To generate the interventional trajectories, we instead iterate through the time dimension and, at each time step, choose a variable to intervene on with the same probability $p_{\text{intervention}}=0.01$, or choose not to intervene with the remaining probability. If a variable $i$ is selected, its next value $z_i(t+\Delta t)$ is randomly sampled, while the others evolve normally from $\rvz(t)$. If no variable is selected, all of them evolve normally. This is the same procedure used in CITRIS \citep{Lippe2022CITRISCausalIdentifiability}, which we also adapt for DMSVAE \citep{Lachapelle2022DisentanglementMechanismSparsity}. Starting from a randomly sampled initial condition, we generate $s_{\text{train}}, s_{\text{val}}, s_{\text{test}}$ time steps for the training, validation, and test interventional trajectories, respectively. We provide the specific parameters of data generation in Table \ref{tab:data_generation}. To ensure that the standard trajectories are in-distribution for the CRL models, we first generate the interventional ones, and then generate standard ones until we have enough withing within the bounds of the interventional data. Finally, all trajectories $\rvz(t)$ are mapped to the higher-dimensional ones $\rvx(t)$ via an experiment-specific mapping, which we describe in the following appendices.
\begin{table}
    \centering
    \caption{Data generation parameters.}
        \begin{tabular}{lccc}
            \toprule
            \textbf{Parameter} & Lotka-Volterra & Lorenz & Pendulum \\
            \midrule
            $n_{\text{train}}$ & $20$ & $1024$ & $100$ \\
            $n_{\text{val}}$ & $5$ & $256$ & $25$ \\
            $n_{\text{test}}$ & $5$ & $256$ & $25$ \\
            $T$ & $50$ & $5$ & $10$ \\
            $\Delta t$ & $0.01$ & $0.02$ & $0.02$ \\
            num. steps & $5000$ & $250$ & $500$ \\
            Sampled Variables & $z_1,z_2$ & $z_1,z_2,z_3$ & $z_1,\dot{z_1}, z_2, \dot{z_2}$\\
            Sample distribution & Uniform $[0,3]^2$ & See \citet{Champion2019DatadrivenDiscoveryCoordinates} & Uniform $[-2,2]^4$\\
            $s_{\text{train}}$ & $100000$ & $256000$ & $50000$ \\
            $s_{\text{val}}$ & $25000$ & $64000$ & $12500$ \\
            $s_{\text{test}}$ & $10000$ & $25600$ & $5000$ \\
            \bottomrule
        \end{tabular}
        \label{tab:data_generation}
\end{table}

\paragraph{CRL model training.}
To obtain disentangled representations $\vzcrl$, we use CITRIS \citep{Lippe2022CITRISCausalIdentifiability} or DMSVAE \citep{Lachapelle2022DisentanglementMechanismSparsity}, which we discuss in Appendix \ref{app:crl_details}. After training, CRL models have learned an encoder $\vzcrl = \psi_{\text{CRL}}^{\text{enc}}(\rvx)$ as well as a permutation $\pi$ that assigns the true latent $z_i$ to the corresponding found latent $\zh_{\pi(i)}$. In general, such an optimal permutation can be found via the best mean correlation coefficient across permutations. On validation trajectories, the $R^2$ between each $z_i$ and the corresponding $\zh_{\pi(i)}$ is calculated, and the average across the variables is taken. We select the model with the highest statistic from a simple grid search on hyperparameters with values in $\{0.001, 0.01, 0.1, 1.0, 10.0\}$. We report the best combinations in Table \ref{tab:crl_models_params}. Details on the architectures for each experiment are provided in the following Appendices. The setup for both models is the same as in \citet{Lippe2022CITRISCausalIdentifiability, Lippe2023BISCUITCausalRepresentation}.

We train the CRL methods on every dataset, except for Lotka-Volterra with $\alpha=\frac{6}{5}, \beta=\frac{3}{2}$, where we used the one from the other Lotka-Volterra experiment to show representation transferability.
\begin{table}
    \centering
    \caption{CRL model hyperparameters. Lotka-Volterra with $\alpha=\frac{6}{5}, \beta=\frac{3}{2}$ values are not reported, as in this experiment, we use the same pre-trained CRL models coming from $\alpha = \frac{2}{3}, \beta=\frac{4}{3}$.}
    \begin{adjustbox}{width=0.9\linewidth}
        \begin{tabular}{lccccc}
            \toprule
            \multicolumn{6}{c}{\textbf{CITRIS}}\\
            {Hyperparameter} & Lotka-Volterra & Lotka-Volterra & Lorenz (Stable) & Lorenz (Chaotic) & Pendulums \\
             & $\alpha = \frac{2}{3}, \beta=\frac{4}{3}$ & $\alpha=\frac{6}{5}, \beta=\frac{3}{2}$ & $\rho=14$ & $\rho=28$ \\
            \midrule
            $\beta_{\text{t1}}$ & $10$ & / & $0.01$ & $0.1$ & $0.01$ \\
            $\beta_{\text{classifier}}$ & $1$ & / & $0.001$ & $0.01$ & $0.001$\\
            \midrule
            \midrule
            \multicolumn{6}{c}{\textbf{DMSVAE}}\\
            {Hyperparameter} & Lotka-Volterra & Lotka-Volterra & Lorenz (Stable) & Lorenz (Chaotic) & Pendulums \\
             & $\alpha = \frac{2}{3}, \beta=\frac{4}{3}$ & $\alpha=\frac{6}{5}, \beta=\frac{3}{2}$ & $\rho=14$ & $\rho=28$ \\
            \midrule
            $\beta_{\text{t1}}$ & $1.0$ & / & $1.0$ & $1.0$ &  / \\
            $\beta_{\text{sparse}}$ & $0.01$ & / & $0.01$ & $0.01$ & / \\
            \bottomrule
        \end{tabular}
        \label{tab:crl_models_params}
    \end{adjustbox}
\end{table}

\paragraph{\modelnameshort{} training} Once the CRL model is trained, we consider the high-dimensional standard trajectories $\rvx$ we generated (with no interventions), and encode them through the CRL model $\vzcrl = \psi_{\text{CRL}}^{\text{enc}}(\rvx)$. Then, we permute the latents following the learned assignments $\pi$, so that we can assume $\zh_i$ corresponds to $z_i$ with $\pi(i)=i$. Then, for the SINDy-based versions of \modelnameshort{}, we calculate the derivatives of these trajectories using second-order finite differences. For \modelnameshort{} models with the MNN-based loss, we instead unroll trajectories with $t=50$ steps per batch, similar to \cite{Pervez2024MechanisticNeuralNetworks}, obtaining batches with shape $(B,t,d)$.
Then, model selection is performed on the hyperparameters of each version of \modelnameshort{}: $\beta_{\rvz}$ and $\beta_{\rvx}$ are chosen from $\{10^{-5}, 10^{-4}, 0.001, 0.01, 0.1, 1.0, 10.0\}$, while $\beta_1$ is chosen from $\{0.001, 0.0001, 0.00001\}$. Final values are reported in \Cref{tab:speedae_model_params}. To select the best model, we consider forecasting performance from the initial condition on validation trajectories. In particular, we first encode the validation trajectories through the CRL model $\vzcrl = \psi_{\text{CRL}}^{\text{enc}}(\rvx)$ and the \modelnameshort{} one $\zspeed_i(0)=\phi_i^{\text{enc}}(\zh_i(0))$. Then,  the trajectory is predicted using the Runge-Kutta 45 solver on the learned ODE $\dot{\vzspeed} = \Theta(\vzspeed)^\top\Xi$. Finally, the whole trajectories are decoded via the component-wise decoders $\phi_i^{\text{dec}}$ and compared to the original values of $\vzcrl$. Table \ref{tab:speedae_model_params} reports the best combinations for each \modelnameshort{} model, which depend on the CRL model as well. In each case, the component-wise encoders and decoders are MLPs with $2$ layers of $64$ units each. The models are trained with the ADAM optimizer \citep{Kingma2017AdamMethodStochastic} with a learning rate of $0.0005$ for the SINDy-based loss and $0.0001$ for the MNN-based one. The models are trained for $1000$ epochs with sequential thresholding: every $100$ epochs, coefficients $\Xi_i$ below $0.1$ are zeroed and masked. The batch size is chosen so that each epoch consists of around $100$ steps and is reported in \Cref{tab:speedae_model_params} for each experiment.
\begin{table}
    \centering
    \caption{\modelnameshort{} models hyperparameters.}
    \begin{adjustbox}{width=0.9\linewidth}
        \begin{tabular}{lccccc}
            \toprule
            \multicolumn{6}{c}{\textbf{\modelnameshort{} (CITRIS + SINDy-based loss)}}\\
            {Hyperparameter} & Lotka-Volterra & Lotka-Volterra & Lorenz (Stable) & Lorenz (Chaotic) & Pendulums \\
             & $\alpha = \frac{2}{3}, \beta=\frac{4}{3}$ & $\alpha=\frac{6}{5}, \beta=\frac{3}{2}$ & $\rho=14$ & $\rho=28$ \\
            \midrule
            $\beta_{\rvx}$ & $0.01$ & $0.01$ & $0.01$ & $0.01$ & $0.01$ \\
            $\beta_{\rvz}$ & $0.001$ & $0.0001$ & $0.001$ & $0.0001$ & $0.0001$\\
            $\beta_{1}$ & $0.0001$ & $0.0001$ & $0.00001$ & $0.0001$ & $0.0001$\\
            batch\_size & 4096 & 4096 & 2048 & 2048 & 512 \\
            \midrule
            \midrule
            \multicolumn{6}{c}{\textbf{\modelnameshort{} (DMSVAE + SINDy-based loss)}}\\
            {Hyperparameter} & Lotka-Volterra & Lotka-Volterra & Lorenz (Stable) & Lorenz (Chaotic) & Pendulums \\
             & $\alpha = \frac{2}{3}, \beta=\frac{4}{3}$ & $\alpha=\frac{6}{5}, \beta=\frac{3}{2}$ & $\rho=14$ & $\rho=28$ \\
            \midrule
            $\beta_{\rvx}$ & $0.01$ & $0.01$ & $0.001$ & $0.001$ & / \\
            $\beta_{\rvz}$ & $0.001$ & $0.0$ & $0.01$ & $0.001$ & / \\
            $\beta_{1}$ & $0.00001$ & $0.00001$ & $0.001$ & $0.00001$ & / \\
            batch\_size & 4096 & 4096 & 2048 & 2048 &/ \\
            \midrule
            \midrule
            \multicolumn{6}{c}{\textbf{\modelnameshort{} (CITRIS + MNN-based loss)}}\\
            {Hyperparameter} & Lotka-Volterra & Lotka-Volterra & Lorenz (Stable) & Lorenz (Chaotic) & Pendulums \\
             & $\alpha = \frac{2}{3}, \beta=\frac{4}{3}$ & $\alpha=\frac{6}{5}, \beta=\frac{3}{2}$ & $\rho=14$ & $\rho=28$ \\
            \midrule
            $\beta_{\rvx}$ & $0.01$ & $0.1$ & $0.01$ & $0.01$ & / \\
            $\beta_{\rvz}$ & $10.0$ & $10.0$ & $10.0$ & $1.0$ & / \\
            $\beta_{1}$ & $0.001$ & $0.001$ & $0.001$ & $0.0001$ & /\\
            batch\_size & 4096 & 4096 & 2048 & 2048 & / \\
            \midrule
            \midrule
            \multicolumn{6}{c}{\textbf{\modelnameshort{} (DMSVAE + MNN-based loss)}}\\
            {Hyperparameter} & Lotka-Volterra & Lotka-Volterra & Lorenz (Stable) & Lorenz (Chaotic) & Pendulums \\
             & $\alpha = \frac{2}{3}, \beta=\frac{4}{3}$ & $\alpha=\frac{6}{5}, \beta=\frac{3}{2}$ & $\rho=14$ & $\rho=28$ \\
            \midrule
            $\beta_{\rvx}$ & $0.01$ & $0.01$ & $0.01$ & $0.1$ & /\\
            $\beta_{\rvz}$ & $10.0$ & $0.1$ & $1.0$ & $1.0$ & /\\
            $\beta_{1}$ & $0.00001$ & $0.00001$ & $0.00001$ & $0.00001$ & /\\
            batch\_size & 4096 & 4096 & 2048 & 2048 & /\\
            \bottomrule
        \end{tabular}
        \label{tab:speedae_model_params}
    \end{adjustbox}
\end{table}

\paragraph{SINDyAE baseline.} The SINDyAutoencoder model implementation is taken directly from \citet{Champion2019DatadrivenDiscoveryCoordinates}. Training follows the same procedure as in their original work. The loss weights are chosen in $\{10^{-5}, 10^{-4}, 0.001, 0.01, 0.1, 1.0, 10.0\}$ for the Lotka-Volterra and Pendulums experiments, while we use the authors' parameters for Lorenz, since the standard trajectory generation is the same. To select the best model, we first encode the validation trajectories $\rvx(t)$ in the learned representation space $\vzspeed(t)$. We calculate a prediction of these trajectories from the initial condition $\vzspeed(0)$ and the learned ODE, obtaining $\vzspeed_{\text{pred}}(t)$, and we decode into the predicted $\rvx_{\text{pred}}(t)$. Finally we calculate the $L^2$ error between these $\rvx_{\text{pred}}$ and $\rvx_{\text{true}}$. This measures how well the learned ODEs describe the encoded data. Models consist of MLPs with the same architecture as \modelnameshort{} for direct comparability. The model is trained with the ADAM optimizer with learning rate $0.0001$ (as in \cite{Champion2019DatadrivenDiscoveryCoordinates}), and the same batch size as \modelnameshort{}. Models are trained for $5000$ epochs with thresholding every $500$, and $1000$ epochs of refinement (same as \cite{Champion2019DatadrivenDiscoveryCoordinates}).

\paragraph{MNNAE baseline.} The implementation of the MNN is the one of \cite{Pervez2024MechanisticNeuralNetworks, Chen2024ScalableMechanisticNeural}, where the underlying MNN is coupled with an Autoencoder, as detailed in \Cref{app:baselines}. The loss weights are chosen among $\{0.001, 0.01, 0.1, 1.0, 10.0\}$ for $\beta_{\rvz},\beta{\rvz}$ and $\{0.001, 0.0001, 0.00001\}$ for $\beta_1$. The model selection procedure follows that of SINDyAE. The architecture is the same as \modelnameshort{} for direct comparability. Finally, MNNAE is trained with the ADAM optimizer and learning rate $0.0005$, with batch size, number of epochs, and sequential thresholding as those of \modelnameshort{}.

\paragraph{Evaluation.} The final evaluation is performed on the test trajectories following the evaluation pipeline in Appendix \ref{app:evaluation}.

\subsection{Hardware and Computational Costs}\label{app:computational}
For all experiments, we use a machine with NVIDIA H100 GPUs, and each model has access to one GPU and at most 2 CPUs. We report the training time for the Lotka-Volterra experiment, for reference, in \Cref{tab:computational_times}. Thus, the full \modelnameshort{} pipeline (including disentanglement), requires approximately the same or less time than the other two baselines (SINDyAE and MNNAE).

\begin{table}
    \centering
    \caption{Training time for each model on the Lotka-Volterra experiment.}
        \begin{tabular}{lc}
            \toprule
            \textbf{Model} & Training Time (mins) \\
            \midrule
            LSTM & $\approx 45$ \\
            Transformer & $\approx 62$\\
            \midrule
            SINDyAE &  $\approx 100$ \\
            MNNAE &  $\approx 191$ \\
            \midrule
            Autoencoder (CITRIS-NF) & $\approx 22$\\
            CITRIS-NF & $\approx 52$\\
            DMSVAE & $\approx 41$ \\
            \modelnameshort{} SINDy-based & $\approx 57$ \\
            \modelnameshort{} MNN-based & $\approx 39$ \\
            \bottomrule
        \end{tabular}
        \label{tab:computational_times}
\end{table}

\section{Evaluation Pipeline}\label{app:evaluation}
In this Section, we detail the two protocols used for evaluating the performance of each model in terms of ODE discovery and trajectory prediction. A detailed graphical representation is provided in Figure \ref{fig:evaluation_pipeline}, which we use in the next paragraphs to describe the protocols. The green boxes represent the true latent variables $\rvz$ and the ground truth ODEs $\dot{\rvz}=\vf(\rvz)$, which are kept unknown during training and used only during evaluation. Models have access only to the images/high-dimensional data $\rvx(t)$ in the top-left corner, while the blue boxes on the right represent their outputs, namely the encodings $\vzspeed(t)$ and the recovered ODEs in terms of $\vzspeed$.

Each model learns a specific set of latent variables, e.g. $\vzspeed_{\text{SINDyAE}}, \vzspeed_{\text{\modelnameshort{}(D+S)}}$, which live in different spaces, each with its own ODE, making them not directly comparable. To evaluate them on a common ground, our approach consists of selecting a representative latent space, where the encoded trajectories and recovered ODEs are converted. Since we want to stay unbiased towards which methodology should be preferred, we make the most natural choice possible, which consists of using the true latent space of $\rvz$ as a common evaluation space. Since the true latent variables $\rvz$ are used only during this evaluation, the models will have no access to them before this phase.

\paragraph{Evaluation protocol: Correlation Discrepancy.} We measure how well the encoded data $\vzspeed$ is correlated to the true latents $\rvz$ by comparing the true Pearson's correlation coefficient matrix $\rho_{ij} = \frac{\text{cov}(z_i,z_j)}{\sigma_{z_i}\sigma_{z_j}}$ and its cross version between true and found latents $\tilde{\rho}_{ij} = \frac{\text{cov}(z_i,\zspeed_j)}{\sigma_{z_i}\sigma_{\zspeed_j}}$. Once these matrices are calculated on the train or test trajectories, we compare the two matrices by averaging the absolute error (Correlation Discrepancy) $\text{CorrD} = \frac{1}{N^2}\sum_{i,j=1}^{N}|\rho_{ij}-\tilde{\rho}_{ij}|$.

\paragraph{Evaluation protocol: time-series prediction.} With the second set of metrics, we evaluate how well the models and, consequently, the learned ODEs, can predict the long-term behavior of the dynamical systems from just their initial conditions. Hence, we let each model encode the given initial condition $\rvx(0)$ and evolve it via the dynamics it has learned, obtaining the full trajectory $\vzspeed_{\text{pred}}(t)$. One natural metric to calculate is the error in the final prediction of $\rvx(t)$, obtained by decoding the predicted trajectory in the latent space onto the higher-dimensional space. However, to gain insight into the learned latents and their dynamics, we also want to evaluate them directly against the true ones $\rvz(t)$. As above, we need to evaluate the models on a reference space, that is, the space of true latents $\rvz$. However, here we do not need to stick to polynomial mappings and transformations, as we instead care solely about the predictive capabilities. Hence, we learn an oracle MLP for each model that converts the model latents $\vzspeed_{\text{pred}}(t)$ into the true ones. Then, the time-averaged $L^2$ distance
\begin{equation}
    \frac{1}{n_{\text{steps}}} \sum_{i=1}^{n_{\text{steps}}}\|\text{MLP}(\vzspeed(t_i)) - \rvz(t_i)\|^2_2
\end{equation}
or any other metric between true and converted latents can be calculated. While CRL models provide an assignment between true and found latents, for SINDyAE and MNNAE we consider the permutation given by the best Mean Correlation Coefficient, as described above.
\begin{figure}
    \centering
    \includegraphics[width=\linewidth]{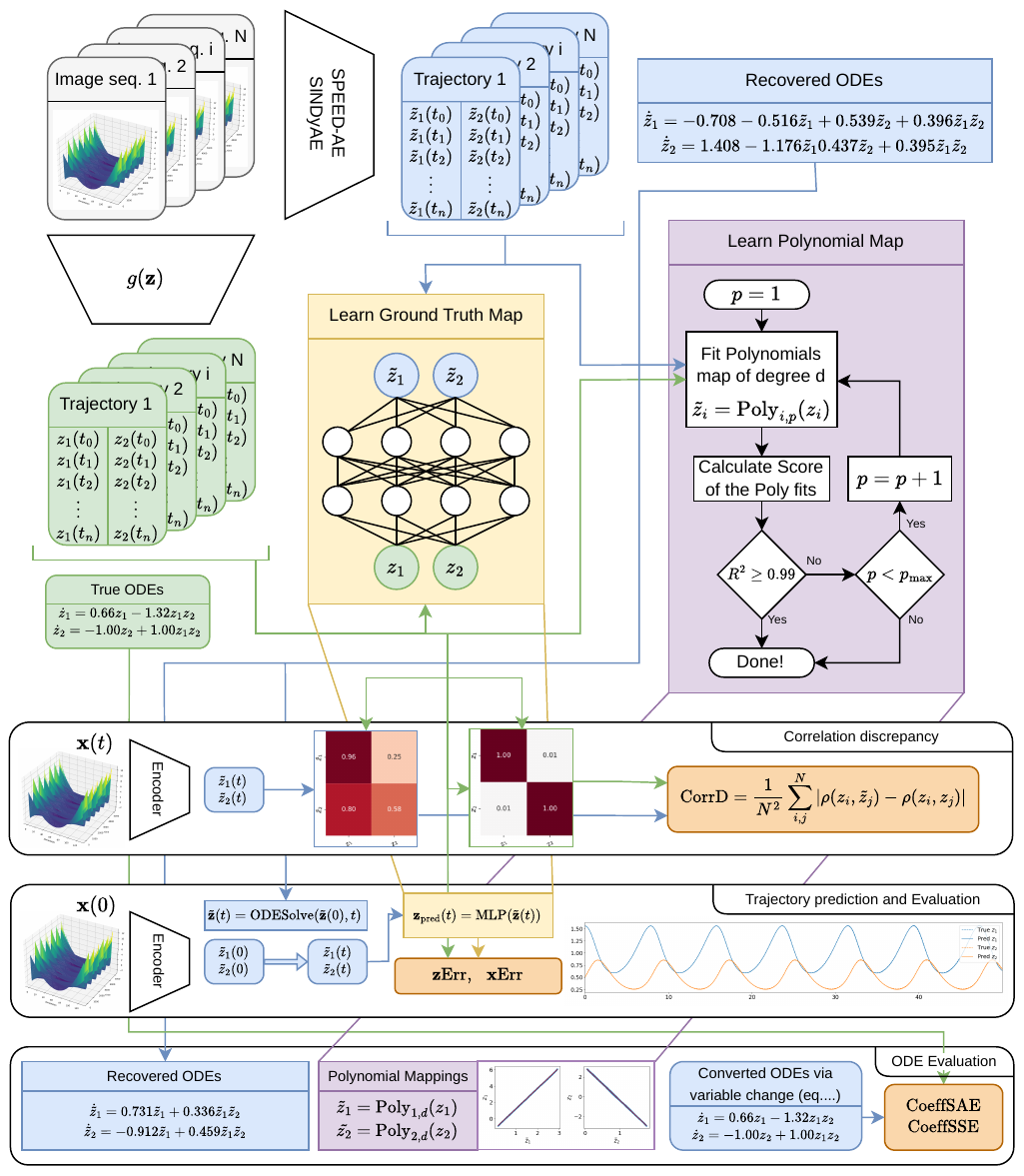}
    \caption{Evaluation pipeline of our experiments.}
    \label{fig:evaluation_pipeline}
\end{figure}

\paragraph{Evaluation protocol: ODE discovery evaluation}
Now that we have set the reference space for evaluation, we define the exact protocol to transform the discovered ODEs from the latent space of the model $\mathcal{Z}_{\text{model}}$ to the true latent space $\mathcal{Z}$. We work with the simplifications used previously, and hence we assume that the two spaces are separated by a component-wise polynomial transformation, that is $\zspeed_i = \text{Poly}_{i,\text{model}}(z_{i,\text{model}})$. This is helpful to convert the ODEs between one space and the other, preserving a polynomial representation. Furthermore, on compact spaces, polynomial functions are a complete basis and can therefore approximate any map between learned and true latents. This polynomial transformation $\zspeed_i = \text{Poly}_{i,\text{model}}(z_{i,\text{model}})$ is calculated via linear regression on polynomial features starting from degree $1$ and up to degree $3$, which we found to be enough in all cases, stopping when the score is higher than $0.99$ (purple box). This greedy approach ensures that the learned transformation is both accurate and simple, using higher-degree polynomials only if necessary. Otherwise, the transformed ODE would present more terms. Finally, the found ODE function $\dot{\vzspeed} = \tilde{f}(\vzspeed)$  (blue box in the bottom rectangle) is transformed via a change of variable with the fitted polynomial mapping (purple box), obtaining an ODE written in terms of the true latent variables $\dot{\rvz} = f_{\text{model}}(\rvz)$ (blue box). Since we always considered a polynomial library and polynomial transformations, this new ODE $f_{\text{model}}(\rvz)$ will necessarily be polynomial, possibly in implicit form. Before calculating the error, we remove coefficients below the $0.001$ threshold, which do not influece the results and provide a cleaner ODE representation. This ODE can be easily compared to the true one $f(\rvz)$ by looking at their coefficients $\Xi$ and $\Xi_{\text{model}}$ when expressed in library form. Practically, we consider the sum of absolute $\|\Xi-\Xi_{\text{model}}\|_1$ and squared $\|\Xi-\Xi_{\text{model}}\|_2^2$ values of their difference as a measure of their distance. In case an ODE is implicit, we also consider all terms $z^\ell\dot{z}_i$ in the library. Since these terms do not appear in the true ODE, we penalize models that find an ODE equivalent to an implicit one. Our theoretical discussion shows that implicit terms appear if and only if the map is monomial or polynomial with degree greater than one instead of linear, and we want to penalize modes that cannot find such simple maps and reward those that do. This way, we reward those modes that are more interpretable and achieve a better identifiability result.

SINDyAE and MNNAE present a particular challenge within this framework, as they do not provide a valid variable assignment between real and found latents. Furthermore, disentanglement is not guaranteed. To solve this issue, we do the following: we first calculate the Mean Correlation Coefficient (MCC) among all possible permutations, hence finding the best variable assignment $z_i = \zspeed_{\pi(i)}$. Then, we proceed as above from the polynomial mapping step.

\section{Ablations}\label{app:ablations}
\paragraph{Independence from the CRL method.} We isolate the main component of \modelnameshort{}, that is, the component-wise autoencoder coupled with sparse ODE recovery, to show that it is independent of the CRL method used. To do so, we assume an oracle CRL method that achieves perfect disentanglement and identifies the true latents up to permutation and component-wise diffeomorphism. To implement this, we first choose random coefficients $\alpha_{i,j}\in (0,1)$ with $i=1,\ldots,d$ and $j=0,1,3$, defining the polynomials
\begin{equation}\label{eqn:oracle_crl_poly}
    P_i(z_i) = \alpha_{i,0} + \alpha_{i,1}(z_i+1) + \alpha_{i,3}(z_i+1)^3,
\end{equation}
and the oracle CRL variables as $\zcrl_i = P_i(z_i)$. We use $(z_i+1)$ to (i) offset the variables from their original values, and (ii) have a polynomial mapping that also has a $z_i^2$ term. Given the form of \Cref{eqn:oracle_crl_poly}, their derivatives are always positive because they include only even-power terms of $(z_i+1)$ with positive coefficients, guaranteeing monotonicity and thus invertibility.

We consider the Lotka-Volterra experiment in \Cref{sec:experiments}. We calculate the $\vzcrl$ variables directly from the true latents $\rvz$ during data preparation. The rest of the experiment proceeds as described in \Cref{app:experimental_details}, starting with \modelnameshort{} training on the latents $\vzcrl$. We remark that \modelnameshort{} never has access to the true latents $\rvz$.

\begin{figure}
    \centering
    \includegraphics[width=\linewidth]{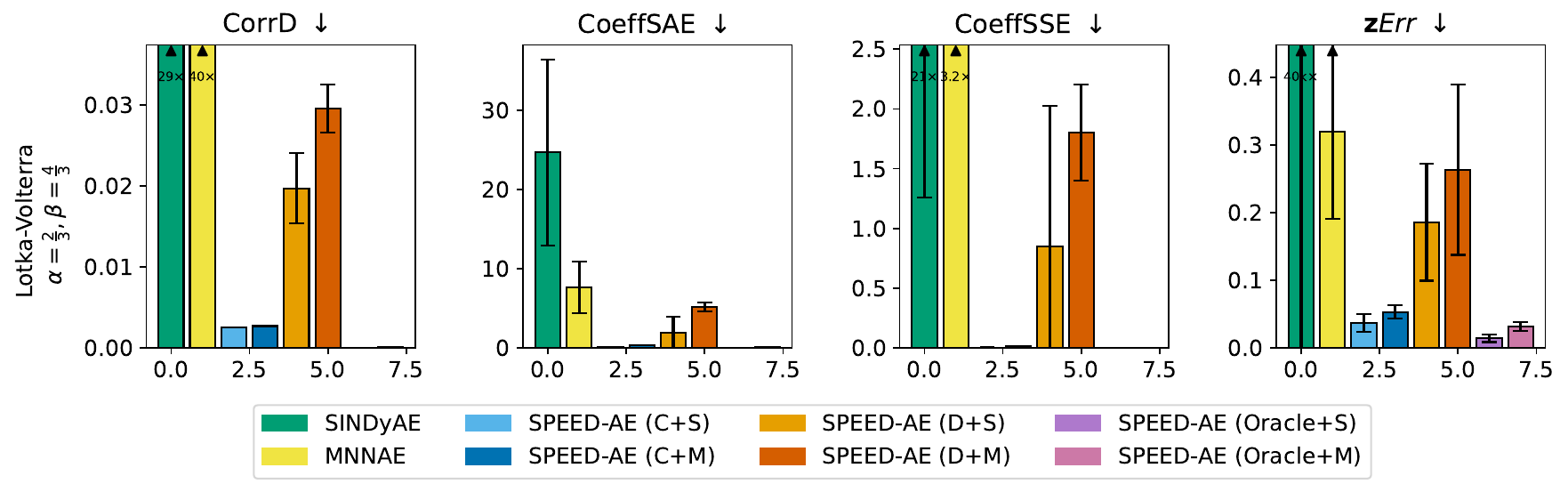}
    \caption{Experimental results for the ablation on the independence from CRL method. Lotka-Volterra dataset with an additional version of \modelnameshort{} with an oracle CRL model that achieves perfect disentanglement and SINDy-based loss.}
    \label{fig:results_lv_synth}
\end{figure}
Results are reported in \Cref{fig:results_lv_synth}. \modelnameshort{} with the SINDy-based loss (Oracle+S) and MNN-based one (Oracle+M) achieve the lowest errors across every metric. Thus, \modelnameshort{} can virtually achieve almost perfect results if the disentanglement is good enough. We notice that the MNN-based model has slightly worse performance, which is also reflected in the real results, where SINDy-based combinations of \modelnameshort{} are often better.

\paragraph{Polynomial diffeomorphisms.} 
Inspired by our discussion in \Cref{sec:identifiability} and \Cref{app:proofs}, we first experimented with a fully polynomial setting. In particular, we ran synthetic experiments similar to the ones above, with perfect disentanglement and identifiability up to polynomial transformation. The only difference is in the \modelnameshort{} encoders and decoders $\speedenc,\speeddec$, which we implemented as polynomial functions. In such a case, we would have the guarantee that the new latents $z_i$ are a polynomial transformation of the found ones, as $z_i = \crldiffeo_i \circ \speeddec_i(\zspeed_{\pi(i)}) $, where both $\speeddec_i$ and $z_i=\crldiffeo(\zcrl_{\pi(i)})$, the diffeomorphism between $z_i$ and $\zcrl_i$, are both polynomial.

Unfortunately, the results were not satisfactory. Using polynomial functions as diffeomorphisms leads to strong instabilities in the model, and we observed either blow-ups in the solutions or flat dynamics. We also tried to simplify the setting by considering two simple oracle CRL variables: $\hat{z}_i = a_i\sqrt[3]{z_i}$, for which the polynomial encoder would only need to learn to use the cubic power term, and $\hat{z}_i = a_iz_i$, i.e., the CRL variables are already an affine transformation of the true ones. Even in these cases, we were not able to effectively train the models, as the polynomial functions make it difficult to train the autoencoder and the dynamics simultaneously. We report some plots in \Cref{fig:ablations_poly}. As a consequence, we use MLPs to implement the encoder and decoders.

\begin{figure}
    \centering
    \begin{subfigure}{0.49\linewidth}
        \centering
        \includegraphics[width=\linewidth]{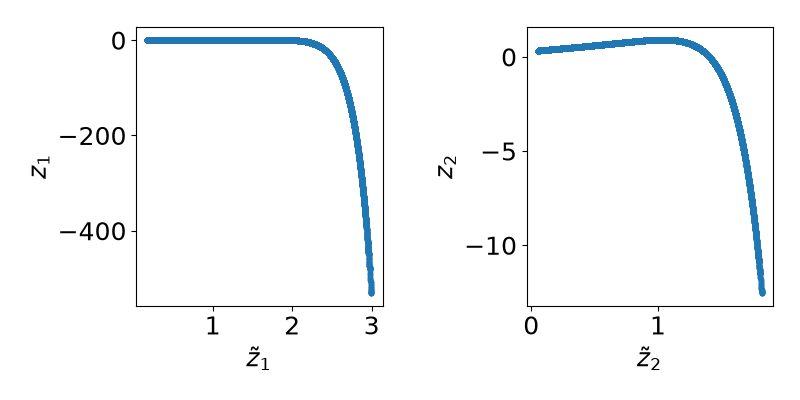}
        \caption{}
        \label{fig:oracle_mono}
    \end{subfigure}
    \begin{subfigure}{0.49\linewidth}
        \centering
        \includegraphics[width=\linewidth]{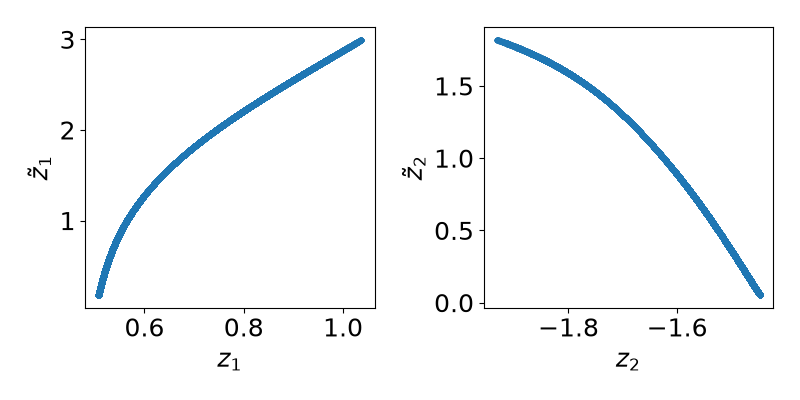}
        \caption{}
        \label{fig:oracle_gen}
    \end{subfigure}
    \caption{Results on the synthetic experiments with perfect disentanglement, polynomial identifiability in the CRL latents, and \modelnameshort{} encoder and decoders implemented as polynomials. (a) Some of the runs show strong instabilities, while others (b) remain stable but with flat dynamics. In general, the polynomial functions are hard to optimize simultaneously to the dynamics.}
    \label{fig:ablations_poly}
\end{figure}

\newpage
\section{Additional Experimental Results}\label{app:add_results}
We report the uncut version of \Cref{fig:results_full} in \Cref{fig:results_full_uncut}. In many plots, especially those related to the Correlation Discrepancy and errors in the Recovered ODE coefficients, SINDyAE and MNNAE are out of scale compared with all combinations of \modelnameshort{}.

\paragraph{Forecasting error on $\rvx$.} We report the results on the $\rvx$Err metric in \Cref{tab:x_baselines}, with the additional baselines of LSTM and Transformer. Traditional ML baselines perform well on the short trajectories of the Lorenz experiment ($250$ steps), even in the chaotic case. However, on the longer trajectories of the Lotka-Volterra experiment ($5000$ steps), the predictions collapse (see \Cref{fig:lv1_lstm_traj} and \Cref{fig:lv1_transformer_traj}) and the errors are almost $20$ times higher than \modelnameshort{}. In general, MNNAE performs well across experiments, although it fails on the chaotic case of Lorenz. However, as already discussed in \Cref{sec:experiments}, this performance comes at the cost of no identifiability of the latents.
\begin{figure}
    \centering
    \includegraphics[width=\linewidth]{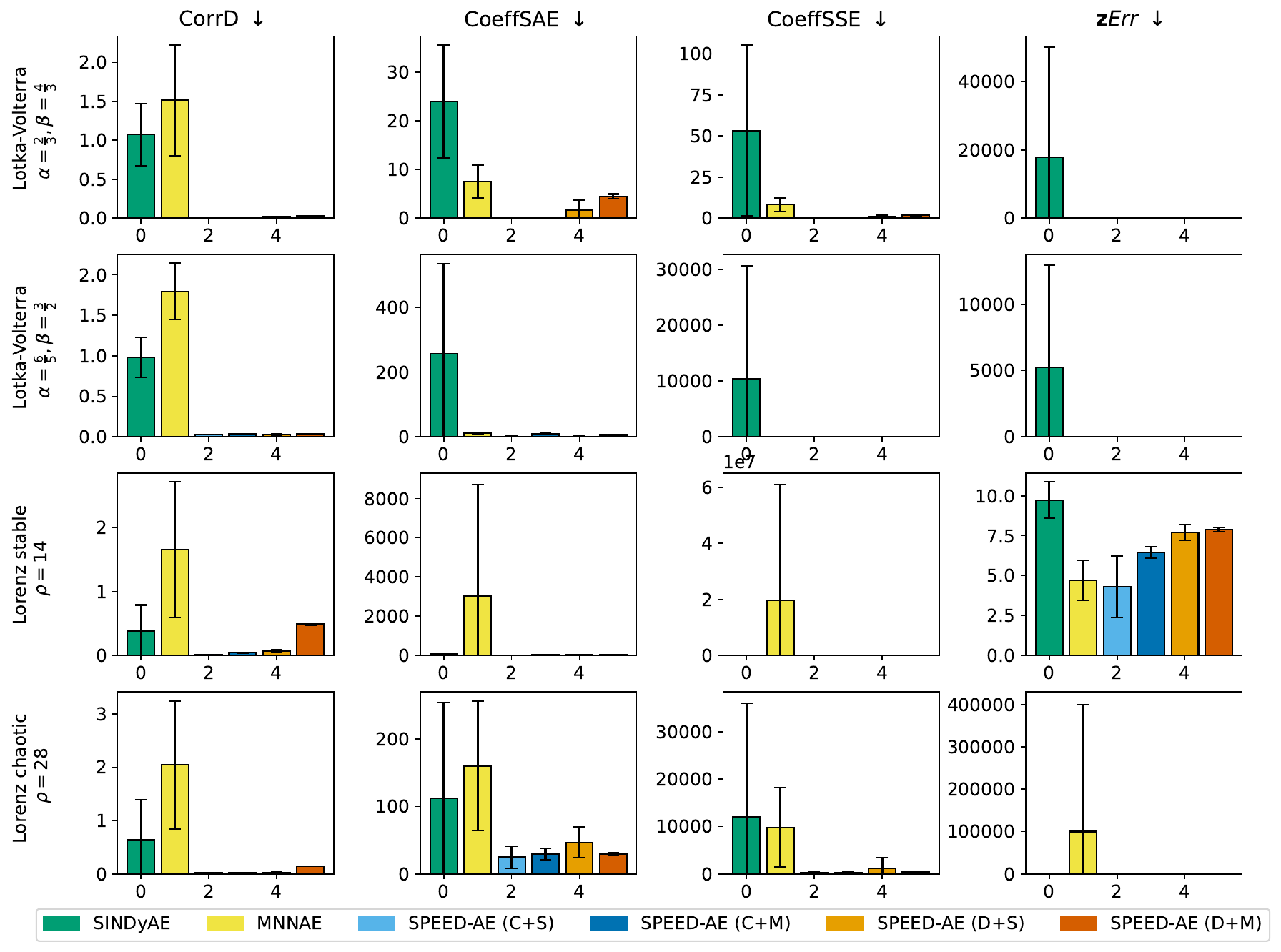}
    \caption{Experimental results for the Lotka-Volterra and the Lorenz experiments.}
    \label{fig:results_full_uncut}
\end{figure}

In the following results, we always permute the recovered latents to match the true ones so that, for the ease of readability, $\pi(i)=i$. We remark that CRL latents identify the true ones up to permutation.

\begin{table}
    \footnotesize
    \centering
    \caption{$\rvx$Err: mean and standard deviation across $10$ seeds.}
        \begin{tabular}{lcccc}
            \toprule
            \textbf{Model} & Lotka-Volterra & Lotka-Volterra & Lorenz (Stable) & Lorenz (Chaotic) \\
             & $\alpha = \frac{2}{3}, \beta=\frac{4}{3}$ & $\alpha=\frac{6}{5}, \beta=\frac{3}{2}$ & $\rho=14$ & $\rho=28$ \\
             
            \midrule
            LSTM & 13.860$_{\pm 2.609}$ & 29.313$_{\pm 14.719}$ & 0.157$_{\pm 0.056}$ & 1.624$_{\pm 0.210}$ \\
            Transformer & 16.001$_{\pm 2.076}$ & 25.999$_{\pm 6.919}$ & 0.422$_{\pm 0.076}$ & 1.902$_{\pm 0.210}$ \\
            \midrule
            SINDyAE & 23.075$_{\pm 11.611}$ & 17.610$_{\pm 2.898}$ & 1.988$_{\pm 0.323}$ & 3.137$_{\pm 0.120}$ \\
            MNNAE & 1.257$_{\pm 0.240}$ & 0.930$_{\pm 0.290}$ & 0.448$_{\pm 0.102}$ & $\approx10^5$ (failed) \\
            \midrule
            \modelnameshort{} (C+S) & 1.279$_{\pm 0.483}$ & 3.999$_{\pm 2.390}$ & 0.851$_{\pm 0.384}$ & 2.411$_{\pm 0.379}$ \\
            \modelnameshort{} (C+M) & 1.552$_{\pm 0.211}$ & 16.041$_{\pm 4.801}$ & 1.256$_{\pm 0.066}$ & 2.984$_{\pm 0.215}$ \\
            \modelnameshort{} (D+S) & 4.669$_{\pm 2.074}$ & 4.881$_{\pm 2.363}$ & 1.399$_{\pm 0.036}$ & 2.736$_{\pm 0.509}$ \\
            \modelnameshort{} (D+M) & 7.239$_{\pm 3.442}$ & 13.686$_{\pm 5.811}$ & 1.619$_{\pm 0.039}$ & 2.684$_{\pm 0.260}$ \\
            \bottomrule
        \end{tabular}
        \label{tab:x_baselines}
\end{table}

\begin{figure}[p]
    \centering
    \begin{subfigure}{\linewidth}
        \centering
        \includegraphics[width=0.85\linewidth]{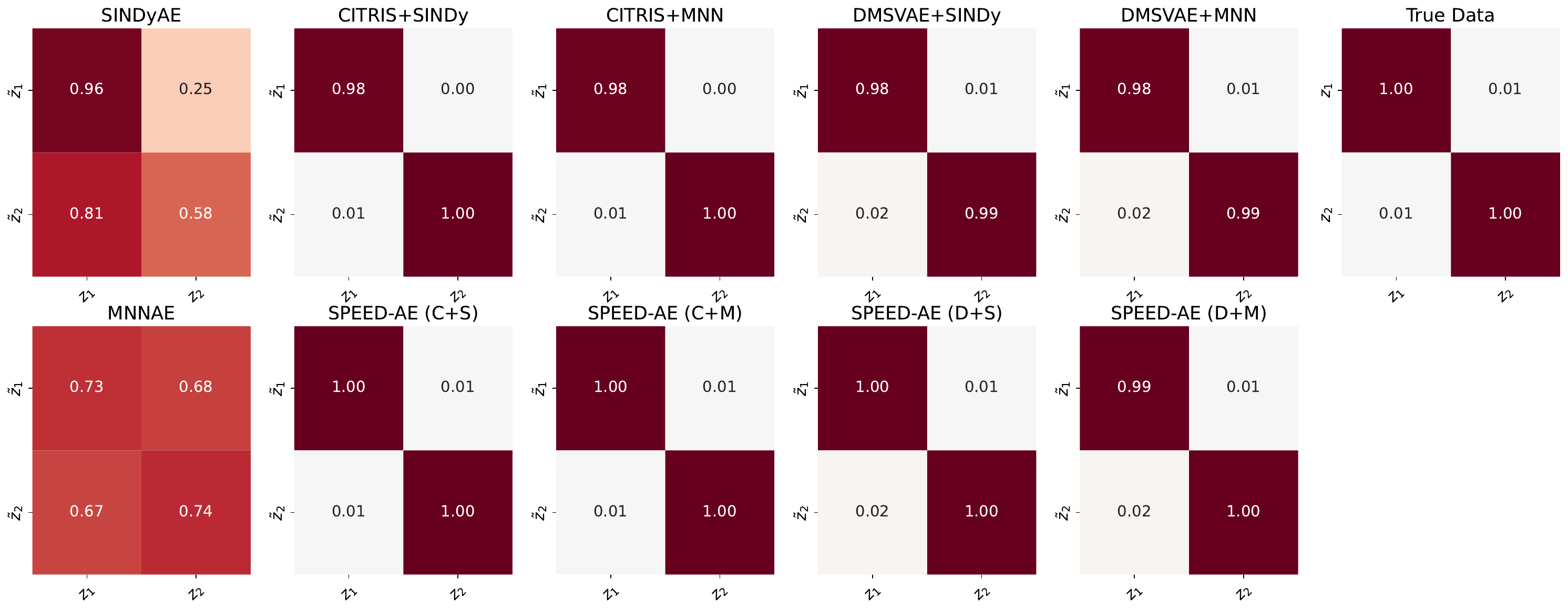}
        \caption{Lotka-Volterra with $\alpha=\frac{2}{3}, \beta=\frac{4}{3}$}
        \label{fig:lv1_cross_pearson}
    \end{subfigure}
    \begin{subfigure}{\linewidth}
        \centering
        \includegraphics[width=0.85\linewidth]{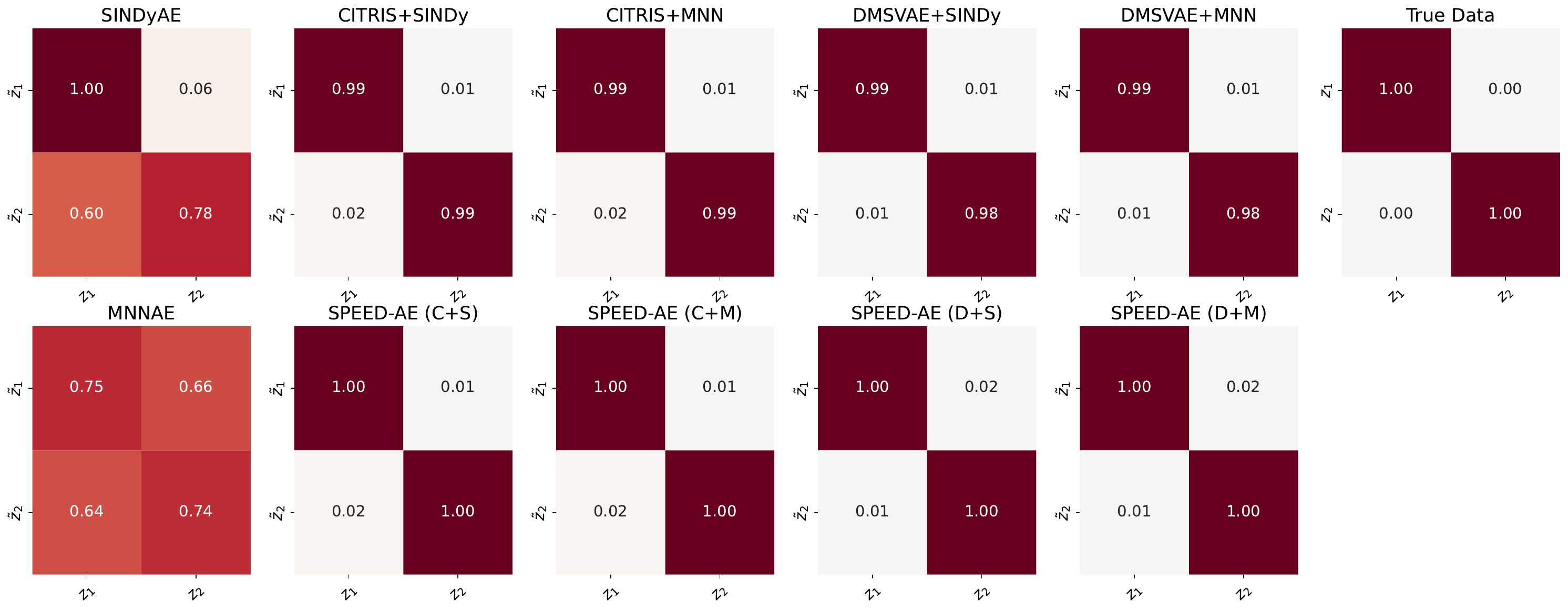}
        \caption{Lotka-Volterra with $\alpha=\frac{6}{5}, \beta=\frac{3}{2}$}
        \label{fig:lv2_cross_pearson}
    \end{subfigure}
    \begin{subfigure}{\linewidth}
        \centering
        \includegraphics[width=0.85\linewidth]{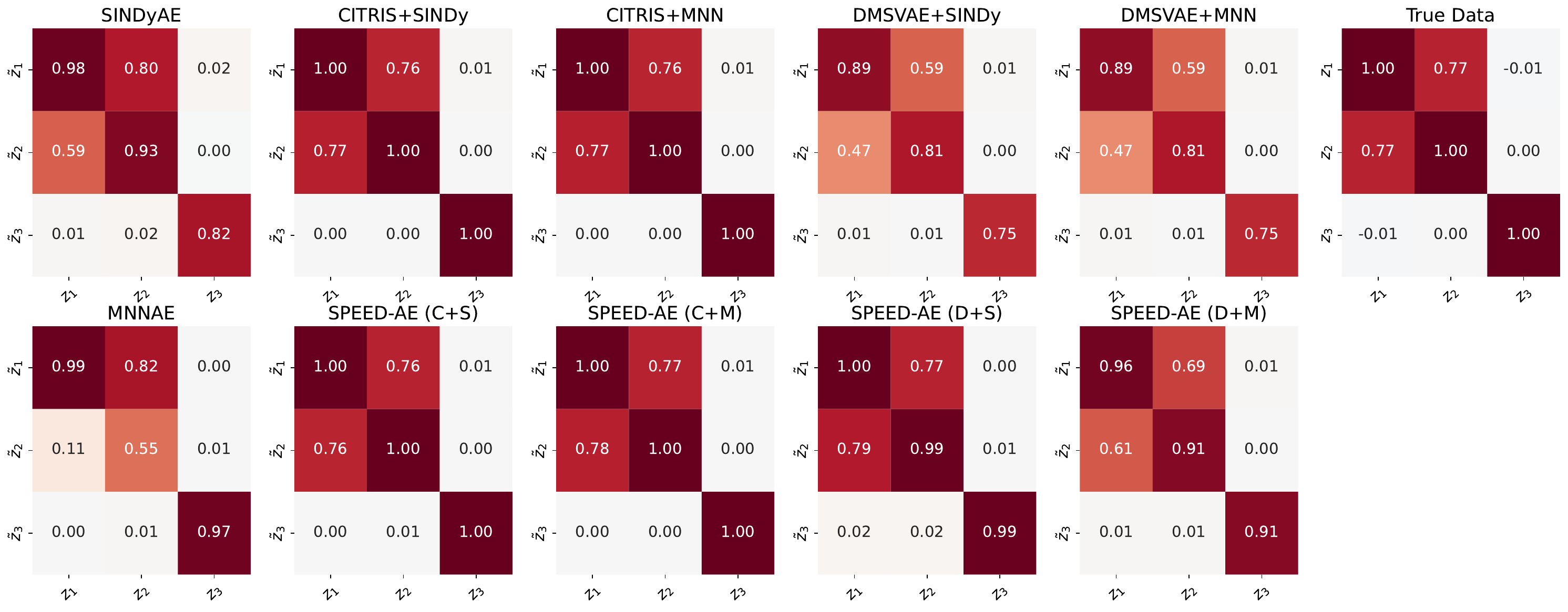}
        \caption{Lorenz stable with $\rho=14$}
        \label{fig:lorenz2_cross_pearson}
    \end{subfigure}
    \begin{subfigure}{\linewidth}
        \centering
        \includegraphics[width=0.85\linewidth]{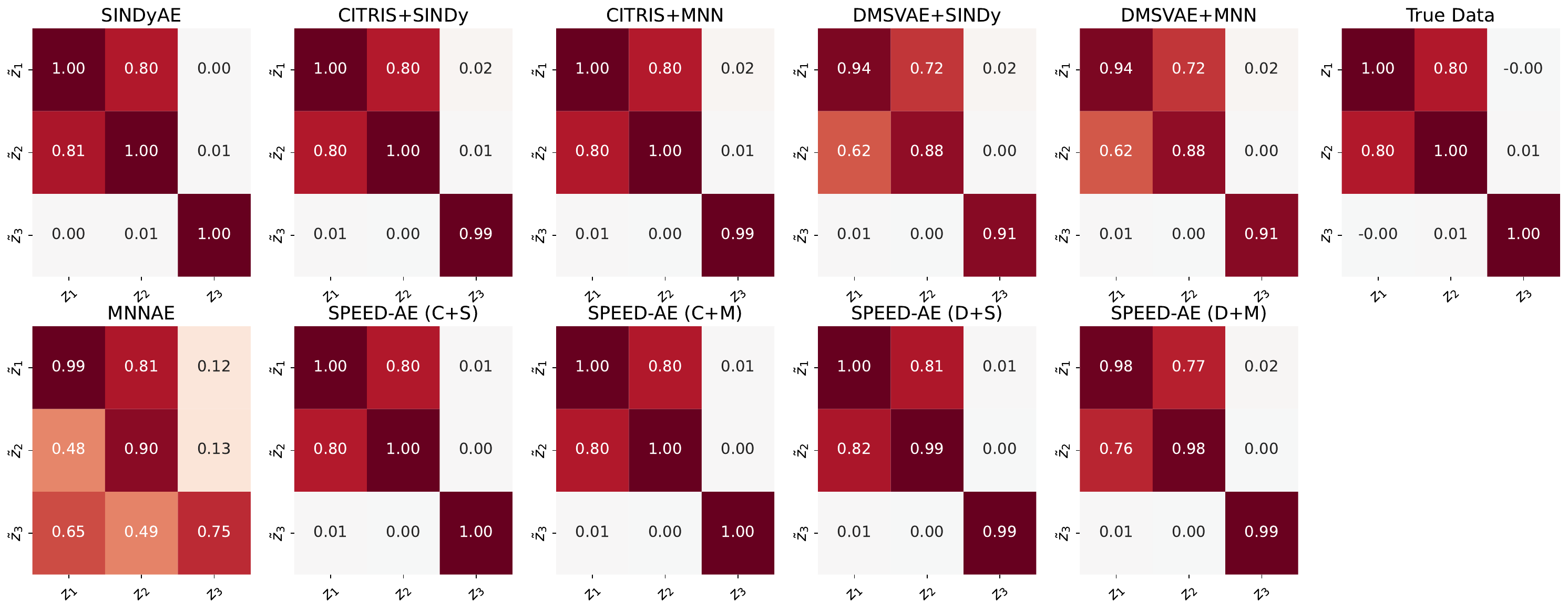}
        \caption{Lorenz chaotic with $\rho=28$}
        \label{fig:lorenz1_cross_pearson}
    \end{subfigure}
    \caption{Pearson cross correlation matrices between $\vzspeed$ and $\rvz$ for all experiments and models. The rightmost matrix in each subfigure corresponds to the true correlation matrix (i.e., between $\rvz$ and $\rvz$).}
    \label{fig:cross_pearson_matrices}
\end{figure}

\newpage
\subsection{Lotka-Volterra Experiment}\label{app:prey_predator}
To visualize how the learned latents of each model $\vzspeed$ are related to the true ones $\rvz$, we scatter plot them for the whole dataset in \Cref{fig:lv1_diffeos} and \Cref{fig:lv2_diffeos}. In both Lotka-Volterra experiments, SINDyAE and MNNAE do not disentangle the variables, as there is no one-to-one correspondence between each $\zspeed_i$ and $z_i$. Similarly, their correlation matrices in \Cref{fig:lv1_cross_pearson} and \Cref{fig:lv2_cross_pearson} are very different from the true one. On the other hand, CITRIS and DMSVAE successfully disentangle the variables, but the mapping is not linear. \modelnameshort{} improves on this in almost every case, indicating that the model achieves identifiability up to linear transformation. We also report the recovered ODE coefficients in the true latent space $\rvz$ for the Lotka-Volterra experiment with transfer of CRL representations and $\alpha=\frac{6}{5}, \beta=\frac{3}{2}$. A similar conclusion to the results in \Cref{sec:experiments} holds even in this case: \modelnameshort{} recovers the sparsest and most similar equations, while SINDy and MNN applied to CRL latents have to sacrifice sparsity and interpretability for good forecasting.

Finally, we report representative plots of $\rvx$ forecasting (first $10$ components out of $128$) for the employed ML baselines in \Cref{fig:lv1_lstm_traj} and \Cref{fig:lv1_transformer_traj}. In both cases, model rollout trajectories collapse in less than $1000$ steps.

\begin{figure}[p]
    \centering
    \begin{subfigure}{0.49\linewidth}
        \centering
        \includegraphics[width=\linewidth]{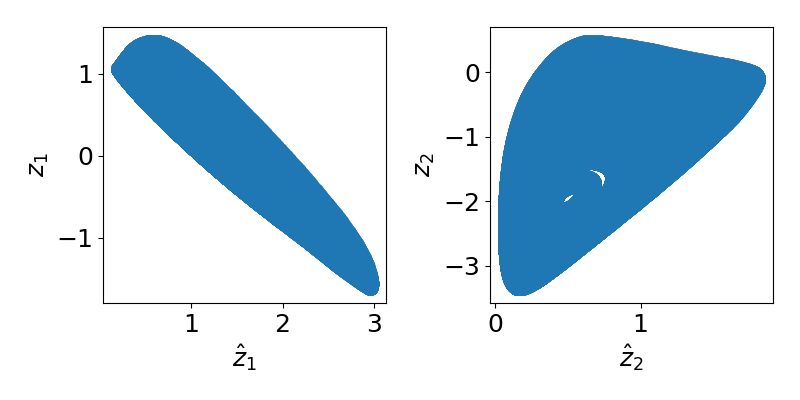}
        \caption{SINDyAE}
        \label{fig:lv1_sindyae_h}
    \end{subfigure}
    \begin{subfigure}{0.49\linewidth}
        \centering
        \includegraphics[width=\linewidth]{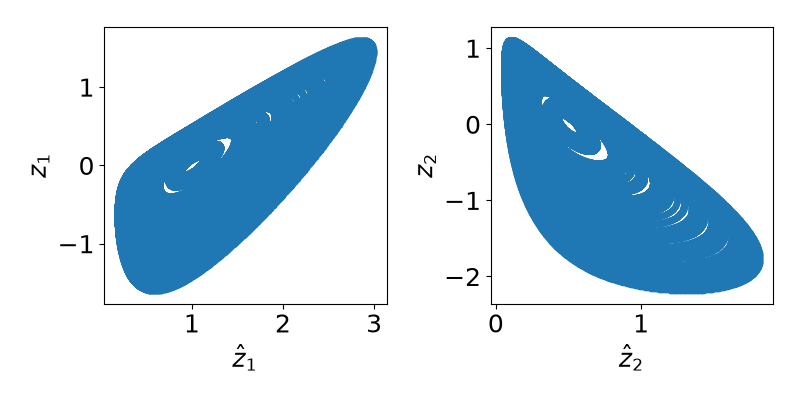}
        \caption{MNNAE}
        \label{fig:lv1_mnnae_h}
    \end{subfigure}
    \begin{subfigure}{0.49\linewidth}
        \centering
        \includegraphics[width=\linewidth]{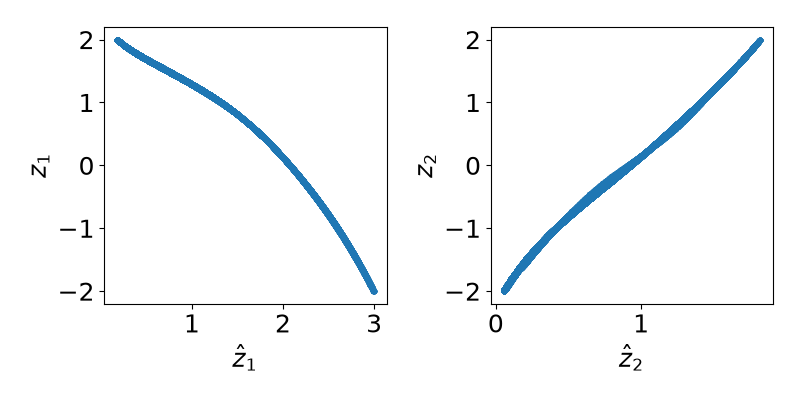}
        \caption{CITRIS}
        \label{fig:lv1_citris_h}
    \end{subfigure}
    \begin{subfigure}{0.49\linewidth}
        \centering
        \includegraphics[width=\linewidth]{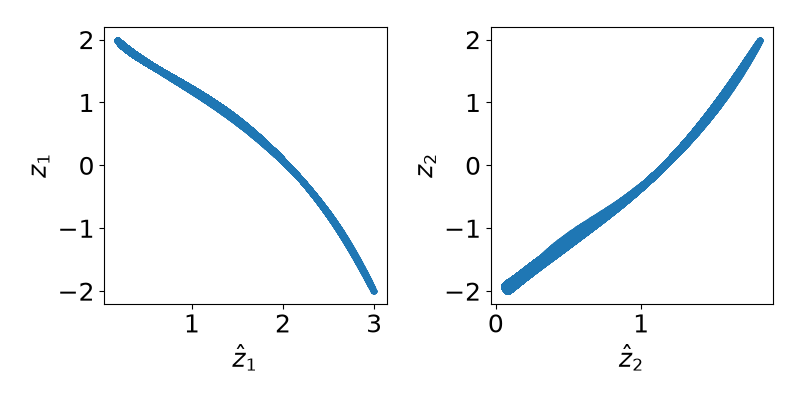}
        \caption{DMSVAE}
        \label{fig:lv1_dmsvae_h}
    \end{subfigure}
    \begin{subfigure}{0.49\linewidth}
        \centering
        \includegraphics[width=\linewidth]{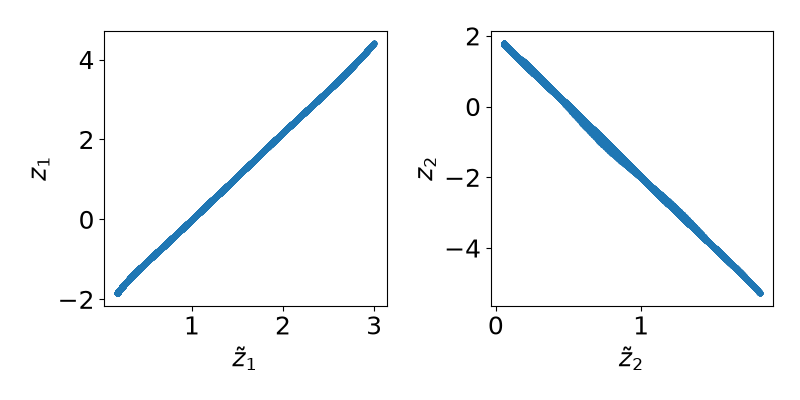}
        \caption{\modelnameshort{} (C+S)}
        \label{fig:lv1_speed_cs_h}
    \end{subfigure}
    \begin{subfigure}{0.49\linewidth}
        \centering
        \includegraphics[width=\linewidth]{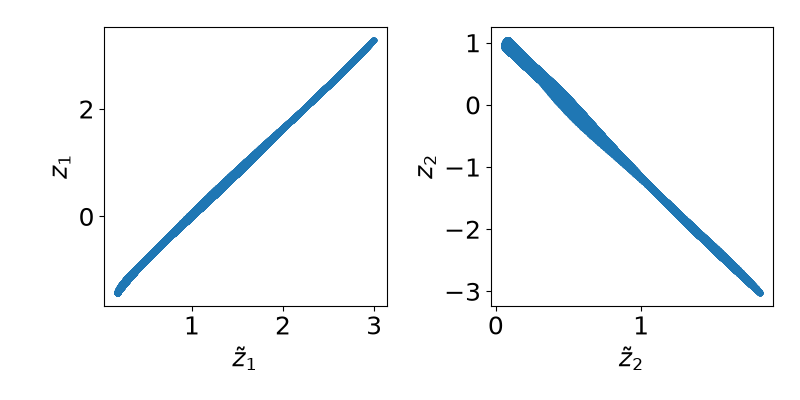}
        \caption{\modelnameshort{} (D+S)}
        \label{fig:lv1_speed_ds_h}
    \end{subfigure}
    \begin{subfigure}{0.49\linewidth}
        \centering
        \includegraphics[width=\linewidth]{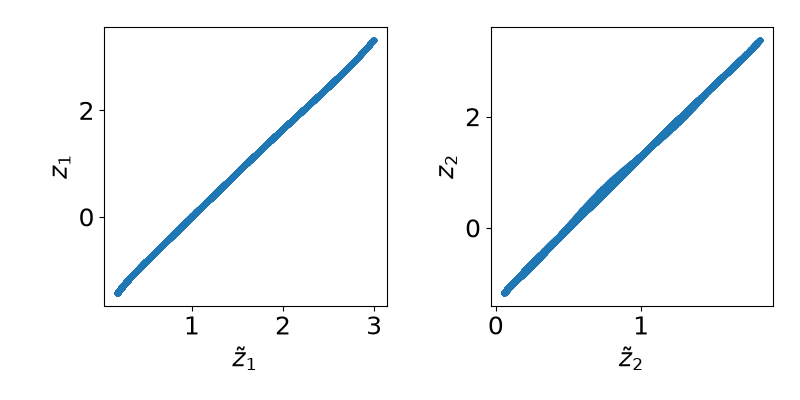}
        \caption{\modelnameshort{} (C+M)}
        \label{fig:lv1_speed_cm_h}
    \end{subfigure}
    \begin{subfigure}{0.49\linewidth}
        \centering
        \includegraphics[width=\linewidth]{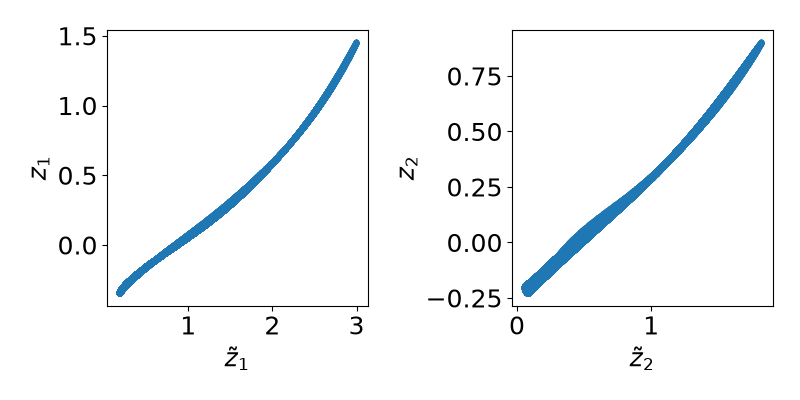}
        \caption{\modelnameshort{} (D+M)}
        \label{fig:lv1_speed_dm_h}
    \end{subfigure}
    \caption{Lotka-Volterra experiment with $\alpha=\frac{2}{3}, \beta=\frac{4}{3}$. Scatter plots of the found latents $\vzspeed$ against the true ones $\rvz$. If their function is well defined, the plots well represent the maps $z_i=h_i(\zspeed_i)$. In the second row, we include the same plots for the CRL latents $\vzcrl$ from CITRIS and DMSVAE. In most cases, \modelnameshort{} can go from a general diffeomorphism $h$ to a linear one.}
    \label{fig:lv1_diffeos}
\end{figure}

\begin{figure}[p]
    \centering
    \begin{subfigure}{0.49\linewidth}
        \centering
        \includegraphics[width=\linewidth]{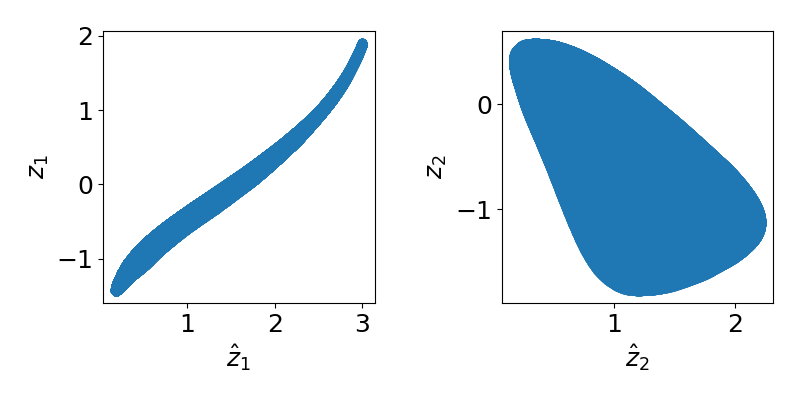}
        \caption{SINDyAE}
        \label{fig:lv2_sindyae_h}
    \end{subfigure}
    \begin{subfigure}{0.49\linewidth}
        \centering
        \includegraphics[width=\linewidth]{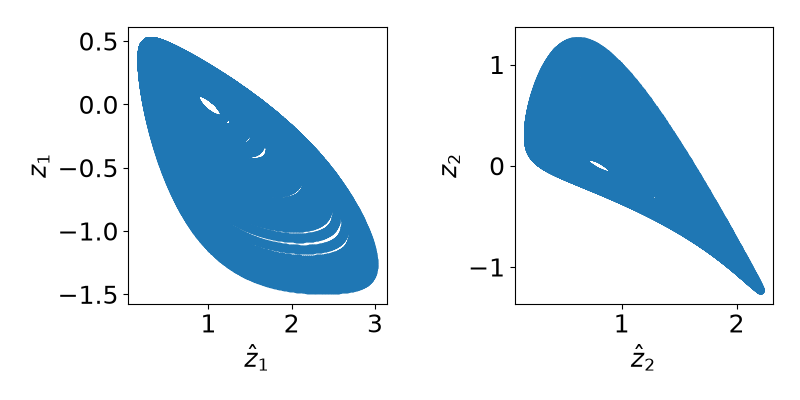}
        \caption{MNNAE}
        \label{fig:lv2_mnnae_h}
    \end{subfigure}
    \begin{subfigure}{0.49\linewidth}
        \centering
        \includegraphics[width=\linewidth]{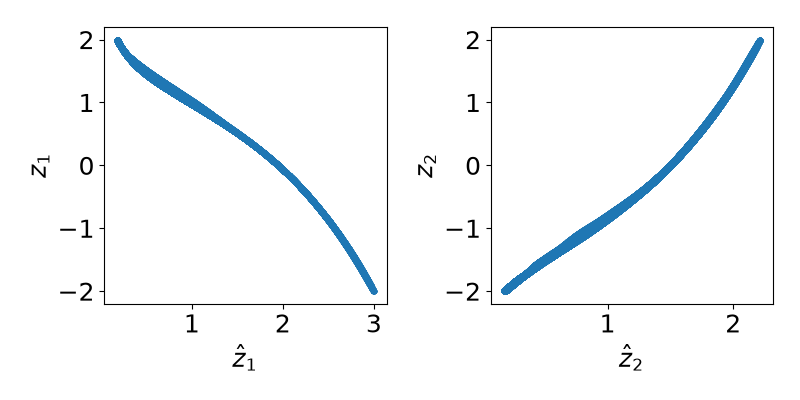}
        \caption{CITRIS}
        \label{fig:lv2_citris_h}
    \end{subfigure}
    \begin{subfigure}{0.49\linewidth}
        \centering
        \includegraphics[width=\linewidth]{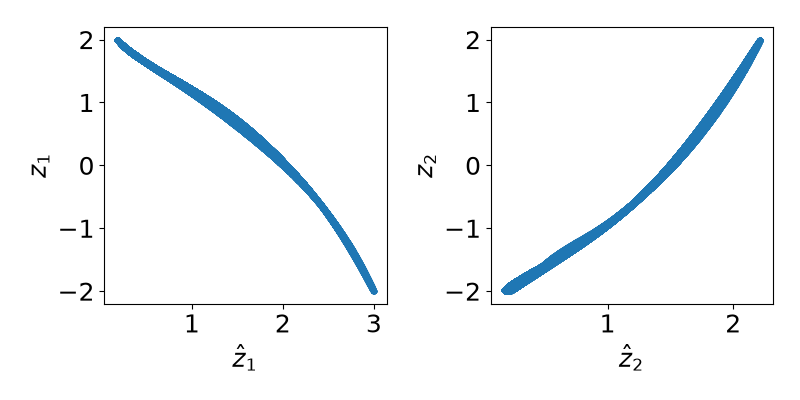}
        \caption{DMSVAE}
        \label{fig:lv2_dmsvae_h}
    \end{subfigure}
    \begin{subfigure}{0.49\linewidth}
        \centering
        \includegraphics[width=\linewidth]{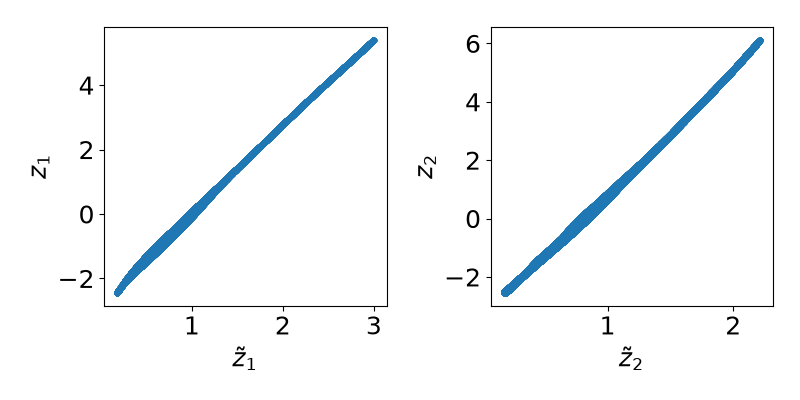}
        \caption{\modelnameshort{} (C+S)}
        \label{fig:lv2_speed_cs_h}
    \end{subfigure}
    \begin{subfigure}{0.49\linewidth}
        \centering
        \includegraphics[width=\linewidth]{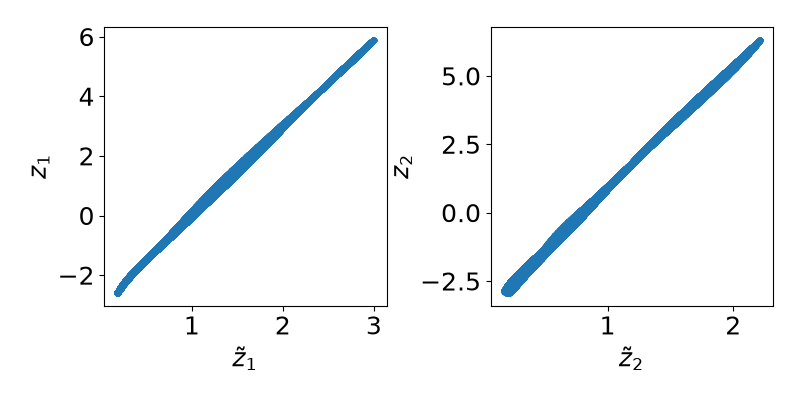}
        \caption{\modelnameshort{} (D+S)}
        \label{fig:lv2_speed_ds_h}
    \end{subfigure}
    \begin{subfigure}{0.49\linewidth}
        \centering
        \includegraphics[width=\linewidth]{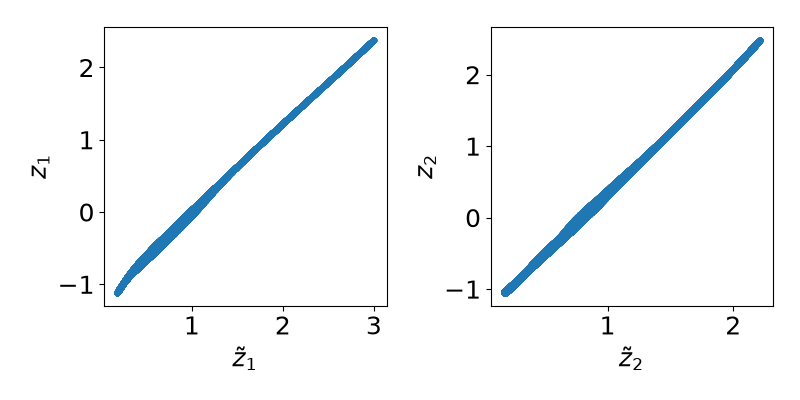}
        \caption{\modelnameshort{} (C+M)}
        \label{fig:lv2_speed_cm_h}
    \end{subfigure}
    \begin{subfigure}{0.49\linewidth}
        \centering
        \includegraphics[width=\linewidth]{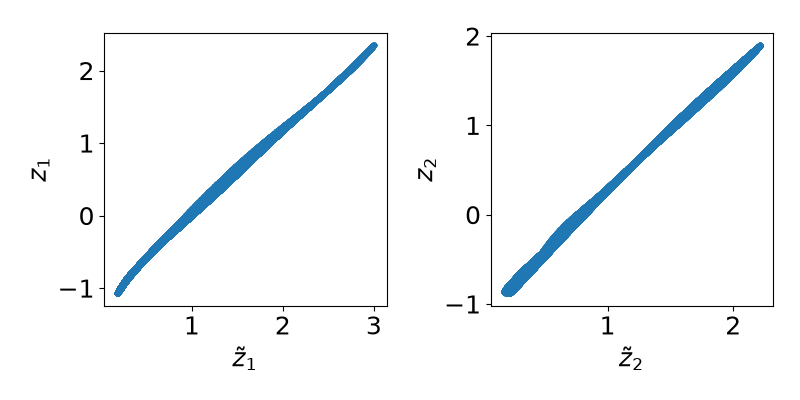}
        \caption{\modelnameshort{} (D+M)}
        \label{fig:lv2_speed_dm_h}
    \end{subfigure}
    \caption{Lotka-Volterra experiment woth $\alpha=\frac{6}{5}, \beta=\frac{3}{2}$. Scatter plots of the found latents $\vzspeed$ against the true ones $\rvz$. If their function is well defined, the plots well represent the maps $z_i=h_i(\zspeed_i)$. In the second row, we include the same plots for the CRL latents $\vzcrl$ from CITRIS and DMSVAE. In most cases, \modelnameshort{} can go from a general diffeomorphism $h$ to a linear one.}
    \label{fig:lv2_diffeos}
\end{figure}

\begin{figure}
    \centering
    \includegraphics[width=\linewidth]{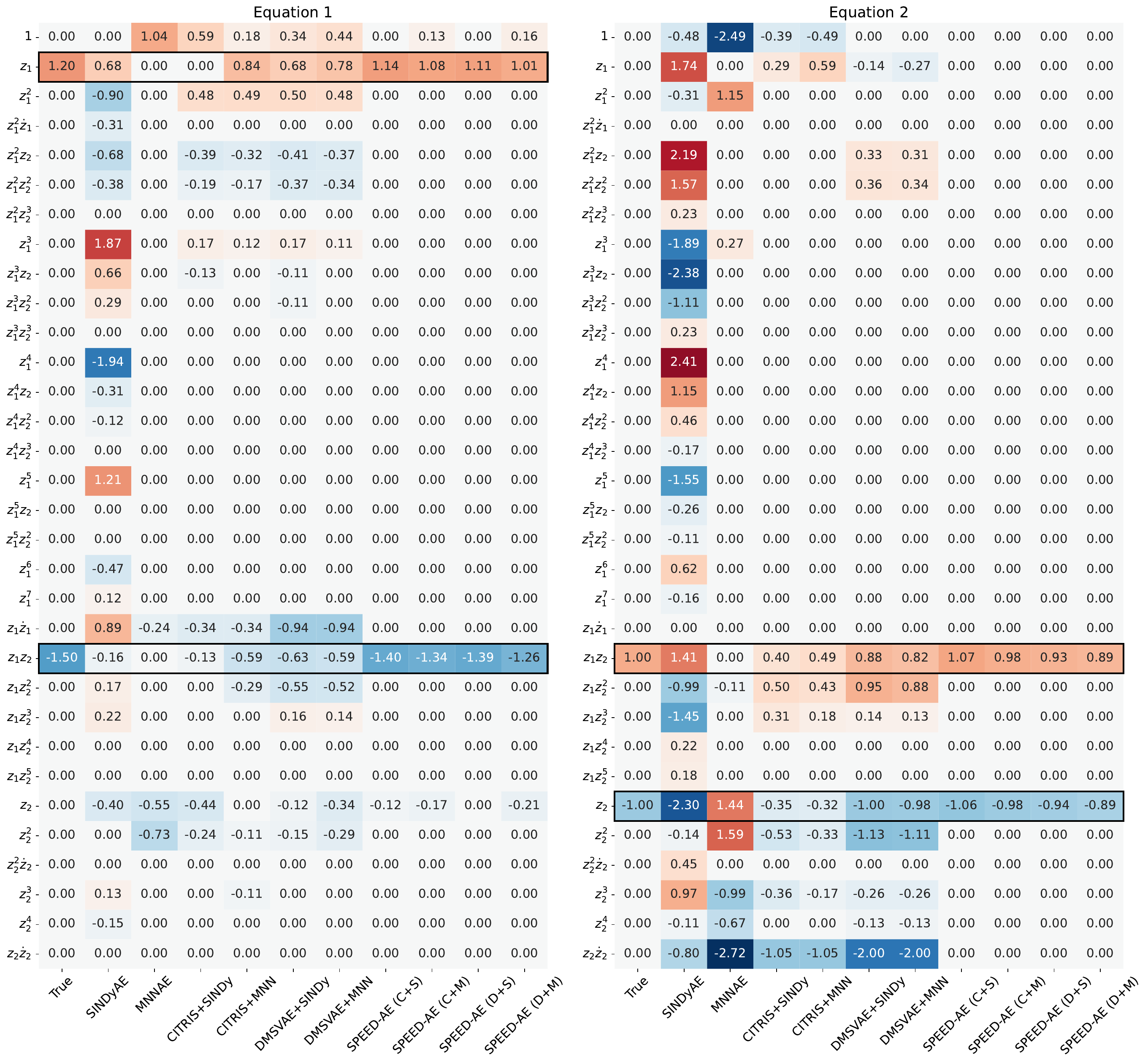}
    \caption{Identified coefficients for the Lotka-Volterra experiment with transfer of CRL models and $\alpha=\frac{6}{5}, \beta=\frac{3}{2}$. The true ones are in the leftmost column and highlighted by the black borders. For comparison, we include four baselines that consist of equation discovery directly applied to the CRL latents $\vzcrl$.}
    \label{fig:prey_predator2_coeffs}
\end{figure}

\begin{figure}
    \centering
    \includegraphics[width=\linewidth]{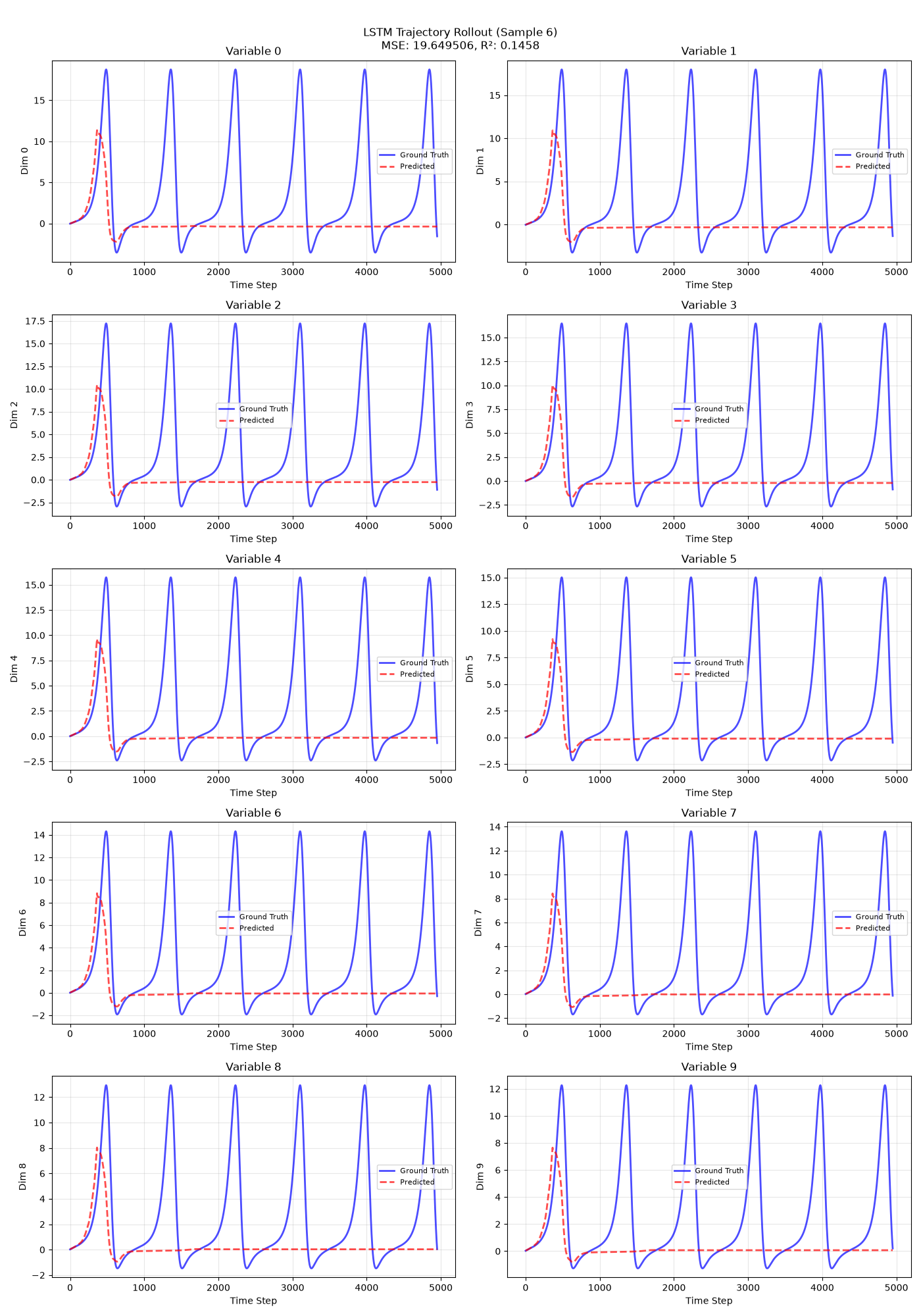}
    \caption{Example test trajectory of the Lotka-Volterra experiment (first $10$ dimensions of $\rvx$). Ground truth (blue) against the predicted trajectory from the LSTM model (red). During rollout, the model collapses after around 500 steps.}
    \label{fig:lv1_lstm_traj}
\end{figure}

\begin{figure}
    \centering
    \includegraphics[width=\linewidth]{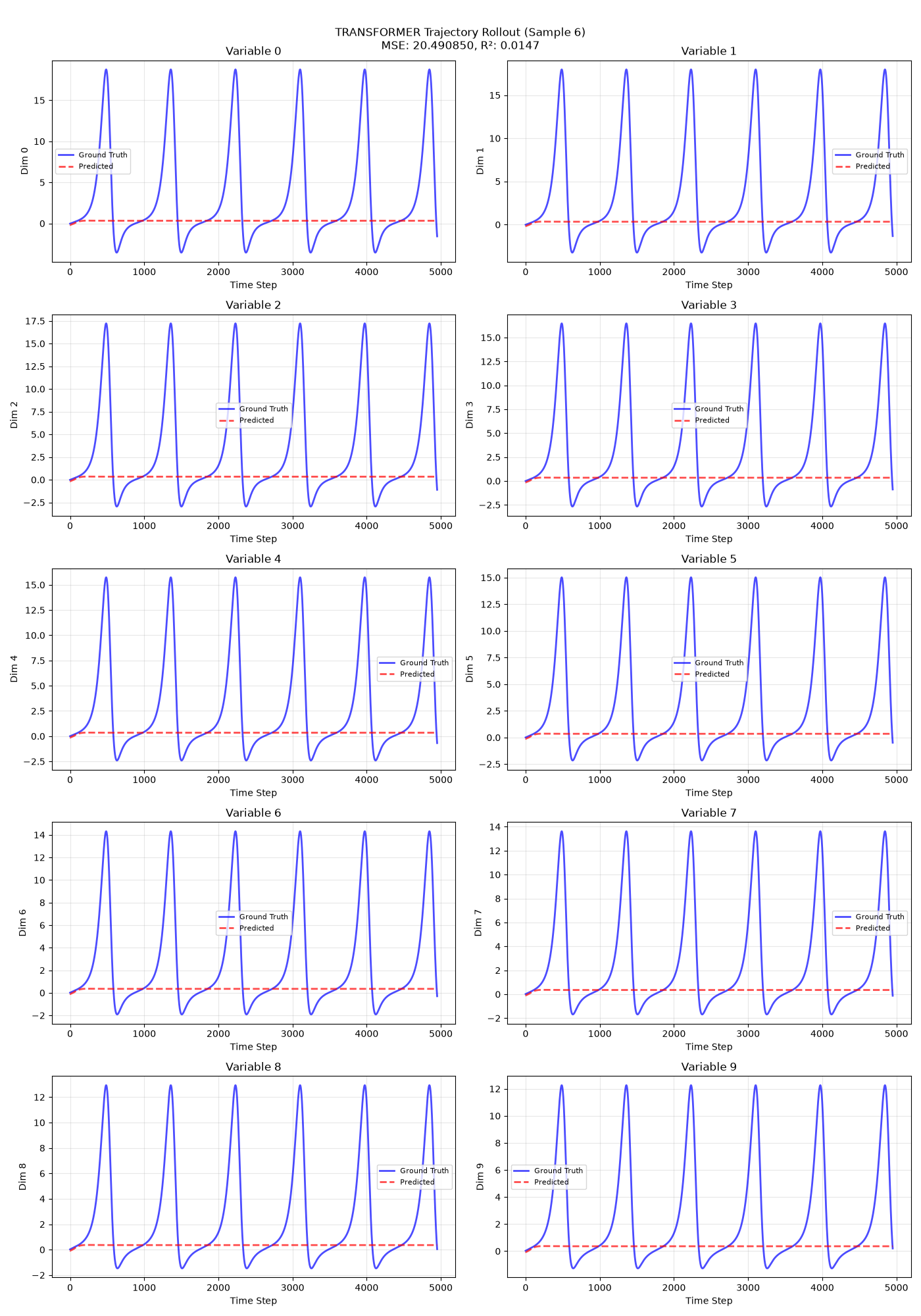}
    \caption{Example test trajectory of the Lotka-Volterra experiment (first $10$ dimensions of $\rvx$). Ground truth (blue) against the predicted trajectory from the Transformer model (red). During rollout, the model collapses after a few iterations.}
    \label{fig:lv1_transformer_traj}
\end{figure}
\newpage

\subsection{Lorenz Experiment}\label{app:lorenz}
We report similar scatter plots for the Lorenz stable and chaotic experiments, respectively, in \Cref{fig:lorenz2_diffeos} and \Cref{fig:lorenz1_diffeos}, with cross-correlation matrices in  \Cref{fig:lorenz2_cross_pearson} and \Cref{fig:lorenz1_cross_pearson}. SINDyAE learns variables with good disentanglement and cross-correlation in the chaotic case, which data and parameters come directly from the original work of \citet{Champion2019DatadrivenDiscoveryCoordinates}, but struggles in the stable case. MNNAE has strong forecasting performance, but learns variables that do not easily relate to the true ones. In these experiments, DMSVAE performs significantly worse than CITRIS. However, \modelnameshort{} is actually able to improve both in terms of cross-correlation and identifiability (compare \Cref{fig:lorenz2_dmsvae_h} with \Cref{fig:lorenz2_speed_ds_h} and \Cref{fig:lorenz1_dmsvae_h} with \Cref{fig:lorenz1_speed_ds_h}, and the matrices in \Cref{fig:lorenz2_cross_pearson}).

Finally, we report the identified ODE coefficients in \Cref{fig:lorenz_coefficients_comparison}.

\begin{figure}
    \centering
    \begin{subfigure}{0.49\linewidth}
        \centering
        \includegraphics[width=\linewidth]{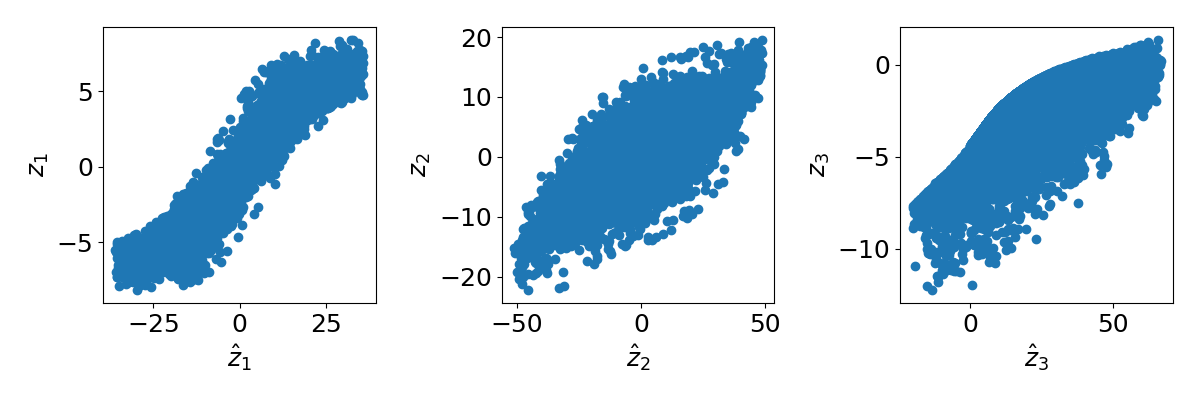}
        \caption{SINDyAE}
        \label{fig:lorenz2_sindyae_h}
    \end{subfigure}
    \begin{subfigure}{0.49\linewidth}
        \centering
        \includegraphics[width=\linewidth]{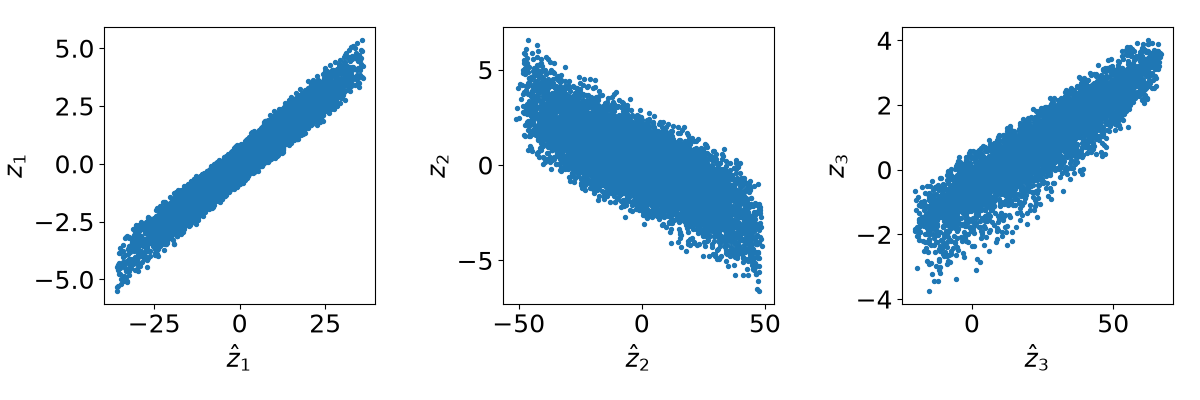}
        \caption{MNNAE}
        \label{fig:lorenz2_mnnae_h}
    \end{subfigure}
    \begin{subfigure}{0.49\linewidth}
        \centering
        \includegraphics[width=\linewidth]{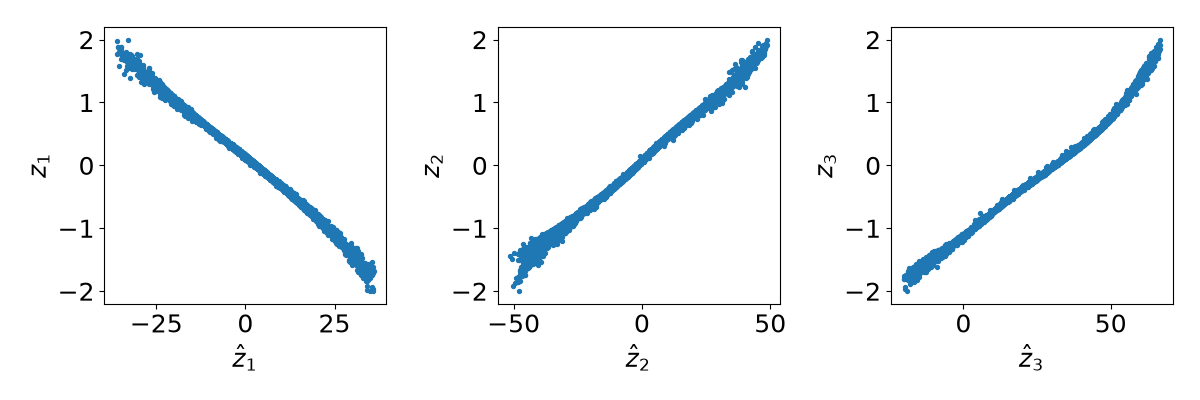}
        \caption{CITRIS}
        \label{fig:lorenz2_citris_h}
    \end{subfigure}
    \begin{subfigure}{0.49\linewidth}
        \centering
        \includegraphics[width=\linewidth]{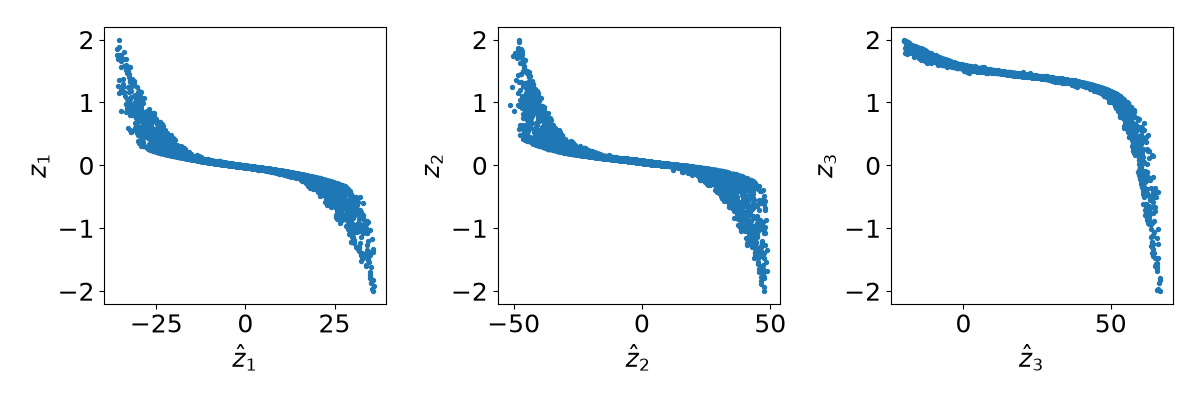}
        \caption{DMSVAE}
        \label{fig:lorenz2_dmsvae_h}
    \end{subfigure}
    \begin{subfigure}{0.49\linewidth}
        \centering
        \includegraphics[width=\linewidth]{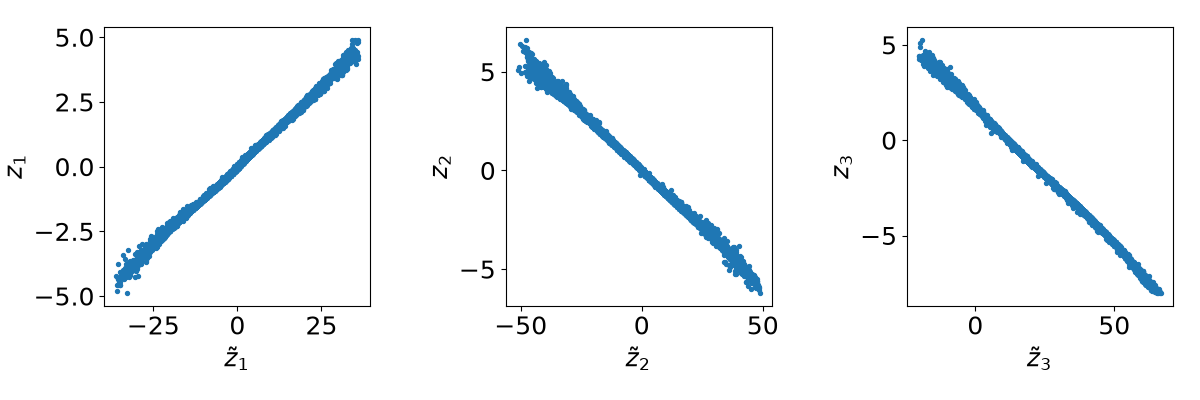}
        \caption{\modelnameshort{} (C+S)}
        \label{fig:lorenz2_speed_cs_h}
    \end{subfigure}
    \begin{subfigure}{0.49\linewidth}
        \centering
        \includegraphics[width=\linewidth]{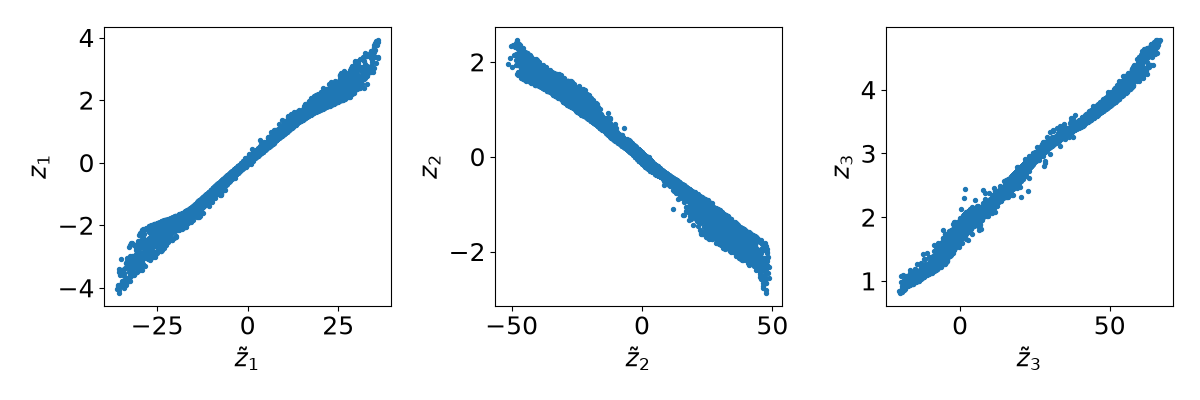}
        \caption{\modelnameshort{} (D+S)}
        \label{fig:lorenz2_speed_ds_h}
    \end{subfigure}
    \begin{subfigure}{0.49\linewidth}
        \centering
        \includegraphics[width=\linewidth]{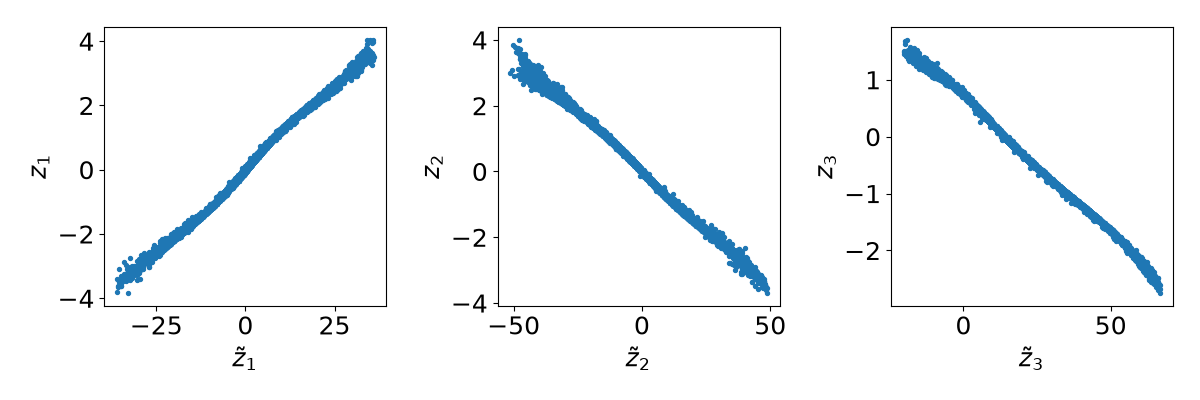}
        \caption{\modelnameshort{} (C+M)}
        \label{fig:lorenz2_speed_cm_h}
    \end{subfigure}
    \begin{subfigure}{0.49\linewidth}
        \centering
        \includegraphics[width=\linewidth]{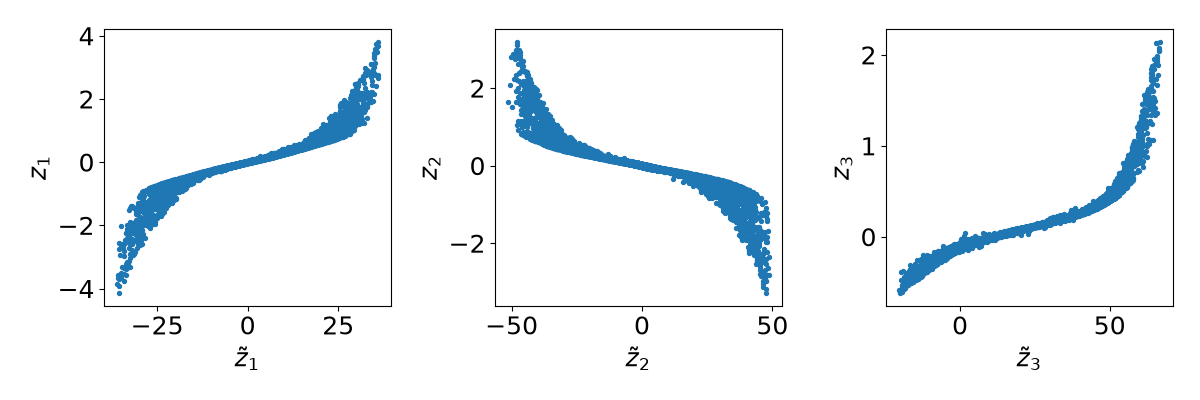}
        \caption{\modelnameshort{} (D+M)}
        \label{fig:lorenz2_speed_dm_h}
    \end{subfigure}
    \caption{Lorenz experiment with stable trajectories $\rho=14$. Scatter plots of the found latents $\vzspeed$ against the true ones $\rvz$. If their function is well defined, the plots well represent the maps $z_i=h_i(\zspeed_i)$. In the second row, we include the same plots for the CRL latents $\vzcrl$ from CITRIS and DMSVAE. \modelnameshort{} models with DMSVAE as disentangler suffer more due to the CRL latents being less disentangled, although the SINDy-based loss is actually able to recover a linear map $h$.}
    \label{fig:lorenz2_diffeos}
\end{figure}

\begin{figure}[p]
    \centering
    \begin{subfigure}{0.49\linewidth}
        \centering
        \includegraphics[width=\linewidth]{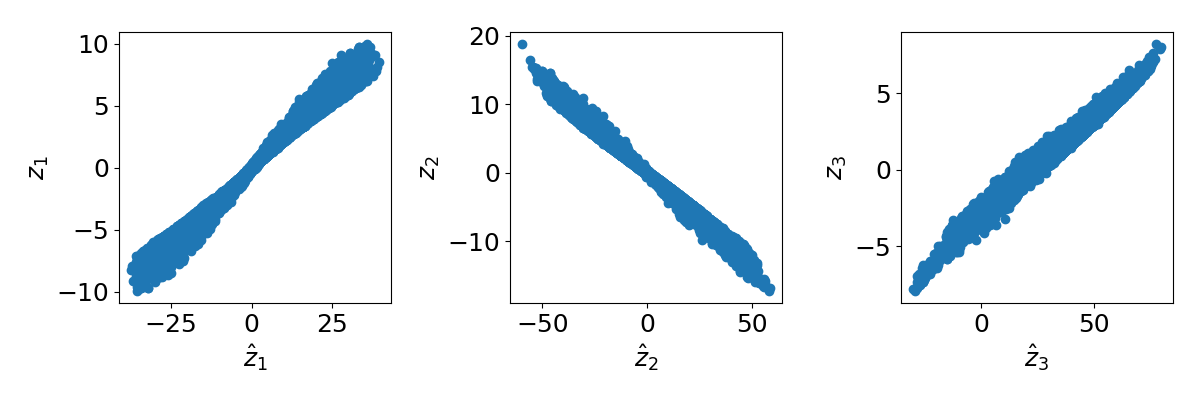}
        \caption{SINDyAE}
        \label{fig:lorenz1_sindyae_h}
    \end{subfigure}
    \begin{subfigure}{0.49\linewidth}
        \centering
        \includegraphics[width=\linewidth]{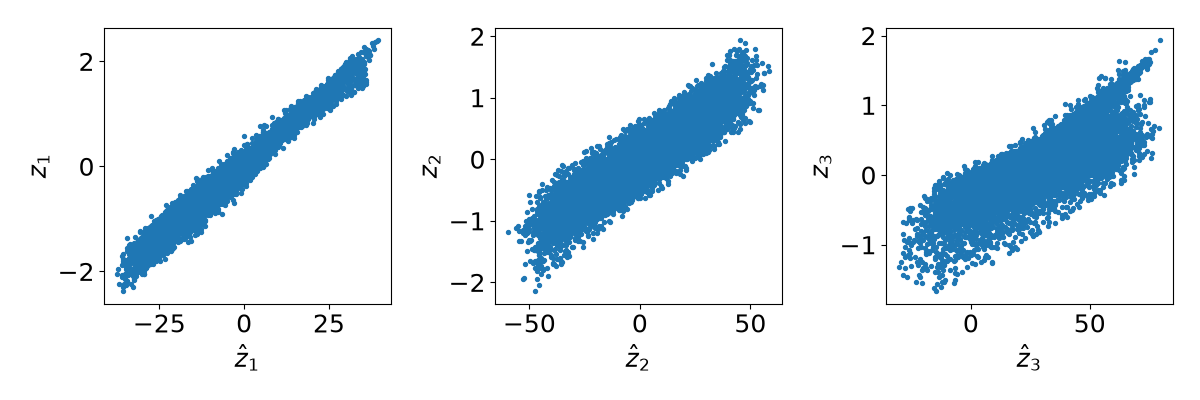}
        \caption{MNNAE}
        \label{fig:lorenz1_mnnae_h}
    \end{subfigure}
    \begin{subfigure}{0.49\linewidth}
        \centering
        \includegraphics[width=\linewidth]{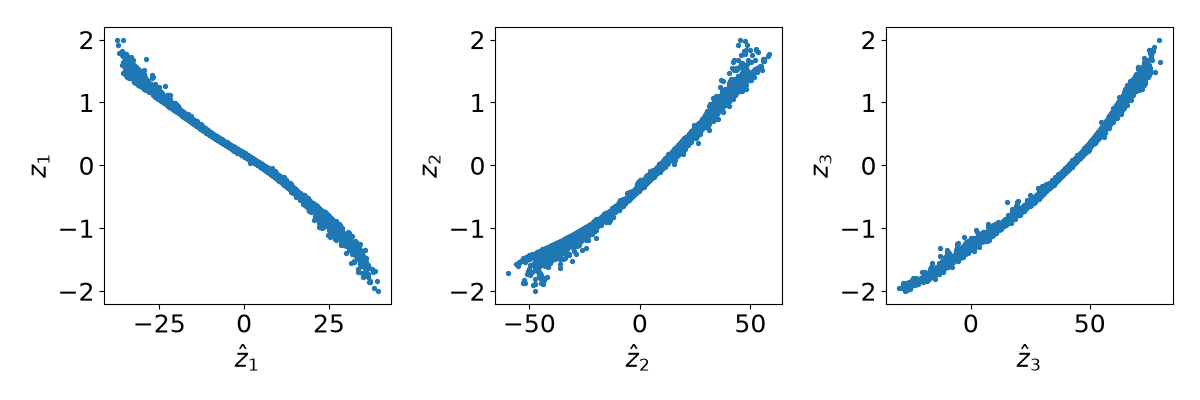}
        \caption{CITRIS}
        \label{fig:lorenz1_citris_h}
    \end{subfigure}
    \begin{subfigure}{0.49\linewidth}
        \centering
        \includegraphics[width=\linewidth]{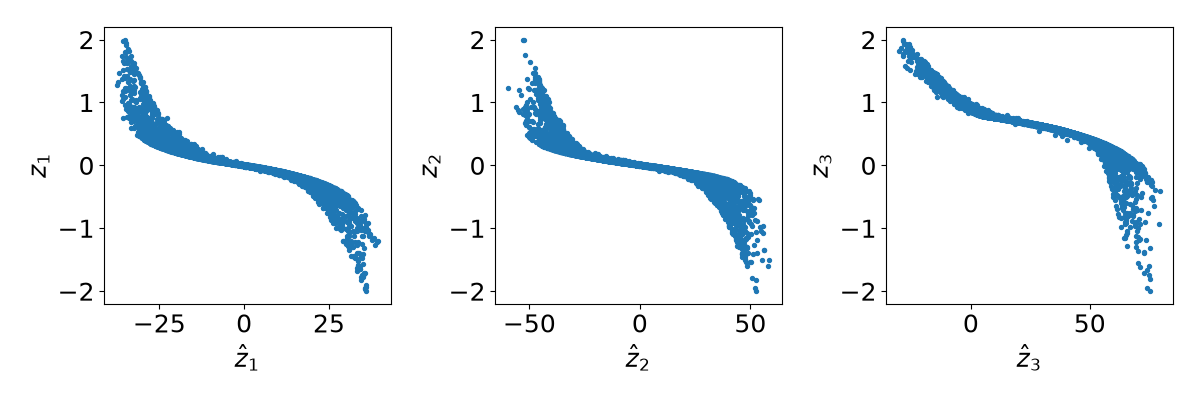}
        \caption{DMSVAE}
        \label{fig:lorenz1_dmsvae_h}
    \end{subfigure}
    \begin{subfigure}{0.49\linewidth}
        \centering
        \includegraphics[width=\linewidth]{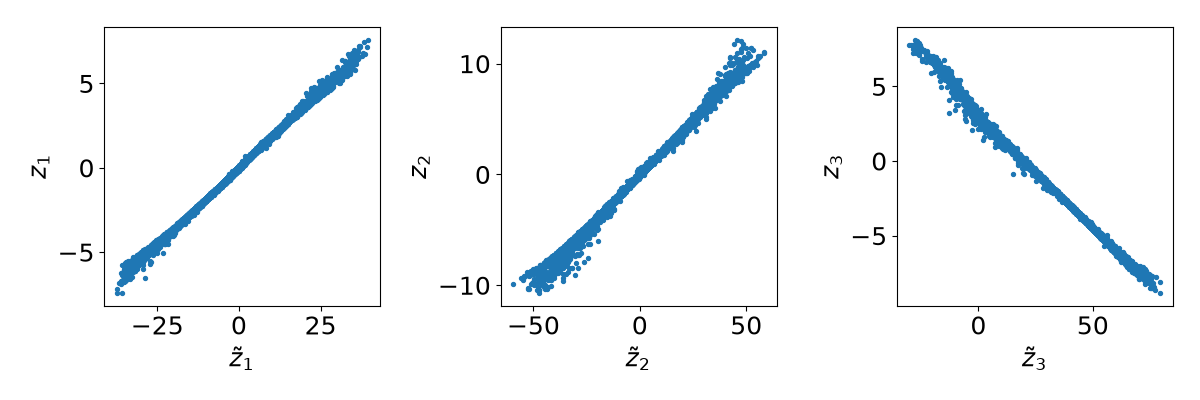}
        \caption{\modelnameshort{} (C+S)}
        \label{fig:lorenz1_speed_cs_h}
    \end{subfigure}
    \begin{subfigure}{0.49\linewidth}
        \centering
        \includegraphics[width=\linewidth]{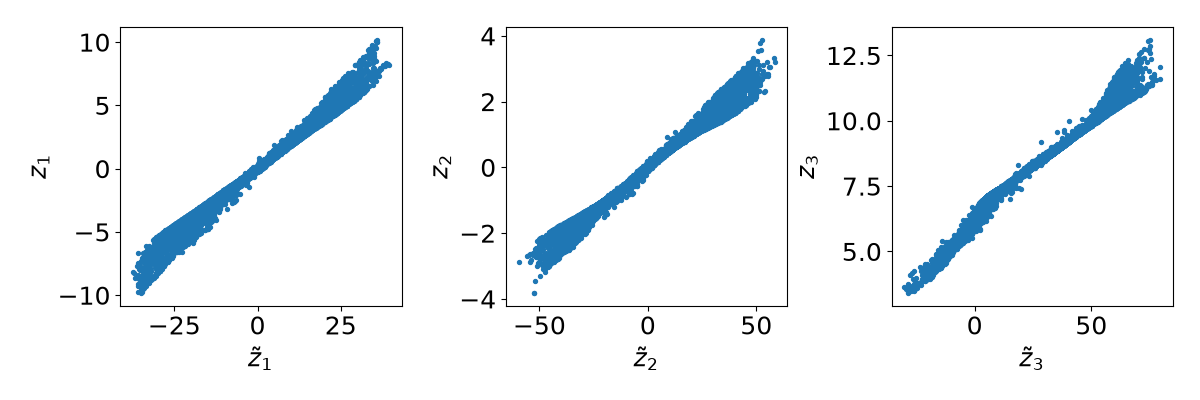}
        \caption{\modelnameshort{} (D+S)}
        \label{fig:lorenz1_speed_ds_h}
    \end{subfigure}
    \begin{subfigure}{0.49\linewidth}
        \centering
        \includegraphics[width=\linewidth]{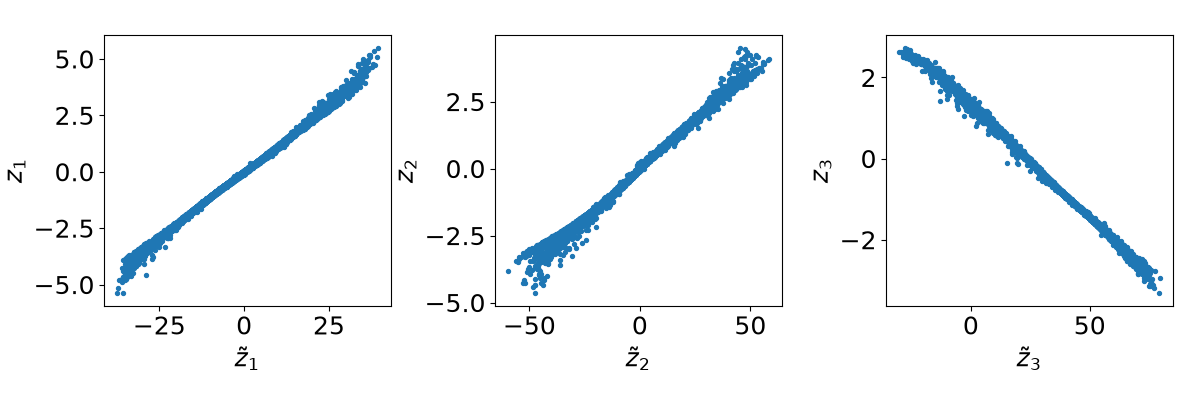}
        \caption{\modelnameshort{} (C+M)}
        \label{fig:lorenz1_speed_cm_h}
    \end{subfigure}
    \begin{subfigure}{0.49\linewidth}
        \centering
        \includegraphics[width=\linewidth]{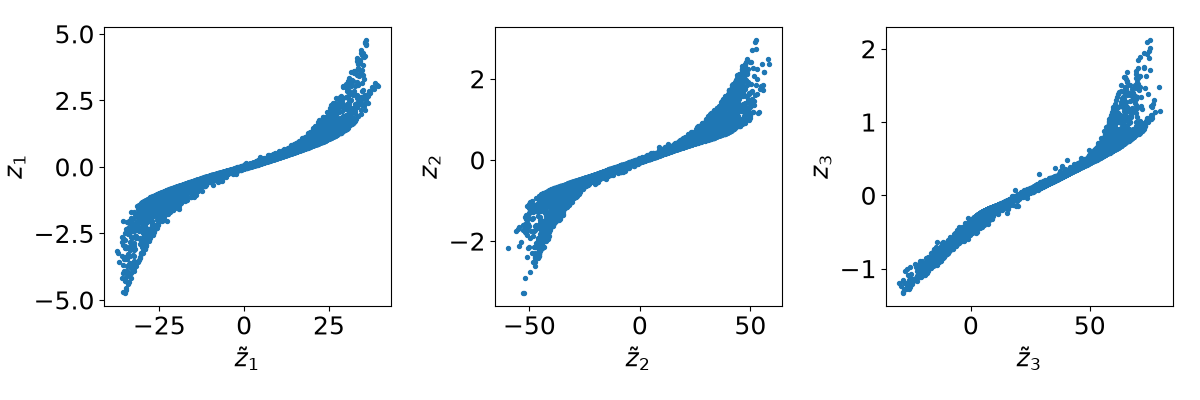}
        \caption{\modelnameshort{} (D+M)}
        \label{fig:lorenz1_speed_dm_h}
    \end{subfigure}
    \caption{Scatter plots of the found latents $\vzspeed$ against the true ones $\rvz$. If their function is well-defined, the plots well represent the maps $z_i=h_i(\zspeed_i)$. In the second row, we include the same plots for the CRL latents $\vzcrl$ from CITRIS and DMSVAE. \modelnameshort{} models with DMSVAE as disentangler suffer more due to the CRL latents being less disentangled, although the SINDy-based loss is actually able to recover a linear map $h$.}
    \label{fig:lorenz1_diffeos}
\end{figure}

\begin{figure}[p]
    \centering
    \begin{subfigure}{0.48\linewidth}
        \centering
        \includegraphics[width=3\linewidth, angle=90]{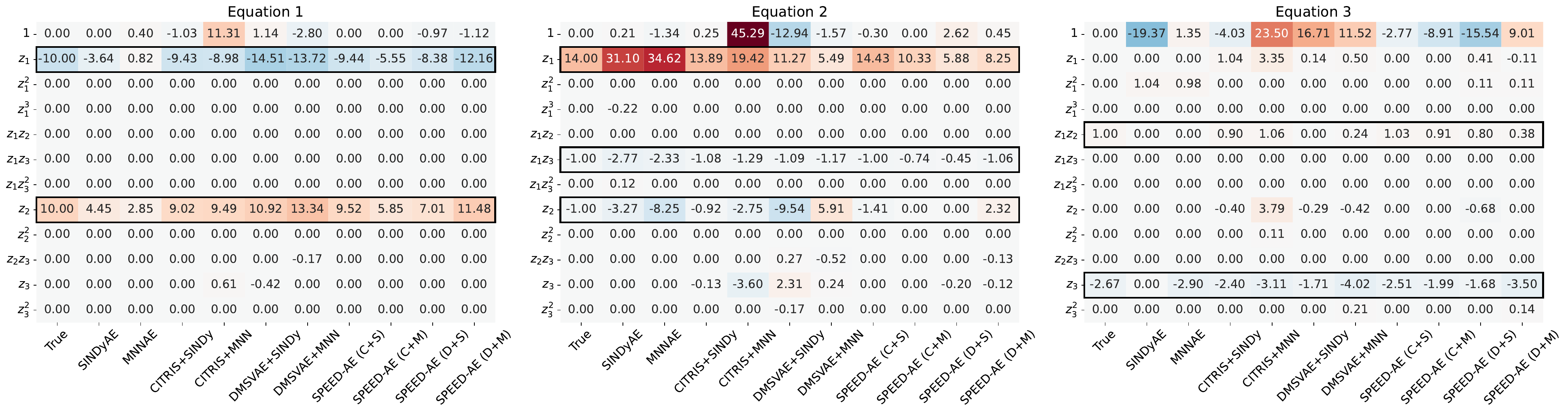}
        \label{fig:lorenz2_coefficients_comparison}
    \end{subfigure}
    \begin{subfigure}{0.48\linewidth}
        \centering
        \includegraphics[width=3\linewidth, angle=90]{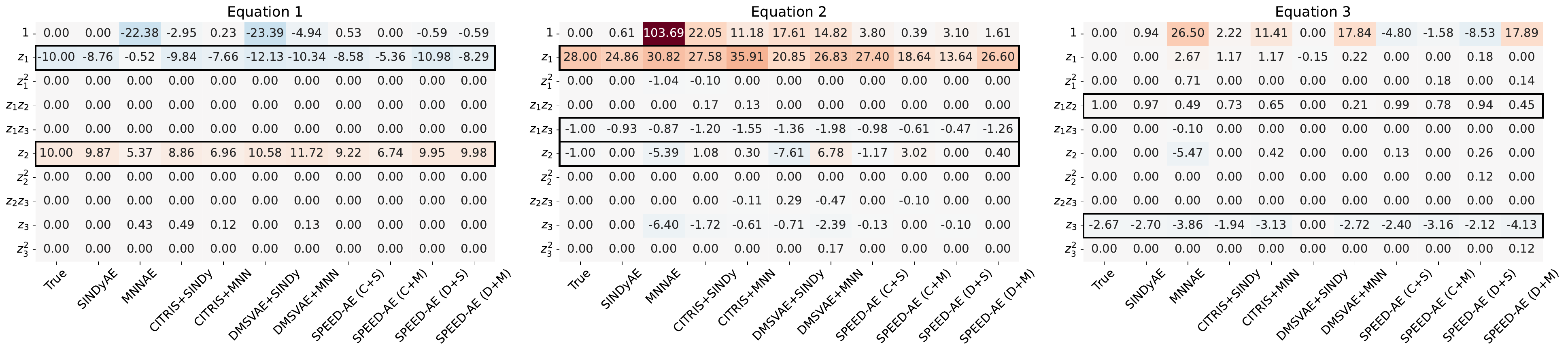}
        \label{fig:lorenz1_coefficients_comparison}
    \end{subfigure}
    \caption{Identified coefficients for the Lorenz experiments in the stable ($\rho=14$, left) and chaotic case ($\rho=28$, right). The true ones are in the leftmost column and highlighted by the black borders. For comparison, we include four baselines that consist of equation discovery directly applied to the CRL latents $\vzcrl$.}
    \label{fig:lorenz_coefficients_comparison}
\end{figure}

\newpage
\subsection{Pendulums Experiment}\label{app:pendulum}
We first include example images of the dataset in \Cref{fig:double_pendulums_examples}. Different pendulums are represented on different channels to ensure the mixing function is invertible and the two pendulums are distinguishable. This should also help disentanglement for all models.

\Cref{fig:double_pendulum_diffeos} shows the scatter plots of true against found latents. CITRIS achieves disentanglement, but not identifiability up to a linear function, where \modelnameshort{} succeeds. SINDyAE has a very interesting behavior: it learns a variable that is linearly related to one of the two pendulum angles, while the other is completely disaligned. Since the pendulums have equal dynamics, and thus their motion is the same except for a constant phase, SINDyAE might have learned one variable for the common dynamics and one that maps the two pendulums to each other. This, however, has the limitation of not being able to identify one of the two pendulums correctly. We identify this behavior in all $10$ seeds we ran the experiment with.

\begin{figure}
    \centering
    \begin{subfigure}{0.32\linewidth}
        \centering
        \includegraphics[width=\linewidth]{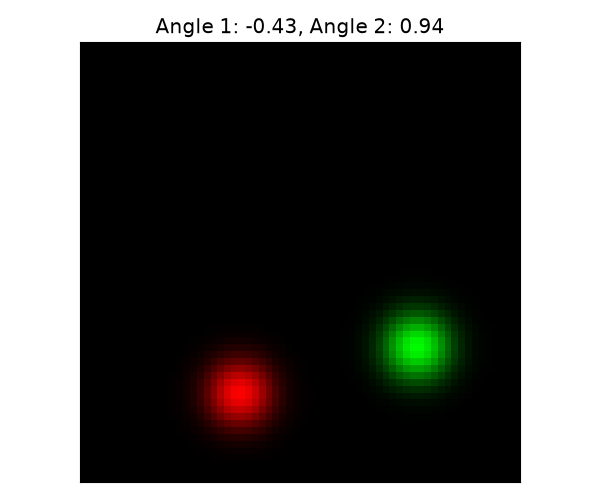}
        \label{fig:double_pendulum_example0}
    \end{subfigure}
    \begin{subfigure}{0.32\linewidth}
        \centering
        \includegraphics[width=\linewidth]{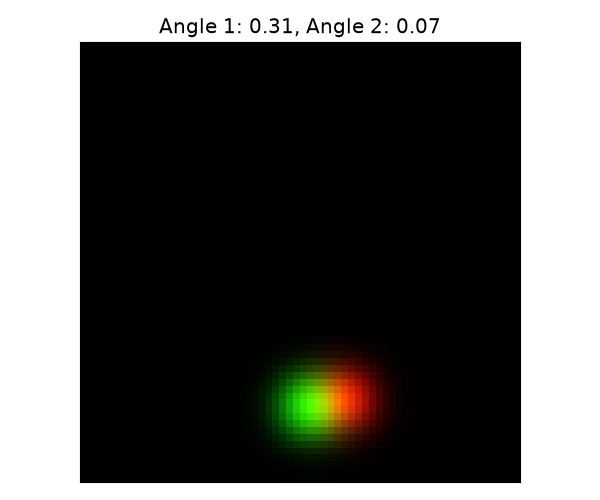}
        \label{fig:double_pendulum_example1}
    \end{subfigure}
    \begin{subfigure}{0.32\linewidth}
        \centering
        \includegraphics[width=\linewidth]{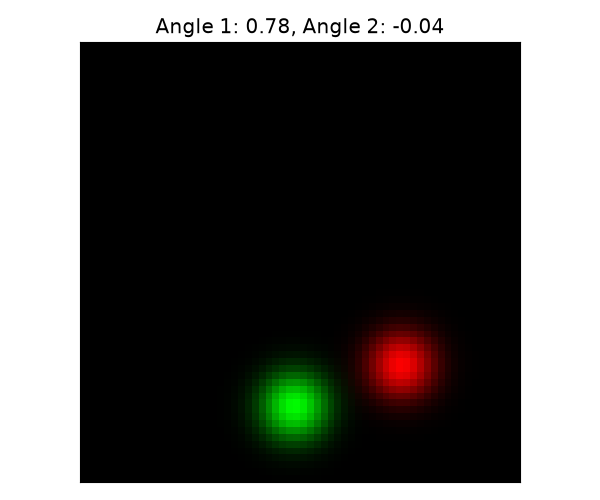}
        \label{fig:double_pendulum_example2}
    \end{subfigure}
    \begin{subfigure}{0.32\linewidth}
        \centering
        \includegraphics[width=\linewidth]{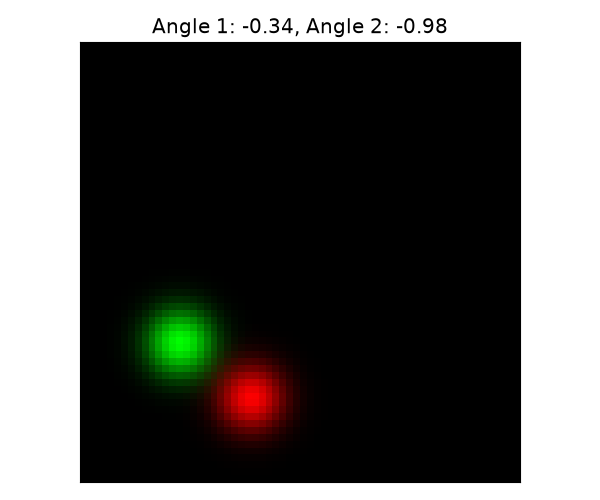}
        \label{fig:double_pendulum_example3}
    \end{subfigure}
    \begin{subfigure}{0.32\linewidth}
        \centering
        \includegraphics[width=\linewidth]{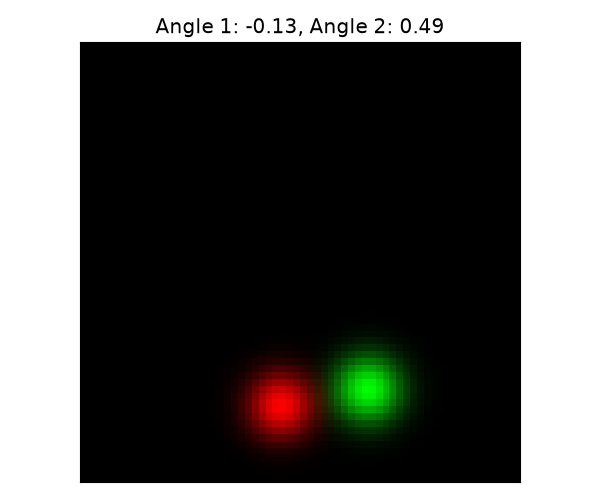}
        \label{fig:double_pendulum_example4}
    \end{subfigure}
    \begin{subfigure}{0.32\linewidth}
        \centering
        \includegraphics[width=\linewidth]{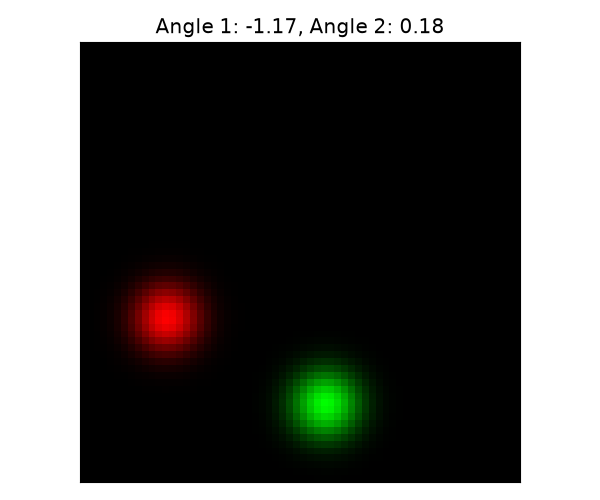}
        \label{fig:double_pendulum_example5}
    \end{subfigure}
    \begin{subfigure}{0.32\linewidth}
        \centering
        \includegraphics[width=\linewidth]{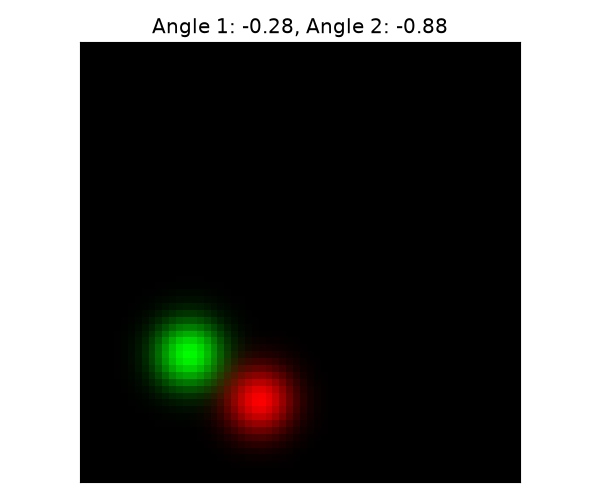}
        \label{fig:double_pendulum_example6}
    \end{subfigure}
    \begin{subfigure}{0.32\linewidth}
        \centering
        \includegraphics[width=\linewidth]{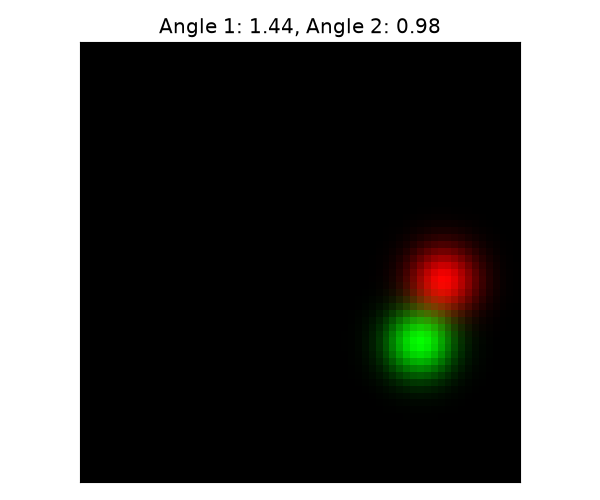}
        \label{fig:double_pendulum_example7}
    \end{subfigure}
    \begin{subfigure}{0.32\linewidth}
        \centering
        \includegraphics[width=\linewidth]{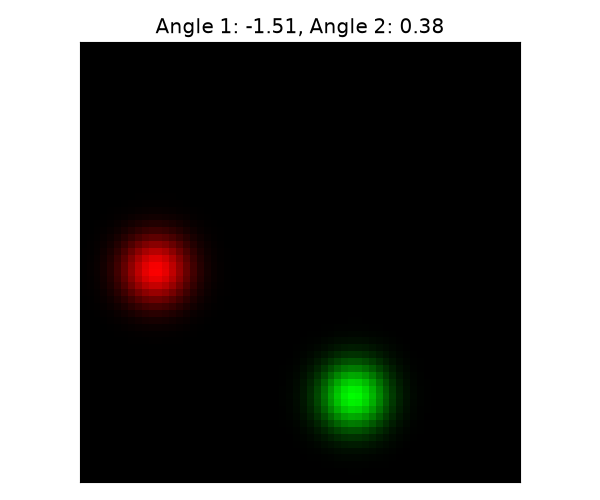}
        \label{fig:double_pendulum_example8}
    \end{subfigure}
    \caption{Example images from the experiment with two pendulums. Pendulums are colored differently to ensure invertibility of the mixing function.}
    \label{fig:double_pendulums_examples}
\end{figure}

\begin{figure}[p]
    \centering
    \begin{subfigure}{0.49\linewidth}
        \centering
        \includegraphics[width=\linewidth]{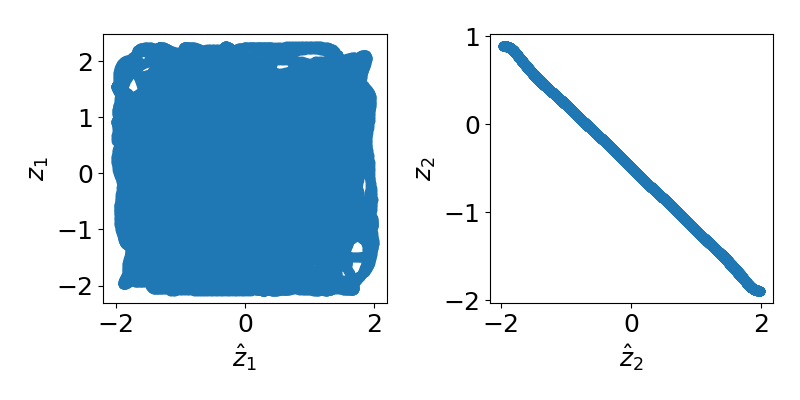}
        \caption{SINDyAE}
        \label{fig:double_pendulum_sindyae_h}
    \end{subfigure}
    \begin{subfigure}{0.49\linewidth}
        \centering
        \includegraphics[width=\linewidth]{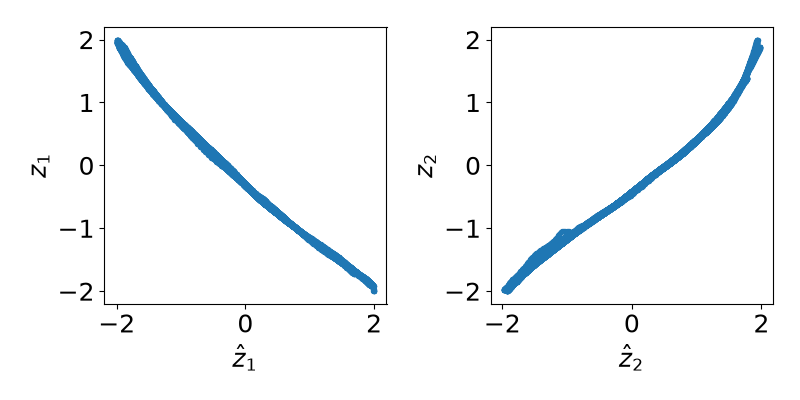}
        \caption{CITRIS}
        \label{fig:double_pendulum_citris_h}
    \end{subfigure}
    \begin{subfigure}{0.49\linewidth}
        \centering
        \includegraphics[width=\linewidth]{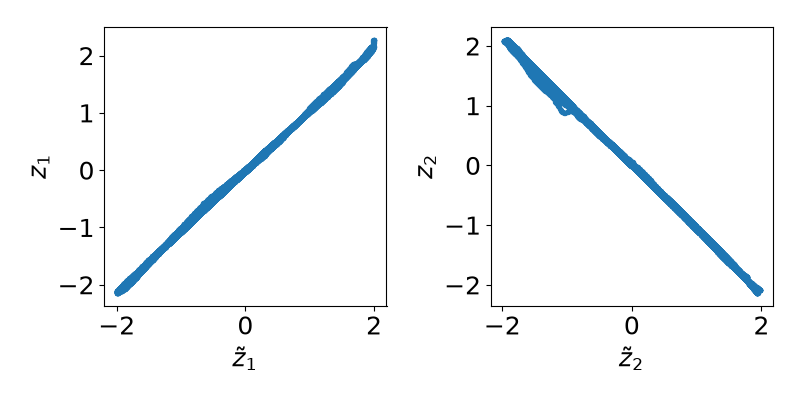}
        \caption{\modelnameshort{} (C+S)}
        \label{fig:double_pendulum_speed_cs_h}
    \end{subfigure}
    \caption{Pendulums experiment. Scatter plots of the found latents $\vzspeed$ against the true ones $\rvz$. If their function is well defined, the plots well represent the maps $z_i=h_i(\zspeed_i)$. In the second plot, we include the same for the CRL latents $\vzcrl$ from CITRIS. \modelnameshort{} is able to achieve linear identifiability from the general diffeomorphism of CITRIS. SINDyAE only identifies one variable, while the other likely represents a mapping between the two pendulums.}
    \label{fig:double_pendulum_diffeos}
\end{figure}